\documentclass{article}

\usepackage{PRIMEarxiv}

\usepackage[utf8]{inputenc} % allow utf-8 input
\usepackage[T1]{fontenc}    % use 8-bit T1 fonts
\usepackage{hyperref}       % hyperlinks
\usepackage{url}            % simple URL typesetting
\usepackage{booktabs}       % professional-quality tables
\usepackage{amsfonts}       % blackboard math symbols
\usepackage{nicefrac}       % compact symbols for 1/2, etc.
\usepackage{microtype}      % microtypography
\usepackage{lipsum}
\usepackage{fancyhdr}       % header
\usepackage{graphicx}       % graphics
\graphicspath{{media/}}     % organize your images and other figures under media/ folder

\usepackage{amsmath, amsthm}
\RequirePackage{algorithm2e} % East's adding 3.31.2022
\RequirePackage{enumitem} % East's adding 3.13.2023
\usepackage{caption} % East's adding 11.20.2023
\usepackage{subcaption} % East's adding 11.20.2023

\SetKwFor{RepTimes}{repeat}{times}{end} %algorithm "repeat n times"

% FROM AOS ---------------------------------------------------
\newtheorem{theorem}{Theorem}[section]
\newtheorem{lemma}[theorem]{Lemma}
%%%%%%%%%%%%%%%%%%%%%%%%%%%%%%%%%%%%%%%%%%%%%%
%%                                          %%
%% For Assumption, Definition, Example,     %%
%% Notation, Property, Remark, Fact         %%
%% use \theoremstyle{remark}                %%
%%                                          %%
%%%%%%%%%%%%%%%%%%%%%%%%%%%%%%%%%%%%%%%%%%%%%%
\newtheorem{example}{Example}
\newtheorem{remark}{Remark}[section]
\newtheorem{assumption}{Assumption}
%%%%%%%%%%%%%%%%%%%%%%%%%%%%%%%%%%%%%%%%%%%%%%
%% Please put your definitions here:        %%
%%%%%%%%%%%%%%%%%%%%%%%%%%%%%%%%%%%%%%%%%%%%%%
\RestyleAlgo{ruled}
\SetKwComment{Comment}{/* }{ */}
\SetKwInOut{Require}{Require}
\SetKwInOut{Input}{Input}
\SetKwInOut{Output}{Output}
\SetKwInOut{Update}{Update}
\SetKwInOut{Return}{Return}
% FROM AOS ---------------------------------------------------

%Header
\pagestyle{fancy}
\thispagestyle{empty}
\rhead{ \textit{ }} 

% Update your Headers here
\fancyhead[LO]{OFF-POLICY RL WITH HIGH DIMENSIONAL REWARD}
% \fancyhead[RE]{Firstauthor and Secondauthor} % Firstauthor et al. if more than 2 - must use \documentclass[twoside]{article}

%% Title
\title{Off-policy reinforcement learning with high dimensional reward
%%%% Cite as
%%%% Update your official citation here when published 
%\thanks{\textit{\underline{Citation}}: 
%\textbf{Authors. Title. Pages.... DOI:000000/11111.}} 
}

\author{
  Dong Neuck Lee \\
  Department of Biostatistics \\
  University of North Carolina at Chapel Hill \\
  \texttt{east90@live.unc.edu} \\
  %% examples of more authors
   \And
  Michael R. Kosorok \\
  Department of Biostatistics \\
  University of North Carolina at Chapel Hill \\
  \texttt{kosorok@bios.unc.edu} \\
  %% \AND
  %% Coauthor \\
  %% Affiliation \\
  %% Address \\
  %% \texttt{email} \\
  %% \And
  %% Coauthor \\
  %% Affiliation \\
  %% Address \\
  %% \texttt{email} \\
  %% \And
  %% Coauthor \\
  %% Affiliation \\
  %% Address \\
  %% \texttt{email} \\
}

\begin{document}
\maketitle

\begin{abstract}
Conventional off-policy reinforcement learning (RL) focuses on maximizing the expected return of scalar rewards. Distributional RL (DRL), in contrast, studies the distribution of returns with the distributional Bellman operator in a Euclidean space, leading to highly flexible choices for utility. This paper establishes robust theoretical foundations for DRL. We prove the contraction property of the Bellman operator even when the reward space is an infinite-dimensional separable Banach space.  Furthermore, we demonstrate that the behavior of high- or infinite-dimensional returns can be effectively approximated using a lower-dimensional Euclidean space. Leveraging these theoretical insights, we propose a novel DRL algorithm that tackles problems which have been previously intractable using conventional reinforcement learning approaches.
\end{abstract}

% keywords can be removed
\keywords {Distributional reinforcement learning \and Infinite-dimensional reward \and Probabilistic methods in Banach space \and Contraction theory \and Max-sliced Wasserstein distance}

\section{Introduction}
\label{s: intro}

Reinforcement learning methods are commonly used in various applications, such as in clinical trials, games, and economics, to find an optimal decision rule. While most of these methods have been studied to maximize the expected value of the outcome, the decision rule’s purpose is not limited to maximizing the expected value alone. We have developed a theory for a method that can be adapted to a wider range of research goals. The method based on this theory is suitable for situations where rewards are multivariate and the objective is to maximize a potentially complex utility function rather than just the expectation.

In reinforcement learning, the primary objective is to discover a policy that maximizes the expected total reward, while considering the stochastic nature of state transitions. The expected return, representing the discounted sum of the total reward function, serves as the target to maximize. Various methodologies have been explored to achieve this objective, such as Q-learning \cite{watkins1992q}, Sarsa \cite{rummery1994line}, and Deep-Q Networks \cite{rummery1994line}. These methods focus on the \emph{state-action value function (Q-function)}, which represents the expected return given the current state and action. They then optimize this Q-function based on the Bellman optimality equation.

Beyond the conventional Bellman operator, certain reinforcement learning techniques adopt the distributional Bellman operator \cite{ross2014introduction,jaquette1973markov,white1988mean}. This operator takes into account the entire distribution of potential returns, providing a more comprehensive understanding of the return landscape compared to relying solely on the average outcome. 
%It has been shown that the distributional Bellman operator defined on the distribution of the random return is a contraction in a maximal form of the Wasserstein metric for a fixed policy \cite{bellemare2017distributional}.
Algorithms based on the distributional Bellman optimality operator, such as c51 \cite{bellemare2017distributional} and quantile DQN \cite{Dabney_Rowland_Bellemare_Munos_2018}, have shown strong performance. However, like traditional reinforcement learning methods, these algorithms are also limited to finding a policy that maximizes the expectation of random returns. %Furthermore, their proof of the contraction property relies on a relatively strong assumption. 

These existing reinforcement learning methods are limited in two aspects; they aim to maximize the \emph{expectation} of \emph{univariate} return. This approach may not adequately capture the complexities of real-world decision-making. Consider a trading analyst seeking a strategy that maximizes the first quartile of future assets, prioritizing risk management over average returns. A clinical study might aim to maximize drug effectiveness while constraining side effects to a specific threshold. In such cases, crafting a univariate reward that accurately reflects these multi-dimensional objectives can be challenging. Furthermore, the reward space itself can be infinite-dimensional. Consider also brain imaging data, with its continuous 3D structure and dynamic activity, inherently representing an infinite dimensional reward.
As illustrated in Figure~\ref{fig:brain}\footnote{Brain activity data are sourced from NeuroVault (https://identifiers.org/neurovault.collection:9346).}, which presents brain activity data from Chopar et al. (2021) \cite{chopra2021differentiating}, the complex and dynamic nature of brain responses highlights the need for models capable of handling high-dimensional and potentially infinite-dimensional reward spaces \cite{gorgolewski2015neurovault}.

%Another Figure
\begin{figure}
     \centering
     \begin{subfigure}[b]{0.32\textwidth}
         \centering
         \includegraphics[width=\textwidth]{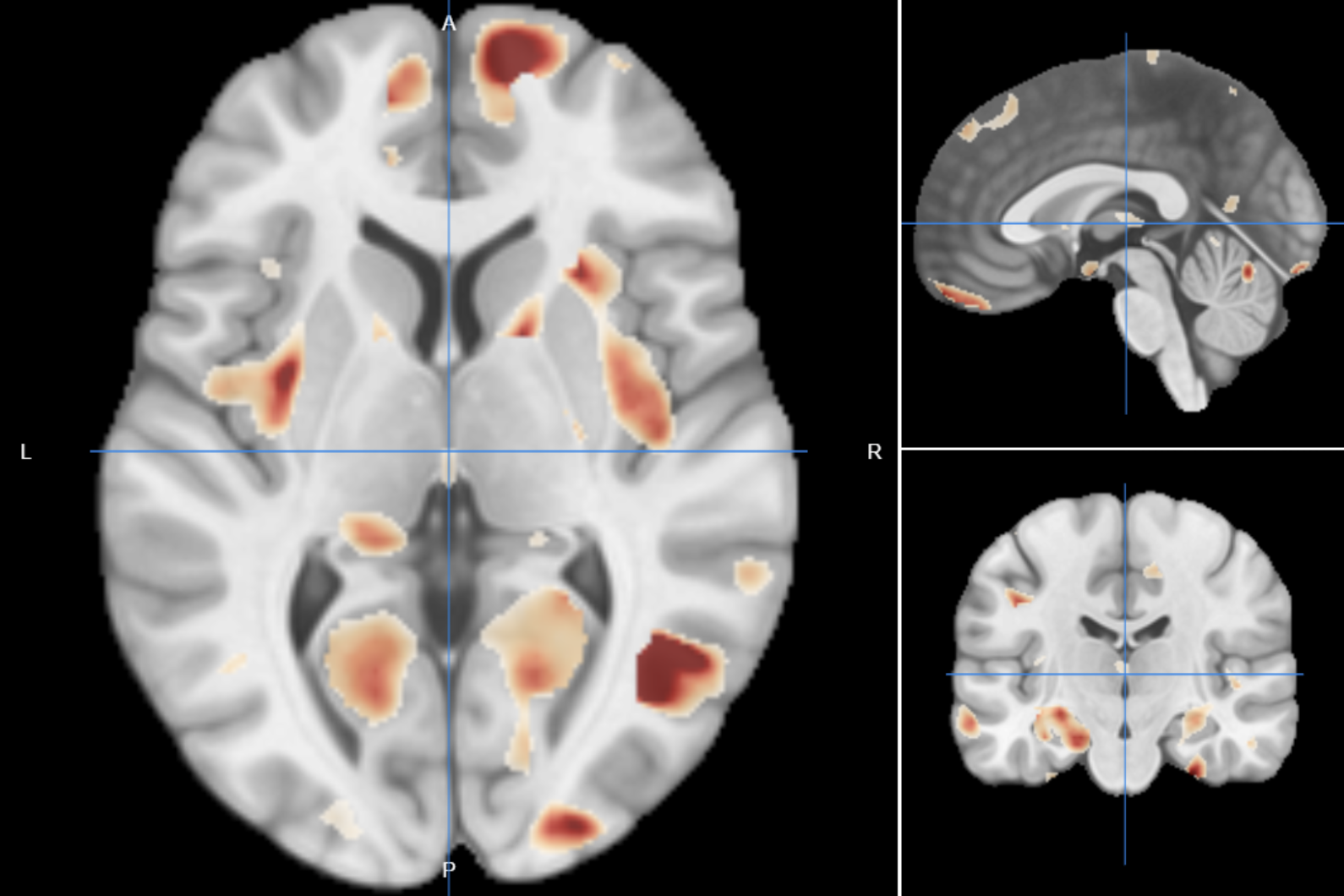}
         \caption{Baseline}
         \label{fig:brain_baseline}
     \end{subfigure}
     \hfill
     \begin{subfigure}[b]{0.32\textwidth}
         \centering
         \includegraphics[width=\textwidth]{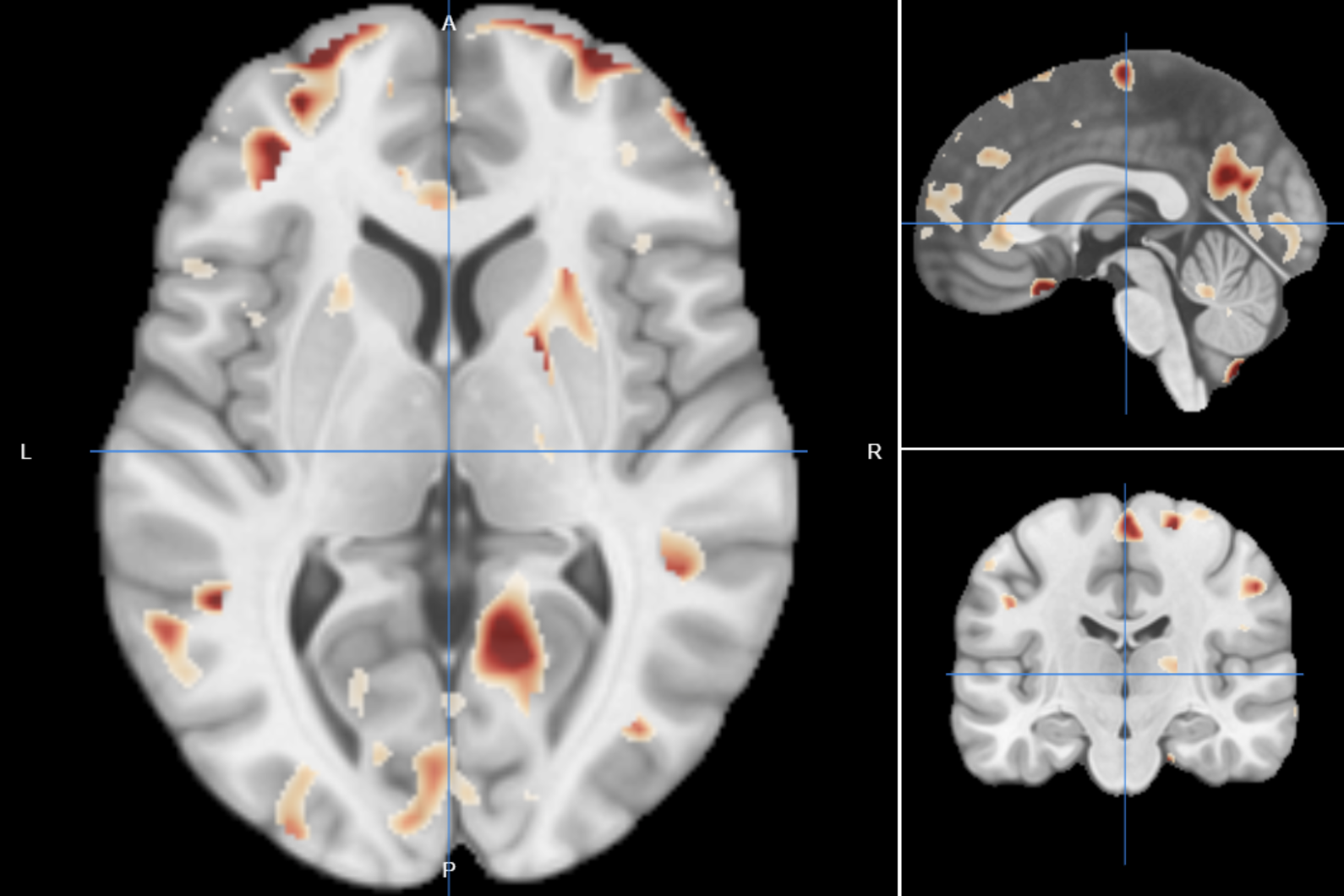}
         \caption{3 Month}
         \label{fig:brain_3m}
     \end{subfigure}
     \hfill
     \begin{subfigure}[b]{0.32\textwidth}
         \centering
         \includegraphics[width=\textwidth]{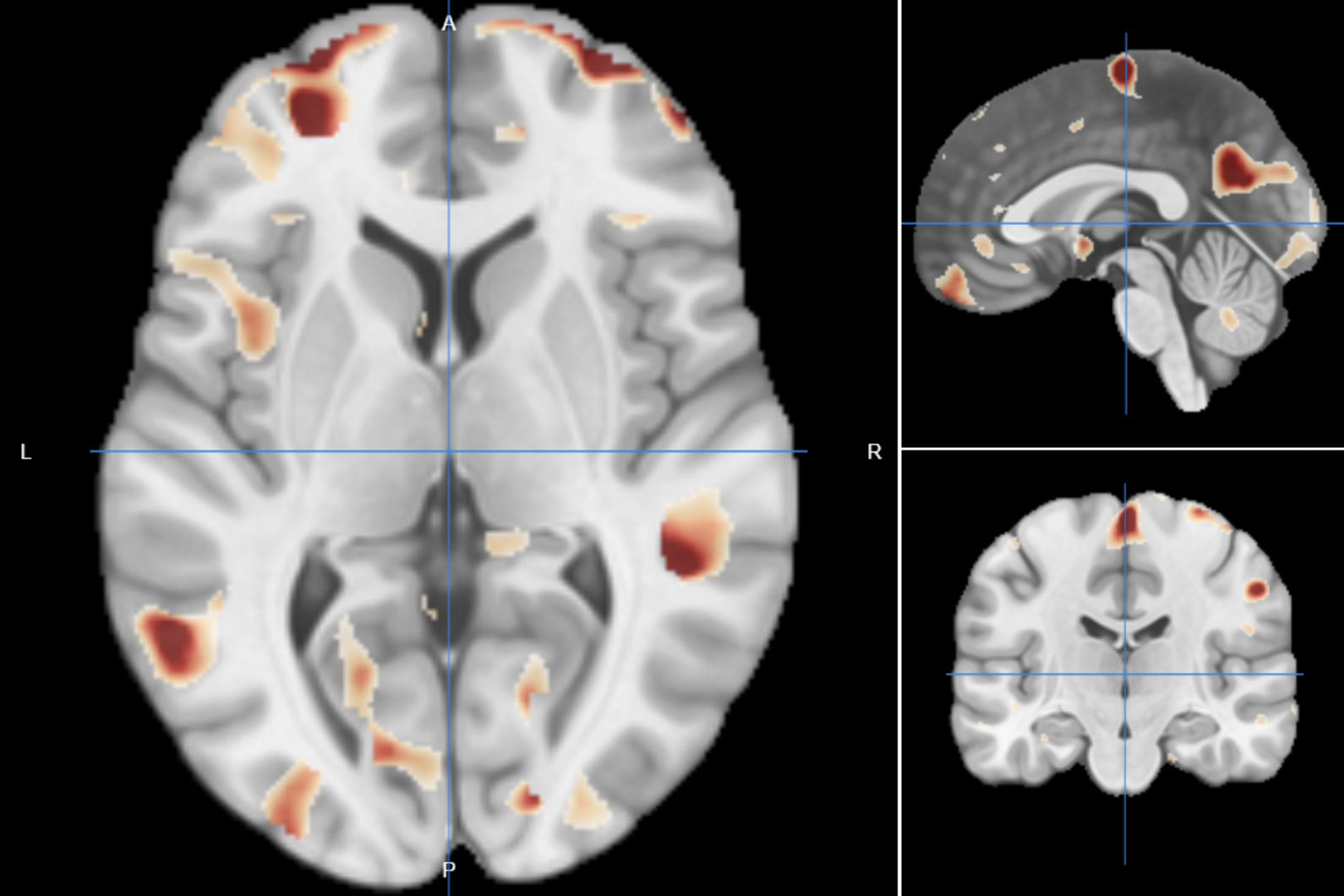}
         \caption{12 Month}
         \label{fig:brain_12m}
     \end{subfigure}
     \caption{Group-level differences in continuous 3D brain activity between antipsychotic-treated and placebo groups in first-episode psychosis patients.}
     \label{fig:brain}
\end{figure}

Traditional approaches struggle in such scenarios. Instead, approximating the distribution of the multidimensional returns and optimizing a function of this distribution offers a promising alternative. This approach can tackle problems with multifaceted rewards and allows for flexible utility functions, capturing a wider range of real-world objectives.

%%% 

Distributional reinforcement learning emerges as a compelling strategy by considering the entire distribution of rewards. Several algorithms such as HRA \cite{van2017hybrid} and RD2 \cite{lin2020rd2} have been developed for multi-dimensional reward functions. More recently, MD3QN \cite{zhang2021distributional} models the joint distribution of returns and demonstrates that the distributional Bellman operator is a contraction in a maximal form of the Wasserstein distance in finite-dimensional reward spaces.

Our work builds upon this foundation by generalizing the convergence properties of the distributional Bellman operator. Our proposed theory extends its applicability to rewards defined in infinite-dimensional separable Banach spaces, offering remarkable generalizability. It can be effectively applied to finite-dimensional Euclidean spaces as well, a crucial advantage for practical applications. While our proposed computational method primarily focuses on approximating the joint return distribution in Euclidean spaces, we leverage well-established theory demonstrating finite-dimensional Euclidean spaces can effectively approximate higher or infinite-dimensional Banach spaces. Further elaboration will be provided in Section~\ref{s: method1} with proofs in Sections~\ref{s: theory-finite} and ~\ref{s: theory-infinite}.

To quantify the distance between multi-dimensional return distributions, we employ the Wasserstein metric as a theoretical measure to investigate the contraction property of the distributional Bellman operator.
The Wasserstein distance is a robust measure for comparing distributions, applicable to multi-dimensional scenarios, and finds applications in various machine learning areas beyond reinforcement learning, such as Generative Adversarial Networks \cite{arjovsky2017wasserstein} and drug discovery \cite{dai2021drug}. Despite its utility, direct application in multidimensional settings is constrained by computational complexity. To address this, alternatives such as the sliced Wasserstein metric \cite{rabin2012wasserstein, bonnotte2013unidimensional} and max-sliced Wasserstein distance \cite{deshpande2019max} have been proposed, offering computational efficiency and favorable properties \cite{paty2019subspace, bayraktar2021strong}. In this work, we explore the properties of the max-sliced Wasserstein distance (detailed in Section~\ref{s: theory-wass}) and leverage this metric for simulations validating our proposed algorithms.

The article is structured as follows. Section~\ref{s: setting} introduces key assumptions and notations. In Section~\ref{s: method}, we introduce key theorems and propose a reinforcement learning algorithm designed to approximate the distribution of random returns which are finite or infinite-dimensional. This algorithm enables the maximization of any real-valued utility function of these returns. Section~\ref{s: theory} provides rigorous theoretical foundations. We prove three crucial theorems: 1. The distributional Bellman operator is a contraction in a maximal form of the Wasserstein distance when multidimensional reward and return lie in a Banach space. 2. We demonstrate that random variables defined in a Banach space can be approximated with an arbitrarily small error. These theorems provide a strong foundation for our proposed method, enabling it to address a wider range of problems. 3. We investigate the properties of the max-sliced Wasserstein distance which are useful for finite-dimensional Euclidean rewards. In section~\ref{s: simulation}, we present simulation results that validate the effectiveness of our proposed algorithms.

%---------------------------------------
%           Setting
%---------------------------------------
\section{Setting}
\label{s: setting}

Numerous reinforcement learning techniques have been developed to discover an optimal policy that maximizes the expectation of the discounted sum of univariate rewards. However, the objectives of a policy in real-world scenarios are not always confined to the expectation of a one-dimensional return. Our aim is to devise a policy with broader objectives than those of traditional reinforcement learning algorithms. In order to accomplish this, we define the reward and return to exist within a separable Banach space, which includes both separable Hilbert spaces and Euclidean spaces.

 As assumed in many reinforcement learning papers, we consider the interaction between an agent and an environment as a time-homogeneous Markov decision process ($\mathbb{S, A}, R, P, \gamma$). $\mathbb{S}$ and $\mathbb{A}$ are respectively the state and action spaces, a policy $\pi$ maps each state $s \in \mathbb{S}$ to a probability distribution over the action space $\mathbb{A}$. $P$ is the transition kernel $P(\cdot | s, a)$ and $ \gamma \in [0,1]$ is the discount factor. In conventional reinforcement learning algorithms, $R$ and $Z$ typically represent one-dimensional random rewards and the discounted sum of rewards, respectively. These algorithms aim to find the policy that maximizes the expected value of $Z$. 
 
 In this paper, we deviate from this notation. We denote multidimensional random rewards as $R$, explicitly defined within a Banach space $(\mathbb{B}, \|\cdot\|_\mathbb{B})$. Consequently, the random return $Z$, the sum of the discounted rewards, is also a multidimensional random vector defined in the same metric space. In other words, $R$ and $Z$ are $\mathbb{B}$-valued random variables. %\textcolor{red}{Note that a Banach space can have an infinite dimension. We prove that the key operator defined on a Banach space is a contraction in Section~\ref{s: contraction_Banach}. -- better to remove this part???} %However, for the sake of simplicity and practical use, we restrict the dimension of the reward to a finite d-dimensional space. 
 The objective of this study is to estimate the distribution of multidimensional random returns and determine an optimal policy that maximizes a user-defined utility function. This utility function, denoted as $\phi(Z)$, takes the distribution of the random returns as input and outputs a single real-valued scalar, quantifying the overall desirability of that specific return distribution. For instance, a trading analyst aiming to maximize the first quartile of future assets while prioritizing risk management might employ $\phi(Z)=Q1(Z)$. As an example with 2-dimensioanl reward space, a cancer study focused on maximizing tumor shrinkage while minimizing adverse side effects could utilize $\phi(Z_1, Z_2) = Median(Z_1) + \alpha \cdot P(Z_2 < \delta)$, where $Z_1$ represents tumor reduction, $Z_2$ measures nerve damage, and $\alpha$ and $\delta$ are pre-defined constants.

 Moreover, in scenarios involving high-dimensional data, such as brain imaging studies aiming to enhance brain activity through Alzheimer's treatment, the utility function might operate on an infinite-dimensional space. For example, $\phi(Z) = \|Z\|_\mathbb{B}$ could measure treatment efficacy by quantifying the overall increase in activation levels in key brain regions associated with memory and cognition. Here, $Z$ represents a sequence vector of activation levels across multiple brain regions and time points, constituting an infinite-dimensional space equipped with the norm $\| \cdot \|_\mathbb{B}$.
 
 Additionally, let $R(s_t, a_t)$ represent the immediate random reward when action $a_t$ is taken in a given state $s_t$ at time $t$. The random return $Z^{\pi}(s,a)$ is the sum of discounted rewards by a discount factor $\gamma \in [0,1]$ when an initial action $a$ is taken in a given initial state $s$ and subsequent actions follow policy $\pi$. This multidimensional random vector depends on the transition kernel $P(\cdot | s, a)$ and the policy $\pi$: 
\begin{align*}
Z^{\pi} (s,a) &= \sum_{t=0}^{\infty} \gamma ^{t} R(s_t, a_t)\text{, where }
s_t \sim P(\cdot | s_{t-1}, a_{t-1}), a_t \sim \pi(\cdot|s_t), s_0=s, a_0 = a.
\end{align*}

It's noteworthy that when dealing with one-dimensional reward spaces, the expectation of $Z^{\pi}(s,a)$ is equal to the action-value function commonly employed in various RL algorithms that leverage the Bellman operator. In distributional RL, in contrast, the distribution over returns (i.e., the probability law of $Z$) plays the central role and replaces the traditional value function \cite{bellemare2017distributional, Dabney_Rowland_Bellemare_Munos_2018}. The distribution of random return is referred to as the \textit{value distribution}. Consequently, distributional RL relies on the following key operators:

\begin{enumerate}[label=(\roman*)]
\item Transition operator $P^{\pi} : \mathcal{P}(\mathbb{B})  \rightarrow \mathcal{P}(\mathbb{B})$:
\begin{align*}
P^{\pi}Z(s,a) & :\,{\buildrel D \over =}\, Z(S', A'),\;
           S' \sim P(\cdot | s, a), A' \sim \pi(\cdot|S'),
\end{align*}
\item Distributional Bellman operator $\mathcal{T}^{\pi} : \mathcal{P}(\mathbb{B})  \rightarrow \mathcal{P}(\mathbb{B}) $:
\begin{align*}
\mathcal{T}^{\pi} Z(s,a) :\,{\buildrel D \over =}\, R(s,a)+ \gamma P^{\pi}Z(s,a),
\end{align*}
\item Distributional Bellman optimality operator $\mathcal{T} : \mathcal{P}(\mathbb{B})  \rightarrow \mathcal{P}(\mathbb{B}) $:
\begin{align*}
\mathcal{T}Z(s,a) := \mathcal{T}^{\pi} Z(s,a) \text{ for some } \pi \in \mathcal{G}_{\phi, Z},
\end{align*}
where $\mathcal{G}_{\phi, Z}$ is the greedy policy for a utility function $\phi$ of $Z$ that returns a constant scalar:
\begin{align*}
\mathcal{G}_{\phi, Z} := \Big\{ \pi : \phi \big(Z(s, \pi(\cdot | s))  \big) = \max_{a' \in \mathcal{A}} \phi \big( Z(s, a')\big) \Big\}.
\end{align*}
\end{enumerate}

In Section~\ref{s: contraction_Banach}, we demonstrate that the distributional Bellman operator is a contraction even within the broader context of infinite-dimensional separable Banach spaces. This contraction property in terms of the distribution of $Z$ is the key concept of the algorithm proposed in Section~\ref{s: method}.

%---------------------------------------
%           Method
%---------------------------------------

\section{Method}
\label{s: method}

This section introduces our proposed reinforcement learning method for estimating the distribution of random returns in high-dimensional reward spaces (represented mathematically as a Banach space). Our approach leverages two key theorems that enable us to approximate any random variable defined in a Banach space with a random variable in a computationally efficient Euclidean space. This approximation technique allows us to address problems in these high-dimensional spaces with practical computational efficiency. We will delve into the details of the theorems and the algorithm in the following subsections.

%The algorithm consists of three steps. First, we will categorize the subspace of $\mathbb{B}$ and model the distribution of $Z^{\pi}(x)$. In the next step, the Bellman update for each atom will be computed with updated probability(Figure \ref{figure_algo}, (b)). We will distribute the updated probabilities of the updated points to the nearest neighbors in the last step(Figure \ref{figure_algo}, (c)).

%---------------------------------------
%  Method : Distributional  Bellman operator
%---------------------------------------
\subsection{Approximation of value distribution}% \boldmath{$Z^{\pi}(s,a)$} }
\label{s: method1}
As defined in Section~\ref{s: setting}, let $(\mathbb{B}, \|\cdot\|_\mathbb{B})$ be a Banach space, and let $\mathbb{A}$ and $\mathbb{S}$ be action and state spaces, respectively. At each time stage, the current state $S$ and the action $A$ given $S$ are from a common probability space $(\Omega, \mathcal{F}_\Omega, \mathbb{P}_\Omega)$, where $\Omega=\mathbb{A} \times \mathbb{S}$. Additionally, let $\mathcal{H} \subset \mathbb{A} \times \mathbb{S}$ satisfy $P\big( (A,S) \in \mathcal{H}\big) = 1$. Given a realization of state-action pair $p=(s,a)$ and policy $\pi$, the random return $Z^{\pi}(s,a)$ is in a probability space $(\Omega', \mathcal{F}_{\Omega'}, \mathbb{P}_{p,\pi} )$. $Z^{\pi}(s,a)$ is a $\mathbb{B}$-valued random variable, i.e., $Z^{\pi}(s,a)$ is a measurable map from $(\Omega', \mathcal{F}_{\Omega'}, \mathcal{P}_{p,\pi} )$ to $(\mathbb{B}, \mathcal{B}(\mathbb{B}), \mu_{p,\pi})$, where $\mathcal{B}(\mathbb{B})$ is Borel $\sigma$-algebra of $\mathbb{B}$. The probability measure of $Z^{\pi}(s,a)$ is denoted by $\mu_{p,\pi}$ and depends on the state-action pair $p=(s,a)$ and policy $\pi$ while the sample space remains the same for all states, actions, and policies. To simplify the notation, we use $\mu_{p}$ and $Z(s,a)$ instead of $\mu_{p,\pi}$ and $Z^\pi (s,a)$ because the following statements hold for any policy $\pi$. 

To approximate the support of the distribution of $Z(s, a)$, denoted by $\mathbb{B}$, we make the following assumption:

\begin{assumption}%[assumption name]
\label{assumption_m1}
$\forall \epsilon>0, \exists \ 0 < r < \infty \ s.t.$
$$\sup_{(s,a) \in \mathcal{H}} E \Big[ \| Z(s,a)\|_\mathbb{B} 1\{ \|Z(s,a)\|_\mathbb{B} >r \}  \Big] \leq \epsilon.$$
\end{assumption}

For any closed set $A_\epsilon\subset\mathbb{B}$ that contains a hyperball $\{z \in \mathbb{B} : \|z\|_\mathbb{B} \leq r\}$ satisfying Assumption~\ref{assumption_m1}, we define $\Tilde{Z_\epsilon}(s,a)$ as the \emph{projection} of $Z(s,a)$ onto $A_\epsilon$. In this context, projection refers to mapping the original value of $Z$ onto the nearest point in $A_\epsilon$ if the value falls outside of $A_\epsilon$. 

Additionally, let $L(\mathbb{B})$ denote the collection of Lipschitz continuous functions mapping from $\mathbb{B}$ to $\mathbb{R}$ with Lipschitz constant $\leq 1$. Under Assumption~\ref{assumption_m1}, the behavior of random return $Z$ is well approximated with a rectangular projection. The proof of the following key theorem will be provided in Section~\ref{s: theory-finite}.

\begin{theorem}
\label{theorem_projection}
    For any fixed $\epsilon>0$ specified in Assumption~\ref{assumption_m1}, and any closed $A_{\epsilon}\subset\mathbb{B}$ as defined above, the projection $\Tilde{Z}_\epsilon$ satisfies 
    $$\sup_{(s,a) \in \mathcal{H}} \sup_{f \in L(\mathbb{B})} E \big| f(Z(s,a)- f(\Tilde{Z}_\epsilon(s,a) \big| \leq 2 \epsilon.$$
    %where $d(x,y) = \|x-y\|_\mathbb{B}$ is the induced distance from the Banach norm $\| \cdot \|_\mathbb{B}$.
\end{theorem}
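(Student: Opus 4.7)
The plan is to decompose the expectation according to whether $Z(s,a)$ lies inside or outside $A_\epsilon$, and to exploit the Lipschitz hypothesis on $f$ together with Assumption~\ref{assumption_m1}.

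First I would observe that since $A_\epsilon$ contains the hyperball $\{z : \|z\|_\mathbb{B} \leq r\}$, on the event $\{\|Z(s,a)\|_\mathbb{B} \leq r\}$ the projection acts as the identity, so $\tilde{Z}_\epsilon(s,a) = Z(s,a)$ and the integrand vanishes. Consequently
\begin{align*}
E\big|f(Z(s,a)) - f(\tilde{Z}_\epsilon(s,a))\big|
 = E\Big[\big|f(Z(s,a)) - f(\tilde{Z}_\epsilon(s,a))\big|\cdot \mathbf{1}\{Z(s,a) \notin A_\epsilon\}\Big],
\end{align*}
and the event of integration is contained in $\{\|Z(s,a)\|_\mathbb{B} > r\}$.

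Next, since $f \in L(\mathbb{B})$ is $1$-Lipschitz, I would pass through the origin (a point that lies in $A_\epsilon$ because $0$ belongs to the hyperball of radius $r$) to bound pointwise
\begin{align*}
\big|f(Z) - f(\tilde{Z}_\epsilon)\big| \;\leq\; \|Z - \tilde{Z}_\epsilon\|_\mathbb{B} \;\leq\; \|Z\|_\mathbb{B} + \|\tilde{Z}_\epsilon\|_\mathbb{B}.
\end{align*}
The key geometric input is that, because $0 \in A_\epsilon$, the nearest-point characterization of the projection gives $\|Z - \tilde{Z}_\epsilon\|_\mathbb{B} \leq \|Z - 0\|_\mathbb{B} = \|Z\|_\mathbb{B}$, and on the tail event the norm $\|\tilde{Z}_\epsilon\|_\mathbb{B}$ is also controlled by $\|Z\|_\mathbb{B}$ (the projection selects a point of $A_\epsilon$ no further from the origin than $Z$ itself, which is available since $0$ is a candidate of smaller norm). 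Plugging these in yields
\begin{align*}
\big|f(Z) - f(\tilde{Z}_\epsilon)\big|\cdot \mathbf{1}\{Z \notin A_\epsilon\} \;\leq\; 2\,\|Z\|_\mathbb{B}\cdot \mathbf{1}\{\|Z\|_\mathbb{B} > r\}.
\end{align*}
Taking expectations and invoking Assumption~\ref{assumption_m1} gives $E|f(Z) - f(\tilde{Z}_\epsilon)| \leq 2\epsilon$, and the two suprema come for free because the right-hand side does not depend on $f$ and the tail bound in Assumption~\ref{assumption_m1} is uniform in $(s,a) \in \mathcal{H}$.

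The main technical obstacle is ensuring that the nearest-point projection $\tilde{Z}_\epsilon$ is well-defined and measurable in a general separable Banach space, since nearest points onto closed sets need not exist or be unique outside of reflexive, strictly convex settings. I would handle this by taking $\tilde{Z}_\epsilon$ to be an $\eta$-approximate nearest point obtained via a measurable selection theorem (for instance Kuratowski--Ryll-Nardzewski on the separable space $\mathbb{B}$) and then letting $\eta \downarrow 0$; all the norm inequalities above are preserved up to a vanishing error, leaving the final $2\epsilon$ bound intact.
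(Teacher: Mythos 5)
Your proposal is correct and follows essentially the same route as the paper: split on whether $Z(s,a)$ lands in $A_\epsilon$ (where the projection is the identity and the integrand vanishes), use the $1$-Lipschitz property, and control the remaining tail term uniformly via Assumption~\ref{assumption_m1}. The only substantive difference is in the bounding step: the paper normalizes $f(0)=0$ and bounds $|f(Z)|\leq\|Z\|_\mathbb{B}$ and $|f(\Tilde{Z}_\epsilon)|\leq\|\Tilde{Z}_\epsilon\|_\mathbb{B}$ separately in a three-term triangle-inequality decomposition, whereas you bound $|f(Z)-f(\Tilde{Z}_\epsilon)|\leq\|Z-\Tilde{Z}_\epsilon\|_\mathbb{B}\leq\|Z\|_\mathbb{B}$ directly from the fact that $0\in A_\epsilon$ is a candidate nearest point; this is if anything cleaner and actually yields the bound $\epsilon$ rather than $2\epsilon$. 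One caution: your side claim that the projection "selects a point of $A_\epsilon$ no further from the origin than $Z$ itself," i.e.\ $\|\Tilde{Z}_\epsilon\|_\mathbb{B}\leq\|Z\|_\mathbb{B}$, is not valid for nearest-point projection onto an arbitrary closed set containing the ball (the nearest point can lie farther from the origin than $Z$; only $\|\Tilde{Z}_\epsilon\|_\mathbb{B}\leq 2\|Z\|_\mathbb{B}$ follows in general), but this claim is superfluous in your argument since the inequality $\|Z-\Tilde{Z}_\epsilon\|_\mathbb{B}\leq\|Z\|_\mathbb{B}$ already closes the proof. Your attention to existence and measurability of the (approximate) nearest-point selection is a point of rigor the paper's proof passes over silently.
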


%This theorem holds in case where $\mathbb{B}$ is a $k$-dimensional Euclidean space and $L_k(\mathbb{B})$ is the collection of Lipschitz continuous functions defined on the Euclidean space.

Our next focus is on an approximation of infinite-dimensional Banach space with projection onto a finite Euclidean space. To facilitate this, we introduce an additional assumption as follows.

\begin{assumption}%[assumption name]
\label{assumption_m2}
$\forall \delta>0, \exists$ compact $B_\delta \subset \mathbb{B} \ s.t.$
$$\sup_{(s,a) \in \mathcal{H}} P\big( Z(s,a) \notin B_\delta \big) < \delta. $$
%$$\sup_{(s,a) \in \mathcal{H}} E \Big[ \| Z(s,a)\|_\mathbb{B} 1\{ Z(s,a) \notin A_\delta \}  \Big] < \delta.$$
\end{assumption}
This assumption implies that there exists a separable subspace $\mathbb{B}_0 \subset \mathbb{B}$ such that $\inf_{(a,s)\in \mathcal{H}} P(Z(s,a) \in \mathbb{B}_0) =1$. This can be shown by letting $\mathbb{B}_0$ be the closure of $\bigcup_{n \geq 1}B_{1/n}$. This defines a type of uniform separability of the rewards over $\mathcal{H}$.

With this assumption in place, we can establish an approximation theory within the realm of infinite-dimensional Banach spaces. The proof of this is provided in Section~\ref{s: theory-infinite}.

\begin{theorem}
\label{theorem_banach_approximation_ver2}
    Let $\mathbb{B}_0 \subset \mathbb{B}$ be a Banach space with $\mathbb{B}_0$ separable and $\overline{lin}\mathbb{B}_0 \subset \mathbb{B}$. Then for every $\delta >0$, $\exists$ a compact set $A_\delta \subset \mathbb{B}_0$, an integer $k < \infty$, a continuous map $F_\delta (z)$ with domain $\mathbb{B}$ and range $\subset \mathbb{R}^k$, and a Lipschitz continuous function $J_\delta: \mathbb{R}^k \longmapsto \mathbb{B}$ $s.t.$ \\
    $Z_\delta(s,a) = J_\delta F_\delta (Z(s,a))$ and $\Tilde{Z}_\delta(s,a)=\text{the projection of }Z_\delta(s,a) \text{ onto }A_\delta$, such that
    $$\sup_{(s,a) \in \mathcal{H}} \sup_{f \in L(\mathbb{B})} E \Big| f(Z(s,a)) - f(\Tilde{Z}_\delta(s,a))  \Big|< \delta.$$
\end{theorem}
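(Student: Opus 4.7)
The plan is to compose two approximations: a norm-tail truncation via Theorem~\ref{theorem_projection} under Assumption~\ref{assumption_m1}, followed by a finite-dimensional compression on a compact subset of $\mathbb{B}_0$ obtained from Assumption~\ref{assumption_m2}. First I would apply Theorem~\ref{theorem_projection} with $\epsilon = \delta/4$ to obtain a radius $r$ controlling the tail of $\|Z(s,a)\|_\mathbb{B}$. Next, taking $\mathbb{B}_0$ as in the remark following Assumption~\ref{assumption_m2} (so $Z(s,a) \in \mathbb{B}_0$ almost surely), I would apply Assumption~\ref{assumption_m2} with a small parameter $\eta_2 > 0$ and intersect the resulting compact set with the norm ball of radius $r$ and with $\mathbb{B}_0$ (which is closed in $\mathbb{B}$), yielding a compact $B \subset \mathbb{B}_0$ with $\sup_{(s,a)\in\mathcal{H}} P(Z(s,a) \notin B) \leq \eta_2 + \sup_{(s,a)\in\mathcal{H}} P(\|Z(s,a)\|_\mathbb{B} > r)$.

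For parameters $\eta > 0$ and $\tau \in (0, 1/2)$ chosen later, I would fix a finite $(\eta/2)$-net $\{y_1, \ldots, y_k\} \subset B$ (existing by compactness) and define bump coordinates $\alpha_i(z) = \max(0,\, 1 - \|z - y_i\|_\mathbb{B}/\eta)$, giving a Lipschitz (hence continuous) encoder $F_\delta(z) = (\alpha_1(z), \ldots, \alpha_k(z)) \in \mathbb{R}^k$. For the decoder I would take
\[
J_\delta(c) = \frac{\sum_{i=1}^k \bar c_i\, y_i}{\max\bigl(\tau,\;\sum_{j=1}^k \bar c_j\bigr)},\qquad \bar c_i := \min(\max(c_i, 0), 1),
\]
which is globally Lipschitz on $\mathbb{R}^k$ (a bounded Lipschitz numerator divided by a Lipschitz denominator bounded below by $\tau$) with image in $\operatorname{span}(y_1, \ldots, y_k) \subset \mathbb{B}_0$. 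The target compact set $A_\delta$ would then be the closure of this image, compact because it is closed and bounded in a finite-dimensional subspace.

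To control the error I would observe that on $\{Z \in B\}$ the $(\eta/2)$-net property furnishes some $j$ with $\alpha_j(Z) \geq 1/2 > \tau$, so $J_\delta F_\delta(Z)$ reduces to a convex combination of the $y_i$ lying within distance $\eta$ of $Z$; in particular $J_\delta F_\delta(Z) \in A_\delta$ already, the projection step acts as the identity, and $\|\tilde Z_\delta - Z\|_\mathbb{B} < \eta$. Off $B$ I would bound $\|Z - \tilde Z_\delta\|_\mathbb{B} \leq \|Z\|_\mathbb{B} + C$ with $C := \sup_{x \in A_\delta}\|x\|_\mathbb{B}$, splitting into $\{\|Z\|_\mathbb{B} > r\}$ (handled directly by Assumption~\ref{assumption_m1}) and $\{Z \notin B,\; \|Z\|_\mathbb{B} \leq r\}$ (of probability $\leq \eta_2$, contributing at most $\eta_2(r + C)$). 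Since any $f \in L(\mathbb{B})$ satisfies $|f(Z) - f(\tilde Z_\delta)| \leq \|Z - \tilde Z_\delta\|_\mathbb{B}$, summing the pieces and choosing $\eta$, then $\eta_2$, then the initial truncation $\epsilon$ sufficiently small drives the uniform error below $\delta$. The main obstacle I anticipate is the joint bookkeeping across these parameters: the Lipschitz constant of $J_\delta$ and the norm-bound $C$ depend on $k$, $\tau$, and $\max_i\|y_i\|_\mathbb{B}$ (all tied to $\eta$), so the order in which the parameters are shrunk matters, and one must verify that the projection onto $A_\delta$ contributes no additional error on the good event $\{Z \in B\}$.
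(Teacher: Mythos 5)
Your proposal is correct, but it takes a genuinely different route from the paper's. The paper does not construct $F_\delta$ and $J_\delta$ at all: it chooses $r$ from Assumption~\ref{assumption_m1} (tolerance $\delta/6$) and a compact $B_\delta$ from Assumption~\ref{assumption_m2} (tolerance $\delta/(6r)$), sets $A_\delta=B_\delta\cap\{z:\|z\|_\mathbb{B}\le r\}$, and invokes Theorem~\ref{theorem_banach_approximation} as a black box to obtain $T_\delta=J_\delta F_\delta$ with $\sup_{z\in A_\delta}\|T_\delta(z)-z\|_\mathbb{B}<\delta/3$; it then runs the same style of error decomposition you use (Lipschitz reduction to $\|Z-\tilde{Z}_\delta\|_\mathbb{B}$, a good event $\{Z\in A_\delta\}$ with small deterministic error, and a bad event split into the norm tail $\{\|Z\|_\mathbb{B}>r\}$ and the escape event $\{Z\notin B_\delta\}$, using that the projection onto $A_\delta\ni 0$ has norm at most $r$). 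You instead build the encoder/decoder explicitly --- bump coordinates $\alpha_i(z)=\max(0,1-\|z-y_i\|_\mathbb{B}/\eta)$ on an $(\eta/2)$-net of a compact set together with a clipped, $\tau$-regularized barycentric decoder --- which is in effect a self-contained proof of the special case of Theorem~\ref{theorem_banach_approximation} that is needed, and you take $A_\delta$ to be the closure of the decoder's image (compact because it is bounded in the span of the $y_i$), so that the projection acts as the identity on the good event. Your construction buys explicitness (the paper itself notes the cited theorem does not supply $F_\delta$) and it cleanly secures $A_\delta\subset\mathbb{B}_0$ by placing the net points in $\mathbb{B}_0$, a point the paper glosses over; the paper's route is shorter. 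On your bookkeeping worry: since every net point lies in the ball of radius $r$ and the decoder outputs convex combinations (or $\tau$-damped versions) of the $y_i$, one has $C=\sup_{x\in A_\delta}\|x\|_\mathbb{B}\le r$ independently of $\eta$, $k$, $\tau$, and the Lipschitz constant of $J_\delta$ never enters the error bound; so the clean order is to fix $\epsilon$ (hence $r$) first, then take $\eta_2$ of order $\delta/r$ and $\eta$ of order $\delta$, slightly different from the order you state at the end, but this is a wording issue rather than a gap.
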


Suppose we have a random variable defined in a Banach space. The fundamental concept of this theorem is that by defining a sufficiently extensive compact set and projecting the random variable onto it, we can effectively approximate the projected variable with a random variable in a finite dimensional Euclidean space. By Theorem~\ref{theorem_projection}, the random variable in Euclidean space can, in turn, be \emph{well approximated} by a rectangular approximation in the Euclidean space. 

To sum up, any random variable in a separable Banach space can be approximated by a random variable contained in a bounded rectangle within a Euclidean space. This capability allows us to tackle problems in infinite or high-dimensional reward spaces with practical computational efficiency. Leveraging this capability, We propose an algorithm that is computationally feasible within Euclidean spaces, utilizing the concept of a hypercube. The algorithm will be detailed in the next section.

%This capability allows us to tackle problems in high-dimensional reward spaces with practical computational efficiency.

%%%%%%%%%%%%%%%%%%
\subsection{Algorithm based on distributional Bellman operator} \label{s:method-algo}
In this section, we introduce an algorithm designed to optimize a multi-dimensional return in a distributional reinforcement learning setting, within a robust supporting theoretical foundation, when the return is Euclidean valued or approximately so. Utilizing dynamic programming, we apply the distributional Bellman operator to the algorithm and establish convergence of the estimated value distribution to the true distribution. 

As in the c51 algorithm by \cite{bellemare2017distributional}, we approximate the value distribution with discrete supports. We denote $A_\epsilon$ to be the minimal hypercube containing the ball in Assumption~\ref{assumption_m1}. To construct uniformly distanced bins in $A_\epsilon$, we define mutually exclusive rectangles $K(\mathbf{z_i})$ as follows; \\
$A_\epsilon \supset K = K(z_1) \cup K(z_2) \cdots \cup K(z_N) $, where the $K(\mathbf{z_i})$'s are mutually exclusive half-closed hyber cubes with center $\mathbf{z_i} = (z_{i1}, z_{i2}, \dots )^T $ s.t.
\begin{align*}
    K(\mathbf{z_i}) = \Big\{ \mathbf{w} = (w_1, w_2, \dots)^T : & z_{i1}-c_1 \leq w_1 < z_{i1}+c_1,  \\
    & z_{i2}-c_2 \leq w_2 < z_{i2}+c_2, \\
    &  \dots , \\
    & \mathbf{z_i} = (z_{i1}, z_{i2}, \dots )^T \qquad \Big\}. 
\end{align*}    

Note that the sizes of the $K(\mathbf{z_i})$'s are equal and are determined by the number of categories, $N \in \mathbb{N}$. A large $N$ leads to a precise approximation of $Z(s,a)$. Additionally, let $p_i(s,a)$ denote $P\Big(Z(s,a) \in K(\mathbf{z_i}) \Big)$. Clearly, $\mu_p(A_\epsilon) \geq \mu_p(K) = \mu_p\Big(K(\mathbf{z_1}) \cup K(\mathbf{z_2}) \cup \cdots \cup K(\mathbf{z_N}) \Big)=\mu_p\big( K(\mathbf{z_1}) \big) + \mu_p\big( K(\mathbf{z_2}) \big) + \cdots \mu_p\big( K(\mathbf{z_N}) \big)$. Note that the difference between $A_\epsilon$ and $K$ only comes from the probability associated with the upper boundary of $A_\epsilon$ when increasing the size of $K$ to its maximum. When this probability is zero, the probability measure of $A_\epsilon$ is equivalent to the probability measure of $K$, that is, $\mu_p(A_\epsilon) = \mu_p(K)$ under the condition $\mu_p( \text{upper boundary of } A_\epsilon)=0$. %In practical estimation of $p_i(s,a)$'s, we assumed that $P \big(Z(s,a) \in A \big) = P \big(Z(s,a) \in K \big) = 1$ for any $s\in \mathbb{S}$ and $a \in \mathbb{A}$. 

When the first action follows a given policy $\pi$ (i.e., $a = \pi(s)$), we represent $Z^\pi(s, a)$ as $Z^\pi(s)$ by omitting the explicit reference to the action $a$. We propose an algorithm to approximate the distribution of $Z^\pi(s)$ based on the distributional Bellman operator as outlined in Algorithm~\ref{algo1}. Let $s$ and $a = \pi(s)$ denote the current state and the action selected according to a policy $\pi$ for a given state $s$. Additionally, let $s' \in \mathbb{S}$, $\Tilde{r} \in \mathbb{B}$, and $\gamma \in [0, 1]$ represent the next stage state, reward, and discount factor, respectively. With a fixed policy $\pi$ and the next stage state $s'$, a return sample $z'$ is drawn from $Z^\pi (s')$. Given $(\Tilde{r}, \gamma, \text{ and } z')$, a sample $z^{updated}$ of $\mathcal{T}^\pi Z^\pi(s)$ can be acquired via $\Tilde{r} + \gamma z'$, emulating the distributional Bellman operator. We repeat this process $n_{sample}$ times and update $Z^\pi(s)$ from the empirical distribution of $z^\pi_{updated}$'s. Note that for a fixed $s$, the distribution of $Z^\pi (s)$ is updated using the current estimated distribution of $Z^\pi (s')$ for every $s' \in \mathbb{S}$. This updating process is iteratively performed $n_{repeat}$ times for each $s\in \mathbb{S}$.

%% If you want to add comments in your algorithm, use \Comment
\begin{algorithm}[H]
\caption{Algorithm for a fixed policy $\pi$}\label{algo1}

\Require{$\gamma \in [0, 1] , z_\alpha \text{ for } \alpha = 1, \cdots, N$}
\RepTimes{$n_{repeat}$}{
    \For {$s \in \mathbb{S}$}{
        \For {$i = 1, \cdots, n_{sample}$ }{
            $s' \gets \text{random draw from the transition probability } P(\cdot | s, a = \pi(s))$\\
            $\Tilde{r} \gets \text{random draw from the random reward function } R(s, a=\pi(s), s')$ \\
            $z' \gets \text{random draw from the random return distribution } Z^\pi(s') $ \\
            $z_i^{updated} \gets \underset{z_\alpha}{\arg\min} \|z_\alpha - (\Tilde{r} + \gamma z')\| $ 
        }
        $Z^\pi(s) \gets \text{empirical distribution of } \Big( z_1^{updated}, z_2^{updated}, \cdots, z_{n_{sample} }^{updated} \Big)$
    }
}
\end{algorithm}

In Algorithm~\ref{algo1}, the $z_\alpha$'s are center of half-closed rectangles defined above. Note that the transition probability $P(\cdot | s, a)$ and reward function $R(s, a=\pi(s), s')$ are involved in this algorithm. Therefore, these components need to be estimated from the observed transition data. 

Algorithm~\ref{algo2} illustrates how this estimation is executed within the process of searching for the optimal policy $\pi^{opt}\in\Pi$ with respect to $\phi(Z^\pi(s))$. Upon observing a transition, the transition probability $P(\cdot | s, a)$ and reward function $R(s, a=\pi(s), s')$ are updated. Subsequently, for every policy $\pi \in \Pi$, the distributions of $Z^\pi(s)$ are updated using Algorithm~\ref{algo1}. Finally, utilizing the estimated value distributions, we determine optimal policy $\pi^{opt}$ given the current state $s$.

%% If you want to add comments in your algorithm, use \Comment
\begin{algorithm}[H]
\caption{Optimal Policy Search}\label{algo2}
\Require{Policy set $\Pi$, utility function $\phi(\cdot)$}
\Input{Observed transition}
\Update{The transition probability $\hat{P}(\cdot | s, a)$ and the random reward function $\hat{R}(s, a, s')$}
\For {$\pi \in \Pi$}{
        \Update{$Z^\pi(s)$ for every $s \in \mathbb{S}$ by Algorithm~\ref{algo1}} 
    }
$\pi^{opt}(s) \gets \arg\max_{\pi \in \Pi} \phi(Z^\pi(s))$\\
\Return{$\pi^{opt}(s)$}
\end{algorithm}

%The algorithm presented in this section follows an on-policy learning approach, as the value distribution is updated based on actions taken according to the current fixed policy. 
Our approach stores the distributions of $Z^\pi (s)$ for all possible state $s \in \mathbb{S}$ under the fixed policy and updates these distributions using the defined algorithm. While this method may potentially be computationally inefficient, it effectively estimates the true value distribution. We will provide comprehensive simulation results in Section~\ref{s: simulation} to showcase its performance.
The algorithms presented in this section are founded on the concept of the distributional Bellman operator. In the subsequent section, we will establish a rigorous proof demonstrating that this operator exhibits contraction properties in the Banach space.

%---------------------------------------
%           Theory
%---------------------------------------

\section{Theory}
\label{s: theory}

%---------------------------------------
%  theory:    Contraction Property - Banach
%---------------------------------------
%%%%%%%%%%%%%%%%%%%%%%%%%%%%%%%%%%%%%%%%%%%%%%%%%%%%
%%%%%%%%%%%%%%%%%%%%%%%%%%%%%%%%%%%%%%%%%%%%%%%%%%%%
%%%%%%%%%%%%%%%%%%%%%%%%%%%%%%%%%%%%%%%%%%%%%%%%%%%%
%%%%%%%%%%%%%%%%%%%%%%%%%%%%%%%%%%%%%%%%%%%%%%%%%%%%
\subsection{Contraction Property of the Distributional Bellman Operator in Banach Space}
\label{s: contraction_Banach}
The algorithm proposed in Section~\ref{s:method-algo} is based on the distributional Bellman operator. In this section, we aim to prove that the distributional Bellman operator is a contraction in a separable Banach space. %we will present the results specialized to Banach spaces but the following results all generalize to metric spaces.\\
Let $\mathbb{B}$ be a complete and separable Banach space with norm $\| \cdot \|_{\mathbb{B}}$. We define $\mathcal{P}(\mathbb{B})$ as the space of all Borel probability measures on $\mathbb{B}$, and let $\mathcal{P}_1 (\mathbb{B}) \subset\mathcal{P} (\mathbb{B})$ consist of all $\mu \in \mathcal{P} (\mathbb{B})$ such that $\int \| X \|_{\mathbb{B}} d\mu < \infty$. 
We also introduce the set $BL_1(\mathbb{B})$, which consists of all functions $f : \mathbb{B} \longmapsto \mathbb{R}$ such that $\sup_{x \in \mathbb{B}} |f(x)| \leq 1$ and $\| f \|_l \leq 1$, where 
\begin{align*}
    \| f \|_l = \sup_{x, y \in \mathbb{B} : \| x-y\|_\mathbb{B}>0} \frac{|f(x) - f(y)|}{\| x-y \|_{\mathbb{B}}} .
\end{align*}
Let $BL_1^*(\mathbb{B}) \supset BL_1(\mathbb{B})$ be the set of all functions $f : \mathbb{B} \longmapsto \mathbb{R}$ such that $\| f \|_l \leq 1$.%$\sup_{x \in \mathbb{B}} |f(x)| < \infty$ and
We define the following norms on elements of $\mathcal{P} (\mathbb{B})$:
\begin{align*}
    &d_{BL_1}(\mu, \nu) = \sup_{f \in BL_1(\mathbb{B})} E_{\mu} f - E_{\nu} f\\
    &\text{and}\\
    &d_{BL_1^*}(\mu, \nu) = \sup_{f \in BL_1^*(\mathbb{B})} E_{\mu} f - E_{\nu} f .\\
\end{align*}
For any $\mu$ and $\nu \in \mathcal{P} (\mathbb{B})$, let $\Gamma(\mu, \nu)$ be the set of all joint probability measures $\lambda \in \mathcal{P} (\mathbb{B}) \times \mathcal{P} (\mathbb{B})$. 
For any measurable set $B \subset \mathbb{B}$, $\lambda(B, \mathbb{B}) = \mu(B)$ and $\lambda(\mathbb{B}, B) = \nu(B)$, i.e., $\lambda$ has marginals $\mu$ and $\nu$. If we let $(X,Y)$ be a random variable with distribution $\lambda \in \Gamma$, then $E_\mu f(X) - E_\nu f(Y) = E_\lambda (f(X) - f(Y))$, in fact, this is trivially true for all joint probability measures in $\Gamma$.

\begin{remark}
    In other words,
    \begin{align*}
    &d_{BL_1}(\mu, \nu) = \sup_{f \in BL_1(\mathbb{B})} E_{\lambda} (f(X) - f(Y)) \text{ for any } \lambda \in \Gamma(\mu, \nu),  \\
    &\text{ and also }\\
    &d_{BL_1^*}(\mu, \nu) = \sup_{f \in BL_1^*(\mathbb{B})} E_{\lambda} (f(X) - f(Y)) \text{ for any } \lambda \in \Gamma(\mu, \nu).\\
    \end{align*}
\end{remark}

The following is the Kantorovich–Rubinstein (K-R) theorem for separable Banach spaces \cite{de1982invariance, edwards2011kantorovich} which we restate here for convenience:
\begin{theorem}[K-R Theorem]\label{theorem:KR}
    For any $\mu, \nu \in \mathcal{P}(\mathbb{B}),$ where $\mathbb{B}$ is a separable Banach space with norm $\| \cdot \|_\mathbb{B}$, 
    \begin{align}
        d_{BL_1^*}(\mu, \nu) = \inf_{\lambda \in \Gamma(\mu, \nu)} E_\lambda \| X-Y \|_\mathbb{B} := W_1(\mu, \nu), \label{1}
    \end{align}
    where $E_\lambda$ is the expectation over $(X,Y) \sim \lambda$. 
\end{theorem}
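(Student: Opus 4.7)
The plan is to prove the two inequalities $d_{BL_1^*}(\mu,\nu)\leq W_1(\mu,\nu)$ and $W_1(\mu,\nu)\leq d_{BL_1^*}(\mu,\nu)$ separately, since the forward direction is elementary while the reverse requires a duality argument that exploits separability.

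For the easy direction, I would fix any $f\in BL_1^*(\mathbb{B})$ and any coupling $\lambda\in\Gamma(\mu,\nu)$ with $(X,Y)\sim\lambda$. By the definition of $\|f\|_l\leq 1$,
\begin{align*}
E_\mu f - E_\nu f \;=\; E_\lambda\bigl[f(X)-f(Y)\bigr] \;\leq\; E_\lambda|f(X)-f(Y)| \;\leq\; E_\lambda\|X-Y\|_\mathbb{B}.
\end{align*}
Taking the supremum over $f$ on the left and the infimum over $\lambda$ on the right yields $d_{BL_1^*}(\mu,\nu)\leq W_1(\mu,\nu)$. (If $W_1(\mu,\nu)=\infty$ there is nothing to prove.)

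For the reverse inequality I would first handle the case of finitely supported measures and then pass to the limit. Since $\mathbb{B}$ is complete and separable, Ulam's theorem gives tightness of every Borel probability measure, so I can pick compact sets $K_n\uparrow$ capturing all but $1/n$ of the mass of both $\mu$ and $\nu$ and build finitely supported approximations $\mu_n,\nu_n$ (e.g., by partitioning $K_n$ into small Borel cells and replacing each by a Dirac mass at a representative point). For measures $\mu_n=\sum_i p_i\delta_{x_i}$ and $\nu_n=\sum_j q_j\delta_{y_j}$ the problem defining $W_1(\mu_n,\nu_n)$ is a finite linear program whose strong dual is
\begin{align*}
\sup\Bigl\{\sum_i p_i\,u_i-\sum_j q_j\,v_j \;:\; u_i-v_j\leq\|x_i-y_j\|_\mathbb{B}\text{ for all }i,j\Bigr\},
\end{align*}
and setting $v_j=\min_i(u_i+\|x_i-y_j\|_\mathbb{B})$ together with the McShane extension $\tilde f(x)=\min_i(u_i+\|x-x_i\|_\mathbb{B})$ produces a 1-Lipschitz function on all of $\mathbb{B}$ attaining the value of the dual. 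Hence $W_1(\mu_n,\nu_n)=d_{BL_1^*}(\mu_n,\nu_n)$ in the finite case.

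The limiting step is where the main difficulty lies. I would need to show both $W_1(\mu_n,\nu_n)\to W_1(\mu,\nu)$ and $d_{BL_1^*}(\mu_n,\nu_n)\to d_{BL_1^*}(\mu,\nu)$. Weak convergence $\mu_n\Rightarrow\mu$ together with uniform integrability of $\|\cdot\|_\mathbb{B}$ under $\mu_n$ (which can be enforced by the construction, assuming $\mu,\nu\in\mathcal{P}_1(\mathbb{B})$; the case $W_1=\infty$ follows by a monotone argument) gives convergence of the Wasserstein side via the standard glueing lemma and tightness of the couplings. For the bounded-Lipschitz side, testing against any fixed $f\in BL_1^*(\mathbb{B})$ and splitting the integral over $K_n$ and its complement handles the convergence. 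The obstacle I most expect to wrestle with is producing approximations $\mu_n,\nu_n$ whose couplings can be combined (the glueing construction on a Polish space) in a way that preserves the $W_1$ bound after passing to the limit; this is where separability is essential, since it ensures Borel measurability of the glueing kernels. Once both sides converge, the finite-case identity $W_1(\mu_n,\nu_n)=d_{BL_1^*}(\mu_n,\nu_n)$ and the already-established easy inequality together force equality in the limit.
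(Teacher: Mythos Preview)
The paper does not actually prove this theorem: it is introduced with the sentence ``The following is the Kantorovich--Rubinstein (K-R) theorem for separable Banach spaces \cite{de1982invariance, edwards2011kantorovich} which we restate here for convenience,'' and no proof is supplied. So there is no in-paper argument to compare your proposal against; the result is simply imported from the cited references.

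That said, your outline is the classical route to K-R duality and is essentially what one finds in the sources the paper cites (notably Edwards 2011): the elementary inequality $d_{BL_1^*}\leq W_1$ via the Lipschitz bound, LP strong duality on finitely supported measures together with a McShane/infimal-convolution extension to produce a global $1$-Lipschitz potential, and then a density/limiting argument exploiting that a separable Banach space is Polish so every Borel measure is tight. The sketch is sound. Two minor comments. First, the limiting step is usually organized more directly than ``weak convergence plus uniform integrability'': one constructs the discrete approximations $\mu_n,\nu_n$ so that $W_1(\mu_n,\mu)$ and $W_1(\nu_n,\nu)$ are small by design (each atom lies within $\varepsilon$ of the mass it replaces, truncated to a large ball), and then the triangle inequality for $W_1$ and for $d_{BL_1^*}$ (both are metrics on $\mathcal{P}_1$) transfers the finite-case identity to the limit without needing glueing of couplings. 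Second, as the paper states the theorem over all of $\mathcal{P}(\mathbb{B})$ rather than $\mathcal{P}_1(\mathbb{B})$, both sides may equal $+\infty$; your parenthetical remark covers this, and indeed one checks that if either $\mu$ or $\nu$ lacks a first moment then both $W_1$ and $d_{BL_1^*}$ are infinite (test against $f(x)=\|x\|_\mathbb{B}$).
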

%(L.V. Kantorovich, G. Rubinstein, On a space of completely additive functions Vestnik Leningradskogo Universiteta, 13 (7) (1958), pp. 52-59
%L.V. Kantorovich, G.P. Akilov Functional Analysis (2nd ed.), Pergamon Press, Oxford (1982))
The norm $W_1$ is the $1$-Wasserstein distance. Note that $BL_1(\mathbb{B}) \subset BL_1^*(\mathbb{B})$, with both sets being closed and $BL_1(\mathbb{B})$ being compact, with closed bounded subsets of $BL_1^*(\mathbb{B})$ being compact.
The following lemma provides several useful properties of the 1-Wasserstein distance and related results:
\begin{lemma}
\label{lemma_A}
Let $\mu, \nu \in \mathcal{P}_1(\mathbb{B})$. Then:
    \begin{enumerate}[label=(\roman*)]%[label=(\alph*)]
    \item \label{aaa}
    $d_{BL_1}(\mu, \nu) \leq d_{BL_1^*}(\mu, \nu)$ and there exists an $f_1 \in BL_1(\mathbb{B})$ and an $f_2 \in BL_1^*(\mathbb{B})$ such that 
        \begin{align*}
            &\int |f_2| d\mu + \int |f_2| d\nu \leq \int \| X \|_\mathbb{B} d\mu + \int \| Y \|_\mathbb{B} d\nu < \infty, \; \text{ and both }\\
            &E_\lambda[f_1(X) - f_1(Y)] = d_{BL_1}(\mu, \nu) \;\text{and}\; E_\lambda[f_2(X) - f_2(Y)] = d_{BL_1^*}(\mu, \nu), 
        \end{align*}
        for any $\lambda \in \Gamma(\mu, \nu)$.
    \item \label{bbb} $d_{BL_1}(\mu, \nu)$ makes $\mathcal{P}(\mathbb{B})$ into a complete metric space and $d_{BL_1^*}(\mu, \nu)$ makes $\mathcal{P}_1(\mathbb{B})$ into a complete metric space.
    \item \label{ccc} For any $\mu, \nu \in \mathcal{P}_1(\mathbb{B})$, and any $0 \leq \alpha < \infty$, let $\mu_\alpha = $ the probability measure of $\alpha X$, where $X \sim \mu$ and $\nu_\alpha = $ the probability measure of $\alpha Y$, where $Y \sim \nu$. Let $\Gamma(\mu, \nu)$ be the collection of all joint probability distributions for $(X,Y)$ such that $X \sim \mu$ and $Y \sim \nu$. Then
    \begin{align}
        d_{BL_1^*}(\mu_\alpha, \nu_\alpha) = \alpha d_{BL_1^*}(\mu, \nu). \label{1.0}
    \end{align}
\end{enumerate}
\end{lemma}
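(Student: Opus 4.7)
The plan is to address each of the three statements separately, invoking the K--R theorem (Theorem~\ref{theorem:KR}) as the central tool for parts (ii) and (iii), and an Arzel\`a--Ascoli compactness argument for part (i).

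For (i), the inequality $d_{BL_1}(\mu,\nu) \leq d_{BL_1^*}(\mu,\nu)$ is immediate from the inclusion $BL_1(\mathbb{B}) \subset BL_1^*(\mathbb{B})$. To produce a maximizer $f_1 \in BL_1(\mathbb{B})$, I would take a maximizing sequence $(f_n) \subset BL_1(\mathbb{B})$ for $\int f\,d\mu - \int f\,d\nu$. This family is uniformly bounded by $1$ and equicontinuous (each $f_n$ is $1$-Lipschitz), so by separability of $\mathbb{B}$ together with Arzel\`a--Ascoli applied on a countable exhaustion by compact balls, a diagonal subsequence converges uniformly on compacts, and pointwise on $\mathbb{B}$, to some $f_1$ with $\|f_1\|_l \leq 1$ and $|f_1|\leq 1$, so $f_1 \in BL_1(\mathbb{B})$. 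Dominated convergence with constant dominator $1$ then yields $\int f_n\,d\mu \to \int f_1\,d\mu$ and similarly for $\nu$, so the supremum is attained. For $f_2 \in BL_1^*(\mathbb{B})$ the same plan works once we observe that adding a constant to $f$ does not alter $E_\mu f - E_\nu f$, so without loss of generality the maximizing sequence can be centered at $f_n(0)=0$. The $1$-Lipschitz bound then forces $|f_n(x)| \leq \|x\|_\mathbb{B}$, which both controls the pointwise limit $f_2$ in the same way (giving the stated integrability bound) and supplies a $(\mu+\nu)$-integrable dominator for the dominated convergence step, using $\mu,\nu \in \mathcal{P}_1(\mathbb{B})$.

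For (ii), completeness of $(\mathcal{P}(\mathbb{B}),d_{BL_1})$ is Dudley's classical result: on a complete separable metric space, $d_{BL_1}$ is itself a complete metric that metrizes weak convergence. For $(\mathcal{P}_1(\mathbb{B}),d_{BL_1^*})$, the K--R theorem identifies $d_{BL_1^*}$ with the $1$-Wasserstein distance $W_1$, whose completeness on $\mathcal{P}_1$ is standard in optimal transport: a $W_1$-Cauchy sequence is tight, admits a weak limit in $\mathcal{P}_1$, and the Cauchy condition upgrades weak convergence to $W_1$-convergence via uniform control of first moments. For (iii), K--R again reduces the claim to $\inf_{\Gamma(\mu_\alpha,\nu_\alpha)} E\|X-Y\|_\mathbb{B} = \alpha\inf_{\Gamma(\mu,\nu)} E\|X-Y\|_\mathbb{B}$, which follows because for $\alpha>0$ the pushforward $(x,y)\mapsto(\alpha x,\alpha y)$ is a bijection between $\Gamma(\mu,\nu)$ and $\Gamma(\mu_\alpha,\nu_\alpha)$ and transforms the cost by $\|\alpha x-\alpha y\|_\mathbb{B} = \alpha\|x-y\|_\mathbb{B}$; the case $\alpha=0$ is trivial since $\mu_0=\nu_0=\delta_0$.

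The main obstacle will be the existence of $f_2$ in part (i): since $BL_1^*(\mathbb{B})$ is not uniformly bounded, a naive Arzel\`a--Ascoli argument fails. The centering trick $f_n(0)=0$ is the key maneuver --- it simultaneously produces the pointwise envelope $|f_n(x)| \leq \|x\|_\mathbb{B}$ which makes Arzel\`a--Ascoli applicable on compacts and supplies the integrable dominator $\|\cdot\|_\mathbb{B}$ needed to pass the integrals to the limit. The remaining parts are essentially direct consequences of K--R and standard facts about $W_1$.
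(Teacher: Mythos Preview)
Your proposal is correct and, for part~(i), follows essentially the paper's strategy: take a maximizing sequence, extract a pointwise-convergent subsequence via equicontinuity, and pass to the limit by dominated convergence, with the centering $f_n(0)=0$ supplying the integrable envelope $\|x\|_\mathbb{B}$ in the $BL_1^*$ case. One caveat: in an infinite-dimensional Banach space there is no ``exhaustion by compact balls''; the diagonal argument should instead be run on a countable dense subset of $\mathbb{B}$, after which the $1$-Lipschitz condition upgrades convergence on the dense set to pointwise convergence everywhere. That is all you need for dominated convergence, so the argument survives unchanged.

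For parts~(ii) and~(iii) your route differs from the paper's. The paper proves~(iii) on the \emph{dual} side: it shows directly that $f\mapsto \alpha^{-1}f(\alpha\,\cdot)$ is a bijection of $BL_1^*(\mathbb{B})$ onto itself, so the supremum defining $d_{BL_1^*}$ rescales by $\alpha$. You instead pass through K--R and rescale couplings on the \emph{primal} side. Both are equally short; the paper's version avoids invoking K--R altogether, while yours makes the homogeneity of $W_1$ transparent. For~(ii), the paper argues directly that a $d_{BL_1^*}$-Cauchy sequence has uniformly bounded first moments and hence its weak limit stays in $\mathcal{P}_1(\mathbb{B})$; your appeal to the standard completeness of $(\mathcal{P}_1,W_1)$ via K--R is cleaner and in fact fills in the actual $d_{BL_1^*}$-convergence step that the paper leaves implicit.
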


\begin{proof}
    \begin{enumerate}[label=(\roman*)]%[label=(\alph*)]
    %proof (a)
    \item Fix $\mu, \nu \in \mathcal{P}_1(\mathbb{B})$. The first inequality follows directly from the fact that $BL_1(\mathbb{B}) \subset BL_1^*(\mathbb{B})$. Note that $d_{BL_1}$ is bounded below by zero and above by 2. Let $[0,2] \ni c_1 = d_{BL_1}(\mu, \nu)$. Consider a sequence of functions $\{f_n\} \in BL_1(\mathbb{B})$ such that for any choice of $\lambda \in \Gamma(\mu, \nu)$,
     $$E_\lambda [f_n(X) - f_n(Y)] \longrightarrow d_{BL_1}(\mu, \nu).$$
    Since $BL_1(\mathbb{B})$ is compact, there exists a subsequence converging uniformly to some $f_1 \in BL_1(\mathbb{B})$. By design, $f_1$ satisfies
    $$E_\lambda [f_1(X) - f_1(Y)] = d_{BL_1}(\mu, \nu).$$
    Let $B \subset BL_1^*(\mathbb{B})$ be the subset of all $f\in Bl_1^*(\mathbb{B})$ which satisfy $$\int|f|d\mu + \int|f|d\nu \leq \int \| X \|_\mathbb{B} d\mu + \int \| Y \|_\mathbb{B} d\nu < \infty . $$ 
    Note fact that 
    $$d_{BL_1^*}(\mu, \nu) = \inf_{\lambda \in \Gamma(\mu, \nu)} E_\lambda \| X - Y\|_\mathbb{B} \leq \int \| X\|_\mathbb{B} d\mu + \int \| Y\|_\mathbb{B} d\nu < \infty.$$
    
    Now for any $f \in BL_1^*(\mathbb{B})$,
    \begin{align*}
        E_\lambda [f(X) - f(Y)] &= E_\lambda [f(X) - f(0) -  f(Y) + f(0)] \\ 
                               &\leq E_\lambda [\| X \|_\mathbb{B} + \| Y\|_\mathbb{B}] \\
                               &= \int \| X\|_\mathbb{B} d\mu + \int \| Y\|_\mathbb{B} d\nu.
    \end{align*}
    Thus $$\sup_{f \in BL_1^*(\mathbb{B})} E_\lambda [f(X) - f(Y)] = \sup_{f \in B} E_\lambda [f(X) - f(Y)].$$
    It is not difficult to show that $B$ is a compact set. By recycling arguments used above for $d_{BL_1}$, we now have that there is an $f_2 \in B$ such that
    $$E_\lambda [f_2(X) - f_2(Y)] = d_{BL_1^*}(\mu, \nu).$$
    %proof (b)
    \item The fact that $d_{BL_1}$ makes $\mathcal{P}(\mathbb{B})$ into complete metric space following from part ($\romannumeral 8$) of Theorem 7.6 in \cite{kosorok2008empirical}. Suppose $\{\mu_n\}\in \mathcal{P}_1(\mathbb{B})$ and $\lim_{n \rightarrow \infty} \sup_{m \geq n} d_{BL_1^*}(\mu_n, \mu_m) = 0$.
    This means they become equivalent distributionally. Note also that since both $g:x \mapsto \|X\|_\mathbb{B}$ and $h:x \mapsto -\|X\|_\mathbb{B}$ are in $BL_1^*$, we have that
    $$E_{\lambda_n} \big| \|X\|_\mathbb{B} - \|Y\|_\mathbb{B}\big| \leq d_{BL_1^*}(X, Y)$$
    for any $\lambda_n \in \Gamma(\mu_n, \mu_m)$. Now for any $\epsilon > 0$, $\exists n_0 < \infty:\;\forall m \geq n \geq n_0$, $d_{BL_1^*}(\mu_m, \mu_n) \leq \epsilon$. Thus, for all $m \geq 0$,
    \begin{align*}
        E_{\mu_m}\| X \|_\mathbb{B} &\leq E_{\mu_{n_0}}\| Y \|_\mathbb{B} + E_{\lambda_m^*} \big| \|X\|_\mathbb{B} - \|Y\|_\mathbb{B}\big| \\
                         &\leq E_{\mu_{n_0}}\| Y \|_\mathbb{B} + \epsilon \\
                         & < \infty,
    \end{align*}
    where $\lambda_m^* \in \Gamma(\mu_{n_0}, \mu_m)$. Since $\epsilon$ is arbitrary, we have
    $$\limsup_{n \rightarrow \infty} E_{\mu_n} \| X \|_\mathbb{B} < \infty .$$
    Thus, all limit points of a sequence $\{\mu_n\} \in \mathcal{P}_1(\mathbb{B})$, for the metric $d_{BL_1}$, remain in $\mathcal{P}_1(\mathbb{B})$, and thus $\mathcal{P}_1(\mathbb{B})$ is a complete metric space for the metric $d_{BL_1^*}$.
    %proof (c) - updated
    \item   If $\alpha = 0$, (\ref{1.0}) holds trivially. 
    Assume $0 < \alpha < \infty$. Then
    \begin{align*}
        d_{BL_1^*}(\mu_\alpha, \nu_\alpha) &= \sup_{f \in {BL_1^*(\mathbb{B})}} E_\mu  f(\alpha X) - E_\nu f(\alpha Y) \\
        &= \sup_{f \in {BL_1^*(\mathbb{B})}} E_\lambda [ f(\alpha X) -  f(\alpha Y)],  %\label{1.1}
    \end{align*}
    for any $\gamma \in \Gamma(\mu, \nu)$. For a fixed $0<\alpha<\infty$, 
    let $\overset{\sim}{BL_1^*}(\mathbb{B})$ be the set of functions of the form $X \mapsto \alpha ^{-1}f(\alpha X)$ for $f \in BL_1^*(\mathbb{B})$. We now establish the relationship between the two sets, $\overset{\sim}{BL_1^*}(\mathbb{B})$ and $BL_1^*(\mathbb{B})$. 
    
    For any $X, Y \in \mathbb{B}$,
    $$|\alpha^{-1}f(\alpha X) - \alpha^{-1}f(\alpha Y)| \leq \alpha^{-1} \| \alpha X - \alpha Y\|_\mathbb{B} = \| X-Y \|_\mathbb{B}.$$    
    %Thus $\overset{\sim}{BL_1^*}(\mathbb{B}) \subset BL_1^*(\mathbb{B})$. Conversely, we can demonstrate that $BL_1^*(\mathbb{B}) \subset \overset{\sim}{BL_1^*}(\mathbb{B})$. For $f \in BL_1^*(\mathbb{B})$, let $f(X) = \alpha^{-1}g(\alpha X)$, then $g(\alpha X) = \alpha f(X)$. Let $U = \alpha X$, then $g(U) = \alpha f(\alpha^{-1}U)$.
    %\begin{align*}
    %    |g(X) - g(Y) | &= | \alpha f(\alpha^{-1}X) - \alpha f(\alpha^{-1}Y) | \\
    %        & = \alpha |  f(\alpha^{-1}X) - f(\alpha^{-1}Y) | \\
    %        & \leq \alpha \| \alpha^{-1}X - \alpha^{-1}Y \|_\mathbb{B} \\
    %        & = \| X - Y \|_\mathbb{B} .
    %\end{align*}
    Thus $\overset{\sim}{BL_1^*}(\mathbb{B}) \subset BL_1^*(\mathbb{B})$. Next we show the converse that $BL_1^*(\mathbb{B}) \subset \overset{\sim}{BL_1^*}(\mathbb{B})$. This relationship holds if for any $f \in BL_1^*(\mathbb{B})$, there exists a function $g$ such that $f(X) = \alpha^{-1}g(\alpha X)$ for all $X \in \mathbb{B}$ and $g \in BL_1^*(\mathbb{B})$. Let $f(X) = \alpha^{-1}g(\alpha X)$, then $g(X) = \alpha f(\alpha^{-1}X)$. We can verify that $g \in BL_1^*(\mathbb{B})$:
    \begin{align*}
        |g(X) - g(Y) | &= | \alpha f(\alpha^{-1}X) - \alpha f(\alpha^{-1}Y) | \\
            & = \alpha |  f(\alpha^{-1}X) - f(\alpha^{-1}Y) | \\
            & \leq \alpha \| \alpha^{-1}X - \alpha^{-1}Y \|_\mathbb{B} \\
            & = \| X - Y \|_\mathbb{B} .
    \end{align*}    
    Thus $g(X) \in BL_1^*(\mathbb{B})$ and $\overset{\sim}{BL_1^*}(\mathbb{B}) \subset BL_1^*(\mathbb{B})$. Therefore $\overset{\sim}{BL_1^*}(\mathbb{B}) = BL_1^*(\mathbb{B})$. Hence
    \begin{align*}
        \sup_{f \in BL_1^*(\mathbb{B})} E_\lambda [f(\alpha X) - f(\alpha Y)] 
             &= \alpha \sup_{f \in BL_1^*(\mathbb{B})} E_\lambda [\alpha^{-1} f(\alpha X) - \alpha^{-1} f(\alpha Y)]\\
             &= \alpha \sup_{f \in \overset{\sim}{BL_1^*}(\mathbb{B})} E_\lambda [f(X) - f(Y)]\\
             &= \alpha \sup_{f \in BL_1^*(\mathbb{B})} E_\lambda [f(X) - f(Y)],
    \end{align*}
    and the conclusion follows since $\alpha$ was arbitrary.
    \end{enumerate}
\end{proof}

%%%%%%%%%%%%%%%%%%%%%%%%%
Let $\mathbb{A}$, $\mathbb{S}$, and $\mathbb{B}$ be complete Banach spaces with respective norms $\|\cdot\|_\mathbb{A}$, $\|\cdot\|_\mathbb{S}$, and $\|\cdot\|_\mathbb{B}$. We assume $\mathbb{A}$ is compact, while $\mathbb{S}$ and $\mathbb{B}_0 \in \mathbb{B}$ are separable. Let $\mathcal{X} \subset \mathbb{A} \times \mathbb{S}$ be closed. We define $Q(S)$ as the initial distribution of states in $\mathbb{S}$ and  the ``behavior'' $b(a|s)$ which is the distribution of action $A = a \in \mathbb{A}$ given the current state $S = s$. Let $P(S' | s,a)$ be the transition probability to next state $S' = s'$ given current state $S=s$ and action $A=a$. Assume $P_Q(S \in \mathbb{S}) = 1$ and for any $s \in \mathbb{S}$, $P_b((A;s) \in \mathcal{X}) = 1$. Also, for any $(a,s) \in \mathcal{X}$, $P_P(S' \in \mathbb{S})=1$. Thus $\mathcal{X}$ does not restrict $S$, but it may restrict $A$ given $S=s$. 

Going forward, we assume that any feasible candidate policy $\pi(a|s)$ satisfies that for every $s \in \mathbb{S}$, $P_\pi\big( (A;s) \in \mathcal{X} \big) = 1$. Note also that $\mathcal{X}$ is separable since both $\mathbb{A}$ and $\mathbb{S}$ are separable. For any $\big(  (a,s), s' \big) \in \mathcal{X} \times \mathbb{S}$, let $\overset{\sim}{R}(s,a,s')$ be the return obtained (as a random variable) for someone whose current state is $s$, who received treatment $a$, and whose next state is $s'$. Assume $P\big( \overset{\sim}{R}(s,a,s') \in \mathbb{B}_0\big) = 1$.

Let $S'(s,a)$ denote a random realization of $P(S'|S=s, A=a)$, and we define the composite reward $R(s,a) = \overset{\sim}{R}(s,a,S'(s,a))$. Let $0 \leq \gamma < 1$ be a discount factor. For any feasible policy $\pi$, we define the state-action value distribution $Z^{\pi}(s,a)$ as:
\begin{align}
    Z^{\pi}(s,a) &= \sum_{j=0}^{\infty}\gamma^j R\big( S_j, A_j\big) \nonumber \\
                 &= R(s,a) + \sum_{j=1}^{\infty}\gamma^j R\big( S_j, A_j \big), \label{2.1}
\end{align}
where $S_{j+1}$ is generated from $P(S_{j+1}|S_j, A_j)$, $A_j \sim \pi(S_j)$, $S_0 = s$, and $A_0 = a$. Thus $Z^\pi\big(s, \pi(s)\big)$ is the total discounted reward for an individual who is treated according to policy $\pi$ perpetually. (\ref{2.1}) can be written recursively as
\begin{align}
    Z^{\pi}(s,a) = R(s,a) + \gamma Z^{\pi}\Big( S'(s,a), \pi\big(S'(s,a)\big) \Big). \label{2.2}
\end{align}

Note that each time $\overset{\sim}{R}(s,a,s')$ is randomly drawn, it is independently drawn. Thus $\overset{\sim}{R}(s,a,s')$, changing over $(a,s) \times s' \in \mathcal{X} \times \mathbb{S}$, is not a stochastic process per se as much as a family of random variables.

However, there is dependence in the $R$ terms in (\ref{2.1}) as follows. Given $(a,s) \in \mathcal{X}$, we draw the next state $S'(s,a)$ and then given $s' = S'(s,a)$, receive the reward $\overset{\sim}{R}(s,a,s')$. Based on the new state $s'$, we draw the next action according to $\pi(a'|s')$, and calling this $A'(s')$. With the drawn action $a'=A'(s')$, we draw another next state according to $S''\big( s', a' \big)$. Now we draw a new reward, $\overset{\sim}{R}(s', a', s'')$, where $s'=S'(s,a), a'=A'(s'), s''=S''(s',a') = S''\Big(S'(s,a), A'\big( S'(s,a) \big) \Big)$. Thus
\begin{align*}
    R(s', a') &= \overset{\sim}{R}\Big( s', \pi(s'), S''\big(s',\pi(s') \big)\Big) \\
              &= \overset{\sim}{R}\Big( s'(s,a), \pi\big(s'(s,a)\big), S''\big(s'(s,a), \pi(s'(s,a)) \big)\Big).
\end{align*}
This leads to the conclusion that $R(s,a)$ and $Z^\pi \Big(S'(s,a), \pi\big(S'(s,a) \big) \Big)$ are dependent through the random variable $S'(s,a)$, but would be independent if conditioned on $S'(s,a)$.

Now Define
\begin{align*}
    Z_j^*(s,a) &= \overset{\sim}{R}\big(s,a,S'(s,a)\big) + \gamma V_j^*\Big( S'(s,a), \pi\big( S'(s,a) \big) \Big)\\
               &= R(s,a) + \gamma V_j^*\Big( S'(s,a), \pi\big( S'(s,a) \big)\Big),
\end{align*}
where $V_j^*(s,a)$, for any $(a,s) \in \mathcal{X}$, is random variable in $\mathbb{B}_0$ with probability $1$.
This yields that $Z_j^*(s,a)$ is also a random variable in $\mathbb{B}_0$ with probability $1$.
We are now ready for the following key result:
\begin{theorem}
    Under the stated conditions,
    \begin{align*}
        &\sup_{(a,s) \in \mathcal{X}} d_{BL_1^*}\big(Z_1^*(s,a), Z_2^*(s,a)\big) \leq \gamma \sup_{(a,s) \in \mathcal{X}} d_{BL_1^*}\big(V_1^*(s,a), V_2^*(s,a)\big).
        %&\leq \gamma \sup_{(a,s) \in \mathcal{X}} d_{BL_1^*} \Bigg( V_1^*\Big( S'(s,a) , \pi\big( S'(s,a) \big)\Big), V_2^*\Big( S'(s,a) , \pi\big( S'(s,a) \big)\Big)\Bigg)
    \end{align*}
\end{theorem}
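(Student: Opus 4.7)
The plan is to prove the bound by constructing an explicit coupling of $Z_1^*(s,a)$ and $Z_2^*(s,a)$ that shares the common randomness and then invoking the Kantorovich--Rubinstein theorem. Fix $(a,s)\in\mathcal{X}$. By Theorem~\ref{theorem:KR},
$$d_{BL_1^*}\bigl(Z_1^*(s,a), Z_2^*(s,a)\bigr) = \inf_{\lambda\in\Gamma} E_\lambda \|X-Y\|_\mathbb{B},$$
so it suffices to produce any single coupling achieving the desired bound. The idea is that both $Z_1^*$ and $Z_2^*$ share the same $R(s,a)$ term and the same next-state $S'(s,a)$; the only difference between them is in the $V_j^*$ part.

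First I would draw the pair $\bigl(R(s,a), S'(s,a)\bigr)$ from its joint distribution and use the \emph{same} realization for both $Z_1^*$ and $Z_2^*$. Conditional on $S'(s,a)=s'$, the variables $V_1^*\bigl(s', \pi(s')\bigr)$ and $V_2^*\bigl(s', \pi(s')\bigr)$ are $\mathbb{B}_0$-valued random variables, and by Lemma~\ref{lemma_A}\ref{aaa} combined with Theorem~\ref{theorem:KR}, there exists an optimal coupling $\lambda_{s'}^*$ of their laws attaining $d_{BL_1^*}\bigl(V_1^*(s',\pi(s')), V_2^*(s',\pi(s'))\bigr)$. I draw $\bigl(V_1^*(s',\pi(s')), V_2^*(s',\pi(s'))\bigr)$ from $\lambda_{s'}^*$ conditional on $S'(s,a)=s'$, and set
$$X = R(s,a) + \gamma V_1^*\bigl(S', \pi(S')\bigr), \qquad Y = R(s,a) + \gamma V_2^*\bigl(S', \pi(S')\bigr).$$
Under this coupling the $R(s,a)$ terms cancel and the norm homogeneity (equivalently, Lemma~\ref{lemma_A}\ref{ccc}) gives $\|X-Y\|_\mathbb{B} = \gamma\,\|V_1^*(S',\pi(S')) - V_2^*(S',\pi(S'))\|_\mathbb{B}$.

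Next, taking expectations and using the tower property,
$$E\|X-Y\|_\mathbb{B} = \gamma\, E\Bigl[\, E\bigl[\|V_1^*(S',\pi(S')) - V_2^*(S',\pi(S'))\|_\mathbb{B}\;\big|\;S'\bigr]\Bigr] = \gamma\, E\Bigl[d_{BL_1^*}\bigl(V_1^*(S',\pi(S')), V_2^*(S',\pi(S'))\bigr)\Bigr],$$
where the inner conditional expectation is exactly the optimal-coupling value by construction. Since $\pi$ is assumed feasible, $(S', \pi(S'))\in\mathcal{X}$ almost surely, so the outer expectation is dominated by $\sup_{(a',s')\in\mathcal{X}} d_{BL_1^*}(V_1^*(s',a'), V_2^*(s',a'))$. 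Combined with the K--R variational identity this yields the bound for each fixed $(a,s)$; taking the supremum over $(a,s)\in\mathcal{X}$ delivers the theorem.

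The main technical obstacle I anticipate is the measurable selection of the family $\{\lambda_{s'}^*\}_{s'\in\mathbb{S}}$, which is needed to make the joint coupling well defined and the conditional-expectation manipulation rigorous. Since $\mathbb{B}_0$ is separable Banach and hence Polish, and $\mathcal{X}$ is separable, one can appeal to a standard measurable selection theorem (Kuratowski--Ryll-Nardzewski) applied to the set-valued map sending $s'$ to the nonempty compact set of optimal couplings guaranteed by Lemma~\ref{lemma_A}\ref{aaa}. The remaining verifications --- measurability of $V_j^*\bigl(\cdot,\pi(\cdot)\bigr)$, integrability to ensure finite expectations (which follows from $V_j^*(s,a)\in\mathbb{B}_0$ a.s.\ and the $\mathcal{P}_1$ setting), and the cancellation of $R(s,a)$ --- are routine.
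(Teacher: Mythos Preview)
Your argument is correct but takes the \emph{primal} (coupling) route, whereas the paper works entirely on the \emph{dual} (Lipschitz-function) side of Kantorovich--Rubinstein. The paper expands $d_{BL_1^*}$ as a supremum over $f\in BL_1^*(\mathbb{B}_0)$, conditions on $S'=s'$, passes the supremum inside the $dP(s'\mid a,s)$ integral (an inequality), then invokes Lemma~\ref{lemma_B} to drop the common additive $\tilde{R}(s,a,s')$ and Lemma~\ref{lemma_A}\ref{ccc} to pull out the factor $\gamma$. Your approach instead builds a single coupling that shares $\bigl(R(s,a),S'(s,a)\bigr)$ and glues in a conditionally optimal coupling of the $V_j^*$; the additive term cancels pointwise and the $\gamma$ comes out by norm homogeneity, so you never need Lemma~\ref{lemma_B}. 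What the paper's route buys is that it sidesteps the measurable-selection issue you flag: interchanging $\sup_f$ with $\int dP(s')$ is a trivial inequality, whereas your gluing construction genuinely needs a measurable family $s'\mapsto\lambda_{s'}^*$. Two small points: your appeal to Lemma~\ref{lemma_A}\ref{aaa} for the existence of an optimal \emph{coupling} is a misattribution (that lemma produces a dual maximizer $f_2$, not a primal minimizer; existence of optimal couplings is a separate standard fact for Polish spaces), and your reference to Lemma~\ref{lemma_A}\ref{ccc} is superfluous since you only use $\|\gamma u\|_\mathbb{B}=\gamma\|u\|_\mathbb{B}$ directly.
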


\begin{proof}
    \begin{align}
    &\text{For any }(a,s)\in{\cal X}\text{, note that} \nonumber\\
        &d_{BL_1^*}\big(Z_1^*(s,a), Z_2^*(s,a)\big) \nonumber\\
         &= d_{BL_1^*}\Bigg(\overset{\sim}{R}\big(s,a,S'(s,a)\big) +\gamma V_1^*\Big( S'(s,a), \pi\big( S'(s,a) \big)), \nonumber\\
          & \qquad \qquad  \overset{\sim}{R}\big(s,a,S'(s,a)\big) +\gamma V_2^*\Big( S'(s,a), \pi\big( S'(s,a) \big))\Bigg) \nonumber\\
         &=\sup_{f \in BL_1^*(\mathbb{B}_0)} \int_{s'} E \Bigg[ f\Big(\overset{\sim}{R}\big(s,a,s'\big) +\gamma V_1^*\big( s', \pi( s' )\big)\Big) \nonumber\\
          & \qquad \qquad \qquad \quad- f\Big(\overset{\sim}{R}\big(s,a,s'\big) +\gamma V_2^*\big( s', \pi( s' )\big)\Big)\Bigg] dP(s'|a,s) \nonumber\\
         &\leq \int_{s'} \sup_{f \in BL_1^*(\mathbb{B}_0)} E \Bigg[ f\Big(\overset{\sim}{R}\big(s,a,s'\big) +\gamma V_1^*\big( s', \pi( s' )\big)\Big) \nonumber\\
          & \qquad \qquad \qquad \quad - f\Big(\overset{\sim}{R}\big(s,a,s'\big) +\gamma V_2^*\big( s', \pi( s' )\big)\Big)\Bigg] dP(s'|a,s) \nonumber \\
         &\leq \int_{s'} \sup_{f \in BL_1^*(\mathbb{B}_0)} E \Bigg[ f\Big(\gamma V_1^*\big( s', \pi( s' )\big)\Big) - f\Big(\gamma V_2^*\big( s', \pi( s' )\big)\Big)\Bigg] dP(s'|a,s) \label{2.3} \\
         &=  \gamma \int_{s'} \sup_{f \in BL_1^*(\mathbb{B}_0)} E \Bigg[ f\Big(V_1^*\big( s', \pi( s' )\big)\Big) - f\Big( V_2^*\big( s', \pi( s' )\big)\Big)\Bigg] dP(s'|a,s)  \label{2.4} \\
         &\leq  \gamma \sup_{(a,s)\in \mathcal{X}} \sup_{f \in BL_1^*(\mathbb{B}_0)} E \Bigg[ f\Big(V_1^*\big( s, a\big)\Big) - f\Big( V_2^*\big( s, a\big)\Big)\Bigg], \label{2.5}
    \end{align}
    where (\ref{2.4}) follows from \textbf{Lemma~\ref{lemma_A}}, part~(\ref{ccc});    (\ref{2.5}) follows from the fact that
      $$\big\{ (\pi(s'), s') : s' \in \mathbb{S}\big\} \subset \big\{ (a,s) : (a,s) \in \mathcal{X} \big\};$$
    (\ref{2.3}) follows from \textbf{Lemma~\ref{lemma_B}} below; and the conclusion follows since the choice of $(s,a)\in{\cal X}$ on the left-hand-side was arbitrary.\\
 \end{proof}

\begin{lemma}
        \label{lemma_B}
        Let $X,Y,Z$ be random variables on a separable Banach space $\mathbb{B}$, with X independent of $Y$ and $Z$. Then
        $$\sup_{f \in BL_1^*(\mathbb{B})} E \big( f(X+Y) - f(X+Z)\big)  \leq \sup_{f \in BL_1^*(\mathbb{B})} E \big( f(Y) - f(Z)\big).$$
    \end{lemma}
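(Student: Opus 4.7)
The plan is to exploit the independence of $X$ from $(Y,Z)$ to reduce the left-hand side to an average, over realizations of $X$, of expressions that are translation-shifted versions of the right-hand side. The key observation is that translation by a fixed element preserves membership in $BL_1^*(\mathbb{B})$.

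First I would fix an arbitrary $f \in BL_1^*(\mathbb{B})$ and, for each $x \in \mathbb{B}$, define $f_x(w) := f(x+w)$. A one-line computation shows $\|f_x\|_l \le 1$: for any $w_1,w_2\in\mathbb{B}$,
\begin{align*}
|f_x(w_1) - f_x(w_2)| = |f(x+w_1) - f(x+w_2)| \le \|w_1-w_2\|_{\mathbb{B}},
\end{align*}
so $f_x \in BL_1^*(\mathbb{B})$. (Interpret ``$X$ independent of $Y$ and $Z$'' as joint independence $X \perp (Y,Z)$, which is what the argument requires; the proof otherwise fails already for $X=Y=Z$ degenerate constants if only pairwise independence is assumed.) By Fubini together with this joint independence,
\begin{align*}
E\bigl(f(X+Y) - f(X+Z)\bigr) = E_X\bigl[ E\bigl( f_x(Y) - f_x(Z)\bigr)\big|_{x=X}\bigr].
\end{align*}

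Next, since $f_x \in BL_1^*(\mathbb{B})$, for every $x$,
\begin{align*}
E\bigl(f_x(Y) - f_x(Z)\bigr) \le \sup_{g \in BL_1^*(\mathbb{B})} E\bigl(g(Y) - g(Z)\bigr),
\end{align*}
and the right-hand side is a deterministic constant, independent of $x$. Averaging this inequality over the distribution of $X$ gives
\begin{align*}
E\bigl(f(X+Y) - f(X+Z)\bigr) \le \sup_{g \in BL_1^*(\mathbb{B})} E\bigl(g(Y) - g(Z)\bigr).
\end{align*}
Since $f$ was arbitrary in $BL_1^*(\mathbb{B})$, taking the supremum on the left yields the claim.

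The only real obstacle is the measurability check needed to invoke Fubini: one must argue that $x \mapsto E(f_x(Y) - f_x(Z))$ is measurable and that the conditional expectations decouple as stated. This is standard in a separable Banach space because $f$ is Lipschitz (hence continuous) and $Y, Z$ are $\mathbb{B}$-valued random variables, so $(x,\omega)\mapsto f(x+Y(\omega))$ is jointly measurable and the product-measure form of Fubini together with $X\perp(Y,Z)$ applies directly. Beyond that point, the argument is essentially a one-line translation invariance remark combined with taking suprema, so no further estimates are needed.
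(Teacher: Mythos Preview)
Your proposal is correct and follows essentially the same route as the paper: condition on $X$ using independence and Fubini, then use that the translate $w\mapsto f(x+w)$ stays in $BL_1^*(\mathbb{B})$ to bound each conditional expectation by the right-hand supremum. The paper presents this in two lines (swapping $\sup_f$ with $\int dP(X)$ first, then invoking translation invariance), whereas you fix $f$ first and take the supremum last; your additional remarks on joint independence and measurability are welcome care that the paper omits, but the core argument is identical.
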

\begin{proof}
    \begin{align*}
        \sup_{f \in BL_1^*(\mathbb{B})} & E \big( f(X+Y) - f(X+Z)\big) \\
        & \leq \int_{X} \sup_{f \in BL_1^*(\mathbb{B})} E \big( f(X+Y) - f(X+Z)\big) dP(X)\\
        & \leq \sup_{f \in BL_1^*(\mathbb{B})} E \big( f(Y) - f(Z)\big),
    \end{align*}
where the last inequality follows from the fact that, for a fixed $X \in \mathbb{B}$, and for any $f \in BL_1^*(\mathbb{B})$, we can easily show that the function $U \mapsto f(X+U)$ is also contained in $BL_1^*(\mathbb{B})$.
\end{proof}
%\begin{align*}
%    \mathcal{P}_1(\mathbb{B}) \\
%    BL_1(\mathbb{B}) \\
%    BL_1^*(\mathbb{B}) \\
%    d_{BL_1}(\mu, \nu) \\
%    d_{BL_1^*}(\mu, \nu) \\
%    \mu, \nu \in \mathcal{P}_1(\mathbb{B})\\
%    \gamma \in \Gamma(\mu, \nu) \\
%    \overset{\sim}{R}(s,a,s') \\
%   \Tilde{R}(s,a,s')\\
%    f\Big(\overset{\sim}{R}\big(s,a,s'\big) +\gamma V_2^*\big( s', \pi( s' )\big)\Big)
%\end{align*}

%---------------------------------------
%  theory:    Approximation Property - finite
%---------------------------------------
%%%%%%%%%%%%%%%%%
\subsection{Hypercube Approximation Theory}\label{s: theory-finite}
We present the proof of Theorem~\ref{theorem_projection} below. The subsequent section will explore its application to hypercubes in separable Banach spaces.
\begin{proof}
Let $A_\epsilon\subset\mathbb{B}$ be a closed set containing $\{z\in\mathbb{B}: \|z\|_{\mathbb{B}} \leq r\}$. Fix an $f\in L(\mathbb{B})$, where $L(\mathbb{B})$ is the set of Lipschitz continuous functions mapping from $\mathbb{B}$ to $\mathbb{R}$ with Lipschitz constant $\leq 1$. Since $|f(Z)-f(\tilde{Z})|=|f(Z)-f(0)-f(\tilde{Z})+f(0)|$, we can assume without loss of generality for the argments given below that $f(0)=0$. Accordingly, the distance between $f(Z)$ and $f(\Tilde{Z})$ is bounded as follows by the triangle inequality:
\label{proof_projection}
    \begin{align*}
    &E \big|f(Z)-f(\Tilde{Z})\big| \\
    & \leq E\big| f(Z)-f(Z) 1\{Z \in A_\epsilon\} \big|
    + E\big| f(Z) 1\{Z \in A_\epsilon\}-f(\Tilde{Z})1\{Z \in A_\epsilon\} \big| \\
    & \qquad +E\big| f(\Tilde{Z})1\{Z \in A_\epsilon\}-f(\Tilde{Z}) \big|.
    \end{align*}
In the inequality above, each term can be bounded as follows
    \begin{align*}
    & E\big| f(Z)-f(Z) 1\{Z \in A_\epsilon\} \big| \\
    &= E \big| f(Z) 1\{Z \notin A_\epsilon\} \big| \\
    &\leq E \Big[\|Z\|_{\mathbb{B}}1\{Z \notin A_\epsilon\} \Big] \\
    &\leq E \big[\|Z\|_{\mathbb{B}} 1\{\|Z\|_{\mathbb{B}}>r\} \big] .\\
    \end{align*}
Similarly, 
    \begin{align*}
    & E\big| f(\Tilde{Z})-f(\Tilde{Z}) 1\{Z \in A_\epsilon\} \big| \\
    &\leq E \big[ \|\Tilde{Z}\|_\mathbb{B} 1\{Z \notin A_\epsilon\} \big] \\
    &\leq E \big[ \|Z\|_\mathbb{B} 1\{Z \notin A_\epsilon\} \big] \\
    &\leq E \big[ \|Z\|_\mathbb{B} 1\{\|Z\|_\mathbb{B}>r\} \big] .\\
    \end{align*}

Lastly, $E\big| f(Z) 1\{Z \in A_\epsilon\}-f(\Tilde{Z})1\{Z \in A_\epsilon\} \big| =0$ because $Z=\Tilde{Z}$ if $Z \in A_\epsilon$ by construction.

Therefore, it follows that
    \begin{align*}
        &\sup_{(s,a) \in \mathcal{H}} \sup_{f \in L(\mathbb{B})} E \Big|  f(Z(s,a)) -f(\Tilde{Z}(s,a)) \Big| \\
        &\leq 2 \sup_{(s,a) \in \mathcal{H}} \sup_{f \in L(\mathbb{B})} E \big[ \|Z\|_\mathbb{B} 1\{\|Z\|_\mathbb{B}>r\} \big] \\
        &\leq 2 \epsilon,
    \end{align*}
and the desired conclusion follows.
\end{proof}

%---------------------------------------
%  theory:    Approximation Property - infinite
%---------------------------------------
%%%%%%%%%%%%%%%%%
\subsection{Approximation Theory in Banach Space}\label{s: theory-infinite}
In this section, we provide the proof of Theorem~\ref{theorem_banach_approximation_ver2}. Prior to that, we explore the practical application of the theorem. This theorem guarantees the existence of an approximation function, $F_\delta (z):\mathbb{B} \mapsto \mathbb{R}^k$. The key insight is that, despite a high-dimensional random variable potentially having infinitely many dimensions, the approximation can be constructed using a finite-dimensional random variable. The specific construction of $F_\delta (z)$ is not provided by the theorem itself. We now present several examples of settings where we can apply this approximation. The first example is for Euclidean spaces. We then give an example of a separable Hilbert space followed by separable Banach space example.

\begin{example}
  In the Euclidean space $\mathbb{B}=\mathbb{R}^k$ for $1\leq k<\infty$, the construction of the approximation function $F_\delta (z)$ is trivial. We simply set $F_\delta(z)=z$ for any $z$ in the compact set $A$. The primary challenge becomes identifying a suitable compact set $A$ for the approximation.

  Assuming $Z(s,a)\in\mathbb{R}^k$ almost surely for all $(s,a)\in{\cal X}$, we fix $\epsilon>0$ and choose $r>1$ to satisfy Assumption~\ref{assumption_m1} for this $\epsilon$. Now let $A_{\epsilon}$ be the smallest, closed, axis-aligned hypercube containing the $\mathbb{R}^k$-ball of radius $r$. Then $A_{\epsilon}$ is compact since all bounded closed sets are also compact in $\mathbb{R}^k$. Not only that, $A_\epsilon$ trivially satisfies the conditions of Theorem~\ref{theorem_projection}, ensuring an accurate approximation of the distribution of $Z(s,a)$. Moreover, Assumption~\ref{assumption_m2} is easily satisfied for $\delta=\epsilon$ and $B_\delta=A_\epsilon$.

%this set easily satisfies the conditions of Theorem~3.1. Moreover, since all bounded closed sets are also compact in $\mathbb{R}^k$, Assumption~2 is easily satisfied for $\delta=\epsilon$ and $B_\epsilon=A_\epsilon$. 
\end{example}

\begin{example}
    Let $\mathbb{B}=H$, where $H$ is a Hilbert space with inner product $\langle \cdot,\cdot\rangle$ and norm $\|\cdot\|$, and assume $H_0\subset H$ is separable. Due to the separability, any element $h\in H_0$ can be expressed as a linear combination of a countable orthonormal basis $\{e_j\}$. Mathematically, this is represented as: $$h=\sum_{j=1}^\infty \langle h,e_j \rangle e_j.$$ 
    Now, let $A \subset H_0$ be a compact set. For any $\delta>0$, we can find a finite integer $k$ such that $$\sup_{h \in A} \left\|\sum_{j=k+1}^{\infty} \langle h,e_j \rangle e_j \right\|< \delta.$$
    Based on this, we define a truncation function $g:H_0 \longmapsto T_k$ as $$g(h)=\sum_{j=1}^{k} \langle h,e_j \rangle e_j.$$ 
    This truncation effectively approximates the original sequences in the compact set $A$. It can be easily shown that: $$\sup_{h \in A} \|g(x)-x\| < \delta .$$

    We can also invoke the Karhunen–Lo\`eve expansion as follows. For all $(s,a)\in{\cal X}$, assume $Z(s,a)\in H$, define $\mu(s,a)=EZ(s,a)$, and assume $Z(s,a)-\mu(s,a) \in H_0$, almost surely. Now fix $(s,a)\in{\cal X}$. By the Karhunen-Lo\`eve expansion theorem, there exists infinite sequences of scalar eigenvalues $\infty>\lambda_1\geq\lambda_2\geq\cdots\geq 0$ and orthonormal eigenfunctions $\{\psi_j,j\geq 1\}\in H_0$ so that 
    $$Z(s,a) = \mu(s,a)+ \sum_{j=1}^{\infty}\lambda_j U_j\psi_j,$$
    almost surely, where $\sum_{j=1}^{\infty}\lambda_j^2<\infty$, and $U_1,U_2,\ldots$ are mean zero random variables with $EU_j^2=1$ and $E[U_jU_k]= 0$ when $j\neq k$, for all $1\leq j,k<\infty$. Moreover, the eigenfunctions $\psi_1,\psi_2,\ldots$ form a basis for a separable Hilbert space $H_1\subset H_0$ wherein $Z(s,a)-\mu(s,a)\in H_1$ almost surely. We note that each $\psi_j$ can be written as a weighted sum of the $\{e_1,e_2,\ldots\}$, but they are otherwise not connected per se.

    This expansion allows us to approximate $Z(s,a)-\mu(s,a)$ with a finite sum by truncating the series at a specific term $k$:
    $$Z_k(s,a) = \mu(s,a)+ \sum_{j=1}^{k}\lambda_j U_j\psi_j.$$
    Let us further assume that the sequence of basis functions in the expansion are the same across all values of $(s,a)\in{\cal X}$; that for each $(s,a)\in{\cal X}$ we have the representation 
    $$Z_k(s,a) = \mu(s,a)+ \sum_{j=1}^{k}\lambda_j(s,a) U_j(s,a)\psi_j,$$
    where $EU_j(s,a)=0$, $EU_j^2(s,a)=1$ and $EU_j(s,a)U_k(s,a)=0$ for all $j\neq k$; and that for every $\epsilon>0$ there exists an integer $1\leq k<\infty$ such that
    $$\sup_{(s,a)\in{\cal X}}\sum_{j=k+1}^\infty \lambda_j^2(s,a)<\epsilon.$$
    Now we can, with arbitrary level of accuracy, approximate $Z(s,a)$ for each $(s,a)\in{\cal X}$ as
$$Z_k(s,a)=\mu(s,a)+\sum_{j=1}^k\lambda_j(s,a)U_j(s,a)\psi_j,$$
which is linear function of the Euclidean random vector 
$$\{\mu(s,a),\lambda_1(s,a)U_1(s,a),\lambda_2(s,a)U_2(s,a),\ldots\}^T\in\mathbb{R}^{k+1}.$$
\end{example}

\begin{example}
    Let $\mathbb{B}=\ell^\infty(T)$, where $\ell^\infty(T)$ denotes the set of all bounded real-valued functions defined on a set $T$, and suppose a random variable $Z\in\ell^\infty(T)$ is separable. This means that, for some metric $\rho$ on $T$ making $(T,\rho)$ totally bounded, $Z\in UC(T,\rho)$ almost surely, where $UC(T,\rho)\subset\ell^\infty(T)$ is the space of all $\rho$-equicontinuous and bounded functions $f:T\mapsto\mathbb{R}$. In other words, if $x \in UC(T, \rho)$, then $\lim_{\delta \downarrow 0} \sup_{t,u \in T:\rho(t,u)<\delta} |x(t)-x(u)|=0$. Furthermore, any compact subset $K \subset UC(T,\rho)$ must satisfy
    \begin{align*}
        & \lim_{\delta \downarrow 0} \sup_{x \in K} \sup_{t,u \in T:\rho(t,u)<\delta} |x(t)-x(u)|=0 \\
        & \sup_{x \in K} \sup_{t \in T} \| x(t) \| < \infty.   
    \end{align*}
    Now suppose that $Z(s,a)\in\ell^{\infty}(T)$ almost surely for all $(s,a)\in{\cal S}$, and suppose that Assumptions~\ref{assumption_m1} and ~\ref{assumption_m2} are satisfied in this instance for $\mathbb{B}=\ell^{\infty}(T)$. Then the compact sets $B_\delta$ must satisfy the conditions given above for some semimetric $\rho$ making $T$ totally bounded.
    
    This means that for every $\epsilon>0$, there exists a finite subset $T_{\epsilon}=\{t_1,\ldots,t_k\}\subset T$, for $1\leq k<\infty$, such that for all $x\in B_{\delta}$, 
    $$\sup_{t\in T}\inf_{s\in T_\epsilon}|x(s)-x(t)|\leq\epsilon.$$
    Furthermore, we can partition $T$ into $k$ non-intersecting subsets $T_1,\ldots,T_k$ so that $t_j\in T_j$, for all $1\leq j\leq k$, and $T=\cup_{j=1}^k T_j$. For any $x\in\ell^{\infty}(T)$, define the approximation $\tilde{x}$ such that for all $t\in T$, $\tilde{x}(t)=\sum_{j=1}^k x(t_j)1\{t\in T_j\}$. By construction, we have that for any $x\in B_\delta$, $\sup_{t\in T}|x(t)-\tilde{x}(t)|\leq\epsilon$. 

    Now we connect this to Theorem~\ref{theorem_banach_approximation_ver2}. Under the above conditions, we first fix $\delta>0$ and let $\epsilon=\delta/4$. From Assumption~\ref{assumption_m1}, there is an $r<\infty$ so that
    $$\sup_{(s,a)\in{\cal X}}E\left[\|Z(s,a)\|_\mathbb{B}1\left\{\|Z(s,a)\|_\mathbb{B}>r\right\}\right]\leq \epsilon/2.$$
    From Assumption~\ref{assumption_m2}, there is a compact $B_\epsilon\subset\ell^\infty(T)$ so that
    $$\inf_{(s,a)\in{\cal X}}P(Z(s,a)\in B_\epsilon^c)\leq \epsilon/(r\vee 1).$$ 
    In the above, we make $r$ slightly larger if necessary to ensure that 
    $B_\epsilon\subset\{x\in\mathbb{B}:\;\|x\|_\mathbb{B}\leq r\}$. 
    Take $T_\epsilon$ as above, and let
    $Z_\delta(s,a)$ be the projection of $Z(s,a)$ onto $B_\epsilon$ and let 
    $$\tilde{Z}_\delta(s,a)(t)=\sum_{j=1}^k Z_\delta(s,a)(t_j)1\{t\in T_j\}.$$

    By construction, this approximation now satisfies
    $$\sup_{(s,a)\in {\cal X}}\sup_{f\in L(\mathbb{B})}E\left|f(Z(s,a))-f(\tilde{Z}_\delta(s,a))\right|<\delta.$$
    In this case, the map $F_\delta:z\mapsto \mathbb{R}^k$ from Theorem~\ref{theorem_banach_approximation_ver2} takes the simple form
    $F_\delta(z)=\left(z(t_1),\ldots,z(t_k)\right)^T$.
\end{example}

As a preliminary step in proving Theorem~\ref{theorem_banach_approximation_ver2}, we introduce a key theorem from \cite{kosorok2008empirical}, which will be directly used in the proof.
 
\begin{theorem}
\label{theorem_banach_approximation}
    Let $\mathbb{B}_0 \subset \mathbb{B}$ be a Banach space with $\mathbb{B}_0$ separable and $\overline{lin}\mathbb{B}_0 \subset \mathbb{B}$. Then for every $\delta >0$ and every compact $A \subset \mathbb{B}_0$, there exists an integer $k < \infty$, elements $y_1, \dots, y_k \in C[0,1]$, continuous functions $f_1, \dots, f_k : \mathbb{B}_0 \longmapsto \mathbb{R}$, and a Lipschitz continuous function $J: \overline{lin}(y_1, \dots, y_k) \longmapsto \mathbb{B}$, such that the map $x \longmapsto T_\delta (x):=J\big( \sum_{j=1}^{k} y_j f_j (x) \big)$ has domain $\mathbb{B}$ and range $\subset \mathbb{B}$, is continuous, and satisfies $\sup_{x \in A} \| T_\delta (x) -x \|_\mathbb{B} < \delta$.  
\end{theorem}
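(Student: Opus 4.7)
The plan is to prove Theorem~\ref{theorem_banach_approximation} by combining the classical Banach--Mazur embedding with a standard partition-of-unity construction on the compact set $A$. First, since $\mathbb{B}_0$ is a separable Banach space, the Banach--Mazur theorem provides an isometric linear embedding $\iota:\mathbb{B}_0\hookrightarrow C[0,1]$; this supplies the ambient $C[0,1]$ in which the basis elements $y_j$ required by the theorem will live. Second, by total boundedness of the compact $A$, for the given $\delta>0$ I cover $A$ by finitely many open balls $\{B(x_j,\delta/2)\}_{j=1}^k$ with centers $x_j\in A$. Set $\tilde{\psi}_j(x):=(\delta/2-\|x-x_j\|_\mathbb{B})_+$ on $\mathbb{B}$ and define the normalized weights $\psi_j(x):=\tilde{\psi}_j(x)/\sigma(x)$ wherever $\sigma(x):=\sum_i\tilde{\psi}_i(x)>0$, with a continuous extension to all of $\mathbb{B}$ (for example by a Dugundji-type construction, or by padding the denominator so that no division by zero can occur). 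Take $f_j:=\psi_j$ and $y_j:=\iota(x_j)\in C[0,1]$.

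Third, define $J:\overline{lin}(y_1,\ldots,y_k)\to\mathbb{B}$ by $J\bigl(\sum_jc_jy_j\bigr):=\sum_jc_jx_j$. Because $\iota$ is an isometric linear injection, this prescription is unambiguous even if the $y_j$'s happen to be linearly dependent, and the resulting $J$ is itself a linear isometry of $\overline{lin}(y_1,\ldots,y_k)$ onto $\mathrm{span}\{x_1,\ldots,x_k\}\subset\mathbb{B}_0$; in particular $J$ is $1$-Lipschitz. Consequently $T_\delta(x)=J\bigl(\sum_jy_jf_j(x)\bigr)=\sum_j\psi_j(x)x_j$ is a continuous map $\mathbb{B}\to\mathbb{B}$ whose range lies inside the finite-dimensional subspace $\mathrm{span}\{x_1,\ldots,x_k\}$. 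For any $x\in A$, the partition-of-unity identities $\sum_j\psi_j(x)=1$ combined with $\|x-x_j\|_\mathbb{B}<\delta/2$ on $\{\psi_j(x)>0\}$ give
$$\|T_\delta(x)-x\|_\mathbb{B}\le\sum_j\psi_j(x)\|x-x_j\|_\mathbb{B}<\delta/2<\delta,$$
which is the required estimate.

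The main obstacle is the careful bookkeeping around the partition of unity $\{\psi_j\}$. The natural formula $\psi_j=\tilde{\psi}_j/\sigma$ is only continuous where $\sigma>0$, so one must either apply a Dugundji-type continuous extension to all of $\mathbb{B}$ or introduce a small regularizing term in the denominator; in the latter case one verifies that the regularization is small enough on $A$ (where $\sigma$ is bounded below by compactness) not to damage the approximation estimate inside $A$. A secondary technical point is checking that the prescription for $J$ is well-defined when the $y_j=\iota(x_j)$ are linearly dependent, which follows immediately from the injectivity of $\iota$.
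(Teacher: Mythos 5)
Your proof is essentially correct, but there is nothing in the paper to compare it with: the paper does not prove Theorem~\ref{theorem_banach_approximation} at all, it imports it verbatim from \cite{kosorok2008empirical} and uses it as a black box in the proof of Theorem~\ref{theorem_banach_approximation_ver2}. Your argument --- Banach--Mazur embedding of the separable space $\mathbb{B}_0$ into $C[0,1]$ to produce the $y_j$, a partition of unity subordinate to a finite $\delta/2$-ball cover of the compact set $A$ to produce the $f_j$, and the map $J(\sum_j c_j y_j)=\sum_j c_j x_j$ (well defined by injectivity of the embedding) --- is the standard route to this kind of finite-dimensional approximation result and does establish the stated conclusion. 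Two small points of bookkeeping: first, your displayed bound $\|T_\delta(x)-x\|_\mathbb{B}\le\sum_j\psi_j(x)\|x-x_j\|_\mathbb{B}$ silently uses $\sum_j\psi_j(x)=1$ on $A$; this is exact if you normalize by $\max\{\sigma(x),c\}$ with $c=\min_{x\in A}\sigma(x)>0$ (or use a Dugundji-type extension), whereas an additive regularizer in the denominator produces an extra term $\bigl(1-\sum_j\psi_j(x)\bigr)\|x\|_\mathbb{B}$ that must be absorbed using the boundedness of $A$, as you indicate. Second, the Lipschitz property of $J$ does not need the isometry claim (which anyway presumes $\mathbb{B}_0$ carries the norm inherited from $\mathbb{B}$): $J$ is a linear map defined on the finite-dimensional space $\overline{lin}(y_1,\dots,y_k)=lin(y_1,\dots,y_k)$, hence automatically bounded and Lipschitz. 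Neither point is a genuine gap.
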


This theorem establishes the existence of an approximation function, $T_\delta (x)$, for high-dimensional vectors residing in a Banach space $\mathbb{B}$. Notably, even though $\mathbb{B}$ might have infinitely many dimensions, the approximation can be constructed using a finite number of terms based on continuous functions defined on $\mathbb{B}_0$. In the remainder of this subsection, we prove Theorem~\ref{theorem_banach_approximation_ver2}. 

\begin{proof}
    Under Assumption~\ref{assumption_m1}, there exists $r$ $s.t.$  $\sup_{(s,a) \in \mathcal{H}} E \Big[ \| Z(s,a)\|_\mathbb{B} 1\{ \|Z(s,a)\|_\mathbb{B} >r \}  \Big] < \delta/6$. Additionally, there exists a compact set $B_\delta$ $s.t.$ $\sup_{(s,a) \in \mathcal{H}} P\big( Z(s,a) \notin B_\delta \big) < \delta/(6r)$ under Assumption~\ref{assumption_m2}. Without loss of generality, we can assume that $0\in B_\delta$ since adding the zero does not change the compactness of $B_{\delta}$. When letting $A_\delta = B_\delta \cap \{z \in \mathbb{B}: \|z\|_\mathbb{B} \leq r \}$, there exist
     a continuous map $F_\delta (z)$ with domain $\mathbb{B}$ and range $\subset \mathbb{R}^k$, a Lipschitz continuous function $J_\delta: \mathbb{R}^k \longmapsto \mathbb{B}$, and 
    a map $T_\delta = J_\delta F_\delta(z)$ $s.t.$ $\sup_{z \in A_\delta} \|T_\delta(z)-z \|_\mathbb{B} < \delta/3$ by Theorem~\ref{theorem_banach_approximation}.
    \begin{align*}
        & E \Big| f(Z) - f(\Tilde{Z}_\delta)  \Big| \\
        & \leq E \Big\| Z - \Tilde{Z}_\delta  \Big\|_\mathbb{B} \text{ when $f$ is Lipschitz continuous} \\
        & = E \Big[ \big\|Z - \Tilde{Z}_\delta \big\|_\mathbb{B} 1\{Z \in A_\delta\} \Big]  
            + E \Big[ \big\|Z - \Tilde{Z}_\delta \big\|_\mathbb{B} 1\{Z \notin A_\delta\} \Big]\\
        & \leq E \Big[ \big\|Z - \Tilde{Z}_\delta \big\|_\mathbb{B} 1\{Z \in A_\delta\} \Big]  
            + E \Big[ \big\|Z\big\|_\mathbb{B} 1\{Z \notin A_\delta\} \Big] 
            + E \Big[ \big\|\Tilde{Z}_\delta  \big\|_\mathbb{B} 1\{Z \notin A_\delta\} \Big]. 
    \end{align*}
    Furthermore, the second term can be bounded as follows:
    \begin{align*}
        & E \Big[ \big\|Z\big\|_\mathbb{B} 1\{Z \notin A_\delta\} \Big] \\
        & = E \Big[ \big\|Z\big\|_\mathbb{B} 1\{\|Z \|_\mathbb{B} \leq r\} 1\{Z \notin B_\delta\} \Big] +
            E \Big[ \big\|Z\big\|_\mathbb{B} 1\{\|Z \|_\mathbb{B} > r\} \Big] \\
        & \leq E \Big[ r 1\{\|Z \|_\mathbb{B} \leq r\} 1\{Z \notin B_\delta\} \Big] +
            E \Big[ \big\|Z\big\|_\mathbb{B} 1\{\|Z \|_\mathbb{B} > r\} \Big] \\
        & \leq E \Big[ r 1\{Z \notin B_\delta\} \Big] +
            E \Big[ \big\|Z\big\|_\mathbb{B} 1\{\|Z \|_\mathbb{B} > r\} \Big] \\
        & = r \cdot P (Z \notin B_\delta) + E \Big[ \|Z\|_\mathbb{B} 1\{ \|Z\|_\mathbb{B}>r\} \Big],\\   
    \end{align*}
    and the third term as follows:
    \begin{align*}
        & E \Big[ \big\|\Tilde{Z}_\delta  \big\|_\mathbb{B} 1\{Z \notin A_\delta\} \Big] \\
        & \leq E \Big[ r 1\{Z \notin A_\delta\} \Big] \text{ (since $\Tilde{Z}_\delta$ is a projection onto } A_\delta \ni 0) \\
        & = E \Big[ r 1\{Z \notin B_\delta\} \Big] + E \Big[ r 1\{Z \in B_\delta\} 1\{ \|Z\|_\mathbb{B}>r\} \Big]\\
        & \leq E \Big[ r 1\{Z \notin B_\delta\} \Big] + E \Big[ \|Z\|_\mathbb{B} 1\{Z \in B_\delta\} 1\{ \|Z\|_\mathbb{B}>r\} \Big]\\
        & \leq r \cdot P (Z \notin B_\delta) + E \Big[ \|Z\|_\mathbb{B} 1\{ \|Z\|_\mathbb{B}>r\} \Big].
    \end{align*}
    
    Based on the aforementioned inequalities, it follows that:
    \begin{align*}
        &\sup_{(s,a) \in \mathcal{H}} \sup_{f \in L(\mathbb{B})} E \Big| f(Z(s,a)) - f(\Tilde{Z}_\delta(s,a))  \Big|\\
        &\leq \sup_{(s,a) \in \mathcal{H}} \Bigg\{  
              E \Big[ \big\|Z(s,a) - \Tilde{Z}_\delta(s,a) \big\|_\mathbb{B} 1\{Z(s,a) \in A_\delta\} \Big]  \\
        & \qquad \qquad + r \cdot P \big(Z(s,a) \notin B_\delta \big) 
          + E \Big[ \|Z(s,a)\|_\mathbb{B} 1\{ \|Z(s,a)\|_\mathbb{B}>r\} \Big] \Bigg\}.
    \end{align*}
    Furthermore, we can establish that each of these terms is bounded by a constant. Specifically
    \begin{align*}
        &\sup_{(s,a) \in \mathcal{H}} E \Big[ \big\|Z(s,a) - \Tilde{Z}_\delta(s,a) \big\|_\mathbb{B} 1\{Z(s,a) \in A_\delta\} \Big] \\
        &\leq \sup_{(s,a) \in \mathcal{H}} \sup_{Z(s,a) \in A_\delta} E \big\|Z(s,a) - \Tilde{Z}_\delta(s,a) \big\|_\mathbb{B} \\
        &\leq \delta /3
    \end{align*}
    and
    $$ r \cdot \sup_{(s,a) \in \mathcal{H}} P \big(Z(s,a) \notin B_\delta \big) \leq \delta/3.$$
    Additionally
    $$ \sup_{(s,a) \in \mathcal{H}} E \Big[ \|Z(s,a)\|_\mathbb{B} 1\{ \|Z(s,a)\|_\mathbb{B}>r\} \Big] \leq \delta/3.$$
    Therefore, it follows that
    $$\sup_{(s,a) \in \mathcal{H}} \sup_{f \in L(\mathbb{B})} E \Big| f(Z(s,a)) - f(\Tilde{Z}_\delta(s,a))  \Big| \leq \delta /3 + \delta /3 + \delta /3 = \delta.$$
        
\end{proof}

\subsection{Convergence and Wasserstein Metric in Banach Spaces}\label{{s: theory-banachwass}}
In this section, we present one theorem which illuminates the strong type of weak convergence implied by convergence with respect to the Wasserstein metric for random variables defined on a separable Banach space $(\mathbb{B}, \| \cdot \|_\mathbb{B})$. We focus on the Wasserstein metric $W_1$, defined as:
$$W_1(X,Y) = \inf_{\lambda \in \Gamma(X, Y)} \int \|x-y \|_\mathbb{B} \lambda(x,y) \,dx \,dy,$$
where $\Gamma(X, Y)$ denotes the set of all couplings $X$ and $Y$.
Additionally, we define $D_1(\mathbb{B})$ to be the set of all continues functions $f:\mathbb{B}\mapsto\mathbb{R}$ such that
$$\sup_{z\in\mathbb{B}}\frac{|f(z)|}{1\vee\|z\|_\mathbb{B}}<\infty.$$

In the following theorem, we will denote the Banach norm $\| \cdot \|_\mathbb{B}$ as $\| \cdot \|$ for notational convenience. Here is the theorem:
\begin{theorem} \label{new.convergence}
Let $(\mathbb{B},\|\cdot\|)$ be a Banach space, and $X_n,X\in\mathbb{B}$ be separable random variables with $E\|X_n\|<\infty$ for all $n\geq 1$ large enough. Then the following are equivalent:
\begin{enumerate}[label=(\roman*)]
\item \label{new.convergence.i} $W_1(X_n,X)\rightarrow 0$, as $n\rightarrow\infty$; 
\item \label{new.convergence.ii} $Ef(X_n)\rightarrow Ef(X)$, as $n\rightarrow\infty$, for all $f\in D_1(\mathbb{B})$. 
\end{enumerate}
If either \textnormal{(\ref{new.convergence.i})} or \textnormal{(\ref{new.convergence.ii})} is true, then $E\|X\|<\infty$; $\lim_{n\rightarrow\infty}E\|X_n\|=E\|X\|$;
$$\lim_{k\rightarrow\infty}\limsup_{n\rightarrow\infty}E\left[\|X_n\|1\{\|X_n\|>k\}\right]=0;$$
and, for every $\epsilon>0$, there exists a compact $A\subset\mathbb{B}$ such that $\liminf_{n\rightarrow\infty}P(X_n\in A^{\delta})\geq 1-\epsilon$, for all $\delta>0$, where $A^\delta = \{x \in \mathbb{B}: \|x-y\| < \delta \text{ for some }y \in A \}$ is the $\delta$-enlargement of $A$.
\end{theorem}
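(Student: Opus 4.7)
The plan is to reduce both (\ref{new.convergence.i}) and (\ref{new.convergence.ii}) to the conjunction of weak convergence $X_n\Rightarrow X$ together with first-moment convergence $E\|X_n\|\to E\|X\|$, and then extract the remaining claims from this pair. By the K-R theorem (Theorem~\ref{theorem:KR}), $W_1$ coincides with $d_{BL_1^*}$ on $\mathcal{P}_1(\mathbb{B})$, and by Lemma~\ref{lemma_A} part (\ref{bbb}), $d_{BL_1}$ is a complete metric on $\mathcal{P}(\mathbb{B})$ (which metrizes weak convergence on separable domains, as indicated by the cited Kosorok result). Since the $X_n$ and $X$ are separable random variables, they all take values almost surely in a single closed separable subspace $\mathbb{B}_0\subset\mathbb{B}$ (take $\mathbb{B}_0$ to be the closed linear span of the union of their individual separable supports), so weak-convergence arguments can be carried out in the Polish space $\mathbb{B}_0$.

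For (\ref{new.convergence.i})$\Rightarrow$(\ref{new.convergence.ii}), I would first note $d_{BL_1}(X_n,X)\leq d_{BL_1^*}(X_n,X)=W_1(X_n,X)\to 0$, which yields $X_n\Rightarrow X$. Testing (\ref{new.convergence.i}) against $\pm\|\cdot\|_{\mathbb{B}}\in BL_1^*(\mathbb{B})$ gives $|E\|X_n\|_{\mathbb{B}}-E\|X\|_{\mathbb{B}}|\leq W_1(X_n,X)\to 0$, so $E\|X\|_{\mathbb{B}}<\infty$ and the first moments converge. The classical Scheff\'e/Vitali fact on $\mathbb{R}$---weak convergence of $\|X_n\|_{\mathbb{B}}$ plus convergence of first moments forces uniform integrability---then yields $\lim_k\limsup_n E[\|X_n\|_{\mathbb{B}}1\{\|X_n\|_{\mathbb{B}}>k\}]=0$. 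For arbitrary $f\in D_1(\mathbb{B})$, pick $C<\infty$ with $|f(z)|\leq C(1\vee\|z\|_{\mathbb{B}})$ and a Lipschitz cutoff $\psi_k$ equal to $1$ on $[0,k]$ and $0$ on $[2k,\infty)$. Set $f_k(z):=f(z)\psi_k(\|z\|_{\mathbb{B}})$, which is bounded continuous; then $Ef_k(X_n)\to Ef_k(X)$ by weak convergence, while $|Ef-Ef_k|$ applied to $X_n$ or $X$ is controlled by $C\,E[(1\vee\|\cdot\|_{\mathbb{B}})1\{\|\cdot\|_{\mathbb{B}}>k\}]$, made uniformly small in $n$ by the UI established above.

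For (\ref{new.convergence.ii})$\Rightarrow$(\ref{new.convergence.i}), bounded continuous functions lie in $D_1(\mathbb{B})$, so (\ref{new.convergence.ii}) yields $X_n\Rightarrow X$; taking $f=\|\cdot\|_{\mathbb{B}}\in D_1(\mathbb{B})$ gives $E\|X_n\|_{\mathbb{B}}\to E\|X\|_{\mathbb{B}}<\infty$ and hence UI as before. The remaining task---and the main obstacle---is showing $\sup_{f\in BL_1^*(\mathbb{B})}|Ef(X_n)-Ef(X)|\to 0$, since this uniformity in $f$ does not follow automatically from the pointwise convergence guaranteed by (\ref{new.convergence.ii}). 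Recentering so that $f(0)=0$ (which does not change the difference $Ef(X_n)-Ef(X)$), we have $|f(z)|\leq\|z\|_{\mathbb{B}}$. The truncated family $\{f\cdot\psi_k(\|\cdot\|_{\mathbb{B}}):f\in BL_1^*(\mathbb{B}),\,f(0)=0\}$ is uniformly bounded (by $2k$) and uniformly Lipschitz (with constant depending only on $k$), so $\sup_f|E(f\psi_k)(X_n)-E(f\psi_k)(X)|$ is bounded by a $k$-dependent multiple of $d_{BL_1}(X_n,X)$ and thus vanishes as $n\to\infty$ for each fixed $k$. The truncation error $\sup_f E|f-f\psi_k|(Y)\leq E[\|Y\|_{\mathbb{B}}1\{\|Y\|_{\mathbb{B}}>k\}]$ for $Y\in\{X_n,X\}$ is uniform in $f$ and made arbitrarily small by UI. Letting $n\to\infty$ and then $k\to\infty$ gives $W_1(X_n,X)\to 0$.

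The additional consequences then follow readily: $E\|X\|_{\mathbb{B}}<\infty$, $E\|X_n\|_{\mathbb{B}}\to E\|X\|_{\mathbb{B}}$, and the UI statement were all obtained along the way. For the compact-enlargement condition, apply Prokhorov's theorem inside the Polish space $\mathbb{B}_0$: weak convergence renders $\{P_{X_n}\}_n\cup\{P_X\}$ relatively compact in the weak topology and hence tight, yielding for each $\epsilon>0$ a compact $A\subset\mathbb{B}_0\subset\mathbb{B}$ with $\inf_nP(X_n\in A)\geq 1-\epsilon$, which is strictly stronger than $\liminf_n P(X_n\in A^{\delta})\geq 1-\epsilon$ for all $\delta>0$. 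The crux of the whole argument is the uniformity-in-$f$ step of (\ref{new.convergence.ii})$\Rightarrow$(\ref{new.convergence.i}), handled by the cutoff construction together with the bounded-Lipschitz metrization of weak convergence.
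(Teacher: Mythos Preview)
Your proposal is correct, and while the overall architecture (cutoff/truncation plus weak convergence) parallels the paper's, the execution diverges in two places worth noting.

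For \textnormal{(\ref{new.convergence.i})}$\Rightarrow$\textnormal{(\ref{new.convergence.ii})}, the paper extracts uniform integrability of $\|X_n\|$ by a direct coupling estimate (splitting on $\{\|X\|\leq k_1\}$ versus $\{\|X\|>k_1\}$ and using $E_{\lambda_n}\|X_n-X\|\to 0$), whereas you invoke the Scheff\'e/Vitali principle: weak convergence of $\|X_n\|$ to $\|X\|$ together with $E\|X_n\|\to E\|X\|$ forces UI. Your route is shorter and more conceptual; the paper's is more self-contained.

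The more substantive divergence is in \textnormal{(\ref{new.convergence.ii})}$\Rightarrow$\textnormal{(\ref{new.convergence.i})}. The paper handles the uniformity over $f\in BL_1^*$ by a two-stage projection: first onto the ball $\mathbb{B}_k=\{\|x\|\leq k+1\}$, then onto a compact $A$ supplied by separability of $X$; on the compact domain $\mathbb{B}_k\cap A$ the class $L_1^0$ becomes compact (Arzel\`a--Ascoli), and pointwise convergence over a compact index set upgrades to uniform convergence. You sidestep the compact-projection entirely by observing that the truncated family $\{f\psi_k(\|\cdot\|):f\in BL_1^*,\,f(0)=0\}$ is already uniformly bounded and uniformly Lipschitz (with constants depending only on $k$), hence lies in a fixed multiple of $BL_1(\mathbb{B})$, so the supremum over $f$ is controlled by $C_k\cdot d_{BL_1}(X_n,X)\to 0$. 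This is cleaner and makes the role of the bounded-Lipschitz metrization of weak convergence explicit; the paper's argument, by contrast, does not need to invoke that $d_{BL_1}$ metrizes weak convergence, working instead with compactness of the function class.

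For the final tightness clause you appeal to Prokhorov in the Polish space $\mathbb{B}_0$, obtaining a compact $A$ with $\inf_n P(X_n\in A)\geq 1-\epsilon$, which indeed dominates the paper's conclusion; the paper instead uses one more coupling inequality and Markov's inequality, again staying self-contained.
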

Before giving the proof, we note that conclusion (\ref{new.convergence.ii}) implies and is stronger than convergence in distribution in that it includes convergence in expectation of certain types of unbounded function, including the norm $\|\cdot\|$.

\begin{proof}
First, assume~(\ref{new.convergence.i}). Let $\mu_n$ be the distribution of $X_n$ and $\nu$ be the distribution of $X$. Then, by Theorem~\ref{theorem:KR}, we have that for some $\lambda_n\in\Gamma(\mu_n,\nu)$, $E_{\lambda_n}\|X_n-X\|\rightarrow 0$. This means that
\begin{eqnarray*}
E\|X\|&=&E_{\lambda_n}\|X\|\\
&\leq&E_{\lambda_n}\|X_n\|+E_{\lambda_n}\|X_n-X\|\\
&=&E\|X_n\|+E_{\lambda_n}\|X_n-X\|\\
&<&\infty,
\end{eqnarray*}
for some $1\leq n<\infty$ large enough, by taking the limit over $n$. This implies the $E\|X\| \leq \infty$ when (\ref{new.convergence.i}) is true.

Now, for every $0<k<\infty$, define the map $[0,\infty)\ni x\mapsto$ $h_k(x)=(x-k)_{+}\wedge 1$, 
where $(a)_{+}$ is the positive part of $a$ and $a\wedge b$ is the minimum of $a$ and $b$. 
Now fix an $f\in D_1(\mathbb{B})$. For any $0<k<\infty$, it is not hard to see that 
$X\mapsto f_k(X) = f(X)\big( 1-h_k(\|X\|)\big)$ is bounded and continuous, since $1-h_k(\|X\|)=0$ whenever $\|X\| \geq k+1$, and thus
\begin{align*}
    |f_k(X)| &= \frac{|f(X)|}{\|X\| \vee 1} \big( 1- h_k(\|X\|)\big) \big( \|X\|\vee 1 \big)\\
    & \leq c_1 \big( 1- h_k(\|X\|)\big) \big( \|X\|\vee 1 \big) \text{ for some constant }c_1\\
    & \leq c_1 (k+1) \\
    & < \infty,
\end{align*}
and also since both $f(X)$ and $h_k(X)$ are continuous. 

Thus, since $X_n \rightsquigarrow X$, we have that 
\begin{align}
    E f_k(X_n) \rightarrow E f_k (X), \quad \forall k < \infty. \label{4.4}
\end{align}

Now, fix $\epsilon > 0$. Since $E \| X\| < \infty$, $\exists k_1>1: E \Big[ \| X \| 1\{\|X\|>k_1\}\Big] \leq \epsilon $. Additionally, for all $X \in \mathbb{B}$,
\begin{align*}
    |f(X)| \cdot h_{k_1}(\|X\|) &= \frac{|f(X)|}{\|X\| \vee 1}  h_{k_1}(\|X\|) \big( \|X\|\vee 1 \big) \\
    & \leq c_1 1\{\|X\|>k_1\} \big( \|X\|\vee 1 \big) \\
    & = c_1 1\{\|X\|>k_1\} \|X\|.    
\end{align*}
Therefore, 
\begin{align}
    E \Big[ |f(X)| \cdot h_{k_1}(\|X\|)\Big] \leq c_1 E\Big[ 1\{\|X\|>k_1\} \|X\|\Big] \leq c_1 \epsilon. \label{4.5a}
\end{align}

Now let $k_1<k_2<\infty$, and note that for any $\lambda_n \in \Gamma(X_n, X)$,
\begin{align*}
    E_{\lambda_n} \Big[ \|X_{n}\| 1\{\|X_n\| > k_2 \}\Big]
    &= E_{\lambda_n} \Big[ \|X_{n}\| 1\{\|X_n\| > k_2 \}1\{\|X\|\leq k_1 \} \\
    & \qquad \quad + \|X_{n}\| 1\{\|X_n\| > k_2 \} 1\{\|X\| > k_1 \}\Big] \\
    & \leq E_{\lambda_n} \Big[ (k_1 + \|X_{n} - X\| )1\{ \|X_n\| > k_2\}\Big]  \\
    & \quad + E_{\lambda_n} \Big[ \|X_{n}\|1\{\| X\| > k_1\}\Big]\\
    & \leq E_{\lambda_n} \| X_n\|(k_1/k_2)
    +E_{\lambda_n}\|X_{n} - X\| \\
    & \quad  +E_{\lambda_n}\|X\|1\{\|X\|>k_1\}+E_{\lambda_n} \|X_{n} - X\| \\
    & \leq E_{\lambda_n} \| X_n\|(k_1/k_2) +E_{\lambda_n} \|X\|1\{\|X\|>k_1 \} \\
    & \quad + 2 E_{\lambda_n} \|X_{n} - X\| \\
    & \leq E \| X_n\|(k_1/k_2) + \epsilon + 2 E_{\lambda_n} \|X_{n} - X\|.
\end{align*}
By taking the $\limsup$ of both sides, we have
\begin{align*}
    & \limsup{E_{\lambda_n} \Big[ \|X_{n}\| 1\{\|X_n\| > k_2 \}\Big]} \\
    & \leq \limsup{E \| X_n\|(k_1/k_2)} + \epsilon + 2 \limsup{E_{\lambda_n} \|X_{n} - X\|} \\
    & \leq \limsup{E \| X_n\|(k_1/k_2)} + \epsilon + 2 \limsup{W_1(X_{n}, X)}, \\
    & \qquad \text{by taking the infimum over } \lambda_n \in \Gamma(X_n, X).% \\
    %& \leq \epsilon + \limsup{E\|X_n\|(k_1/k_2)}
\end{align*}
Since it is easy to verify by recycling previous arguments that $\limsup{E\|X_n\|} = c_2 $ for some $c_2<\infty$, we can take $k_2 = (k_1/\epsilon) \vee k_1$ to obtain that
$$ \limsup{E\Big[ \|X_n\| 1 \{ \|X_n\| > k_2 \} \Big]} \leq (1+c_2)\epsilon  $$
and also that
\begin{align}
    \limsup E \Big( |f(X_n)| \cdot h_{k_2}(\|X_n\|)\Big) 
    \leq c_1 \limsup{E\Big[ \|X_n\| 1 \{ \|X_n\| > k_2 \} \Big]} 
    \leq c_1 (1+c_2)\epsilon. \label{4.6}
\end{align}
%$$E\Big[ \|X_n\| 1 \{ \|X_n\| > k_1 \} \Big] \leq \epsilon.$$
Thus 
\begin{align*}
    \limsup_{n\rightarrow\infty}Ef(X_n) &= \limsup_{n\rightarrow\infty}\left(E\Big[ f(X_n)\big( 1- h_{k_2}(\|X_n\|)\big) \Big] + E\Big[ f(X_n) h_{k_2} (  \|X_n\| ) \Big]\right)\\
    & \leq E\Big[ f(X)\big( 1- h_{k_2}(\|X\|)\big) \Big] + c_1(1+c_2)\epsilon \text{ by (\ref{4.4}) and (\ref{4.6}) }\\
    &\leq Ef(X) + E \Big[ |f(X)| h_{k_2}(\|X\|) \Big] + c_1(1+c_2)\epsilon \\
    &\leq Ef(X)+c_1(2+c_2)\epsilon\;\text{ by (\ref{4.5a}).}
\end{align*}
By recycling these arguments, we can easily show that $\liminf_{n\rightarrow\infty}Ef(X_n)\geq Ef(X)-c_1(2+c_2)\epsilon$. Since $\epsilon>0$ was arbitrary but $c_1$ and $c_2$ are fixed, we have the desired conclusion that $Ef(X_n) \rightarrow f(X)$. Since $f \in D_1(\mathbb{B})$ was arbitrary, we now have that (\ref{new.convergence.i}) $\Rightarrow$ (\ref{new.convergence.ii}).

Now we assume (\ref{new.convergence.ii}), and note that
$$W_1(X_n, X) = \sup_{f \in L_1(\mathbb{B})} \Big( E f(X_n) - Ef(X) \Big),$$
where $L_1(\mathbb{B})$ consists of all Lipschitz continuous functions from $\mathbb{B}$ to $\mathbb{R}$ with Lipschitz constant 1. Note that for any $f \in L_1(\mathbb{B})$, $$Ef(X_n) - Ef(X) = E\Big(f(X_n)-f(0)\Big) - E\Big(f(X)-f(0)\Big),$$
and so taking the supremum over the subset $L_1^0(\mathbb{B}) \subset L_1(\mathbb{B})$, where $L_1^0(\mathbb{B})$ are functions in $L_1(\mathbb{B})$ for which $f(0) = 0$, does not change the norms.
By the Lipschitz property, all functions $f \in L_1^0(\mathbb{B})$ also satisfy $|f(X)| \leq \|X\|$.
Hence, since (\ref{new.convergence.ii}) holds, $\limsup_{n \rightarrow \infty}{E\|X_n\| } = E\|X\| <\infty$ since $\|\cdot\|\in D_1(\mathbb{B})$.

Now fix $\epsilon>0$. Let $0<k<\infty$ satisfy $E\big[\|X\| h_k(X)\big]<\epsilon$. Then
$$\limsup_{n\rightarrow\infty}\sup_{f\in L_1^0(\mathbb{B})}E\big[|f(X_n)| h_k(X) + |f(X)| h_k(X) \big]<2\epsilon.$$ 
Hence if we now replace $X_n$ with its projection onto $\mathbb{B}_k=\{x\in\mathbb{B}:\|x\|\leq k+1 \}$, which we denote $\tilde{X}_n$, and also replace $X$ with its projection onto $\mathbb{B}_k$, denoted by $\tilde{X}$, we have the following:
$$W_1(X_n,X)\leq \sup_{f\in L_1^0(\mathbb{B}_k)} Ef(\tilde{X}_n)-Ef(\tilde{X})+2\epsilon.$$
Since $X$ is separable, there exists a compact $A\subset\mathbb{B}$ such that $P(X\not\in A)<\epsilon/k$, where we assume without loss of generality that $0\in A$. Since (\ref{new.convergence.ii}) also implies weak convergence of $X_n$ to $X$, we also have that for every $\delta>0$, 
$\limsup_{n\rightarrow\infty}P\left(X_n\not\in A^\delta\right)<\epsilon/k$, where $A^\delta$ is the open $\delta$-enlargement of $A$, and where we select $\delta=\epsilon/k$. Now if we now let $\tilde{X}_n^\ast$ and $\tilde{X}^\ast$ be the respective projections of $\tilde{X}_n$ and $\tilde{X}$ onto $\overline{A^\delta}$---where for a set $B$, $\overline{B}$ is the closure of $B$---followed by projecting each onto $A$, we can recycle previous arguments in Theorem~\ref{theorem_projection} to verify that for any $f\in L_1^0(\mathbb{B}_k)$, 
$$E|f(\tilde{X}_n^\ast)-f(\tilde{X}_n)| \leq 2\epsilon \text{ and } E|f(\tilde{X}^\ast)-f(\tilde{X})|\leq 2\epsilon.$$
Thus
$$\limsup_{n\rightarrow\infty}W_1(X_n,X)\leq \limsup_{n\rightarrow\infty}\sup_{f\in L_1^0(\mathbb{B}_k\cap A)} Ef(\tilde{X}_n^\ast)-Ef(\tilde{X}^\ast) + 6\epsilon.$$

Since $A$ is compact, it is now easy to verify that $L_1^0(\mathbb{B}_k\cap A)$ is a compact set. Moreover since projections are continuous and bounded, we also have that $Ef(\tilde{X}_n^\ast)-Ef(\tilde{X}^\ast)\rightarrow 0$, as $n\rightarrow\infty$, for all $f\in L_1^0(\mathbb{B}_k\cap A)$. This, combined with the compactness, yields that $\limsup_{n\rightarrow\infty}W_1(X_n,X)\leq 6\epsilon$; and
thus (\ref{new.convergence.i}) follows since $\epsilon$ was arbitrary.

The first three conclusions following have been proved during the course of the equivalence proof above. For the final conclusion, fix $\epsilon,\delta>0$, and let $A\in\mathbb{B}$ be compact and satisfy $P(X\in A)\geq\epsilon$. Now note that for any $\lambda_n\in\Gamma(X_n,X)$, we have that
\begin{eqnarray*}
    P(X_n\in A^\delta)&=&E_{\lambda_n}1\{X_n\in A^\delta\}\\
    &\geq&E_{\lambda_n}1\{X\in A,\|X_n-X\|\leq\delta\}\\
    &\geq& P(X\in A)-E_{\lambda_n}1\{\|X_n-X\|>\delta\}\\
    &\geq& 1-\epsilon-\frac{E_{\lambda_n}\|X_n-X\|}{\delta},\\
\end{eqnarray*}
and now the desired conclusion follows by first minimizing the expectation over $\lambda_n$ and then taking the limit as $n\rightarrow\infty$.
\end{proof}

\subsection{Alternative Wasserstein Distance}\label{s: theory-wass}
This section introduces an alternative Wasserstein distance for Euclidean random variables, the max-sliced Wasserstein distance \cite{deshpande2019max}, which is employed in our simulation study (Section~\ref{s: simulation}) due to its computational efficiency compared to the standard Wasserstein distance.

For any two random variables $X,Y \in \mathbb{R}^d$, let $\Gamma(X,Y) $ be the set of all hypothetical distributions of $\lambda$ of $X$ and $Y$ so that the marginal of $X$ in $\lambda$ matches the true distribution of $X$ and the marginal of $Y$ in $\lambda$ matches the true distribution of $Y$. 
For $1 \leq p < \infty, $ the Wasserstein metric $W_p$ quantifies the distance between two probability distributions in a $d$-dimensional space. It is defined as:
$$W_p(X,Y) = \Big( \inf_{\lambda \in \Gamma(X, Y)} \int \|x-y \|^p \lambda(x,y) \,dx \,dy \Big)^{1/p},$$
where $\| \cdot \|$ is the Euclidean distance. 

Due to the computational challenges associated with the Wasserstein metric, the max-sliced Wasserstein distance has been proposed \cite{deshpande2019max}. This alternative metric offers increased computational efficiency while maintaining desirable properties for many applications \cite{paty2019subspace, bayraktar2021strong}. Let $S_d = \{ t \in \mathbb{R}^d: \|t\|=1\}$, the $d$-dimensional sphere. For $1 \leq d < \infty$, the max-sliced Wasserstein distance, denoted by $W_{p}^*(X,Y)$, is defined as: 
$$W_{p}^*(X,Y)=\sup_{t \in S_d} W_p(t'X, t'Y).$$
%where $S_1=\{1\}$. Normally it would include $-1$ but that would not change the metric's value, so it is not needed.
In the case where $p=1$, Bayraktar and Guo (2021) \cite{bayraktar2021strong} established the strong equivalence of the $1$-Wasserstein distance $W_1(X,Y)$ and $W_{1}^*(X,Y)$ for any dimension $d \geq 1$, which theorem we give here:
\begin{theorem} \label{theorem_maxswass_equiv}
    $W_{1}^*(X,Y)$ and $W_1(X,Y)$ are strongly equivalent for any two random variables $X,Y \in \mathbb{R}^d$ for all $d \geq 1$, i.e. there exists $C_d \geq 1$ such that
    $$W_{1}^*(X,Y) \leq W_{1}(X,Y) \leq C_d W_{1}^*(X,Y). $$
\end{theorem}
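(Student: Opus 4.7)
The lower bound $W_1^*(X,Y)\le W_1(X,Y)$ is the easy half. For any unit vector $t\in S_d$ the linear functional $x\mapsto t'x$ is $1$-Lipschitz, so any coupling $\lambda\in\Gamma(X,Y)$ pushes forward to a coupling of $(t'X,t'Y)$ with cost
$$E_\lambda|t'X-t'Y|\le E_\lambda\|X-Y\|,$$
by Cauchy--Schwarz. Infimizing over $\lambda$ and then taking the supremum over $t\in S_d$ gives $W_1^*(X,Y)\le W_1(X,Y)$, which holds with $C_d\ge 1$ trivially in the lower estimate.

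For the hard direction $W_1(X,Y)\le C_d W_1^*(X,Y)$, the plan is to invoke Kantorovich--Rubinstein duality,
$$W_1(X,Y)=\sup_{f\in\mathrm{Lip}_1(\mathbb{R}^d)}Ef(X)-Ef(Y),$$
and show that for any $1$-Lipschitz test function $f$ the quantity $Ef(X)-Ef(Y)$ can be bounded, up to a dimension-dependent factor, by a supremum of expectations of one-dimensional $1$-Lipschitz functions applied to the projections $t'X$ and $t'Y$. A natural tool is the Radon transform: for smooth compactly supported $f$, an inversion formula writes $f(x)$ as an integral over $t\in S_d$ of ridge functions evaluated at $t'x$. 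Applied to the difference $Ef(X)-Ef(Y)$ this produces a representation
$$Ef(X)-Ef(Y)=\int_{S_d}\bigl(Eg_t(t'X)-Eg_t(t'Y)\bigr)\,d\sigma(t),$$
for ridge factors $g_t:\mathbb{R}\to\mathbb{R}$ and $\sigma$ the uniform measure on the sphere. Each inner difference is at most $W_1(t'X,t'Y)\le W_1^*(X,Y)$, so controlling the Lipschitz norms of the $g_t$ by a dimension-only factor yields $W_1(X,Y)\le C_d W_1^*(X,Y)$. To handle general (possibly non-compactly supported) $X,Y$, I would first truncate to a ball of radius $R$---the truncation error being controlled by the finite first moment guaranteed by $W_1$-finiteness---mollify $f$, apply the Radon argument, and then send the mollification to zero and $R\to\infty$, verifying that $C_d$ stays bounded in these limits.

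The main obstacle is keeping the constant $C_d$ depending only on dimension. The Radon inversion formula involves a fractional operator of order $(d-1)/2$, which does not preserve the Lipschitz class in a clean way, and behaves differently for odd versus even $d$. One must therefore estimate this operator against a scale-adapted subclass of $1$-Lipschitz functions and exploit the $L^1$-duality structure of $W_1$ to absorb losses into a dimensional constant. An alternative route, perhaps easier to execute, is a recursive coupling argument: identify an extremal direction $t^*$ achieving (or nearly achieving) $W_1^*$, couple $X$ and $Y$ optimally along $t^*$, and then recurse on the $(d-1)$-dimensional orthogonal component conditional on the value of $t^{*\prime}X$. Carried out carefully this yields a constant of the shape $C_d\lesssim 2^d$, matching the type of bound obtained by Bayraktar and Guo (2021), which is the reference I would ultimately appeal to for a complete proof.
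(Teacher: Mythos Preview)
The paper does not actually prove this theorem: it is stated purely as a quotation of the result of Bayraktar and Guo (2021), with no argument given. So there is nothing in the paper to compare your sketch against beyond the bare citation, and since you yourself conclude by saying you would ``ultimately appeal to'' that same reference for a complete proof, you and the paper end up in the same place.

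That said, your proposal attempts more than the paper does, and it is worth commenting on the sketches. The lower bound $W_1^*\le W_1$ is correct and complete as you wrote it. For the upper bound, the Radon-inversion idea is natural but, as you already note, the fractional integral operator of order $(d-1)/2$ in the inversion formula does not map $1$-Lipschitz ridge factors to functions with uniformly bounded Lipschitz constant; you identify the obstacle but do not overcome it, so this branch is not a proof. The recursive conditional-coupling idea is closer in spirit to what Bayraktar and Guo actually do, but as written it is only a heuristic: conditioning on $t^{*\prime}X$ and recursing on the orthogonal complement requires controlling how the conditional $W_1$ distances in the $(d-1)$-dimensional fibres relate back to the marginal sliced distances, and that step is where all the work lies. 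In short, your easy half is fine, your hard half is an outline with the key technical step missing, and both you and the paper ultimately rely on \cite{bayraktar2021strong} for the full argument.
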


Let $D_d=D_1(\mathbb{R}^d)$. Building upon the definition of $D_d$, we present the following theorem:

\begin{theorem}
\label{theorem_maxswass_equiv2}
Let $X,X_n,Y$ be random variables in $\mathbb{R}^d$ for integer $1 \leq d < \infty$. The following are true.
\begin{enumerate}[label=(\roman*)]%[label=(\alph*)]
\item[(a)]  Let $\{X_n\}$ be a sequence with $E\|X_n\| < \infty$ for all $n \geq 1$. Then the following are equivalent:

\begin{enumerate}[label=(\roman*)]
    \item \label{i_wass} $W_1(X_n,X) \rightarrow 0$.
    \item \label{ii_wass} $W_1^*(X_n,X) \rightarrow 0$.
    \item \label{iii_wass} For every $f\in D_d$, $Ef(X_n) \rightarrow Ef(X)<\infty$.
\end{enumerate}

\item[(b)] for any $\epsilon >0$, let $T_\epsilon \subset S_d$ be a finite subset satisfying $\sup_{s \in S_d} \inf_{t\in T_\epsilon} \| s-t\| \leq \epsilon$. Then 
$$|W_1^*(X,Y) - \sup_{t \in T_\epsilon} W_1(t'X, t'Y) |\leq \epsilon (E\| X \| + E\| Y \|).$$ 
\end{enumerate}
\end{theorem}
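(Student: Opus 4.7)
\textbf{Proof plan for Theorem~\ref{theorem_maxswass_equiv2}.}

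For part (a), the plan is to obtain the three equivalences by combining two previously established results with the observation that $\mathbb{R}^d$, equipped with the Euclidean norm, is itself a separable Banach space in which every random variable is automatically separable. The equivalence $(\ref{i_wass})\Leftrightarrow(\ref{iii_wass})$ is then an immediate instance of Theorem~\ref{new.convergence} specialized to $\mathbb{B}=\mathbb{R}^d$ (note that $D_1(\mathbb{R}^d)=D_d$ by definition). The equivalence $(\ref{i_wass})\Leftrightarrow(\ref{ii_wass})$ is an immediate consequence of the strong equivalence of $W_1$ and $W_1^{*}$ given in Theorem~\ref{theorem_maxswass_equiv}, since $W_1^{*}(X_n,X)\leq W_1(X_n,X)\leq C_d W_1^{*}(X_n,X)$ forces the two convergences to occur together. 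The only thing to check is that the hypothesis $E\|X_n\|<\infty$ (and, once one implication is proved, the finiteness of $E\|X\|$) is enough to apply Theorem~\ref{new.convergence}, which follows verbatim from that theorem's conclusions.

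For part (b), the plan is to prove the two-sided bound by a direct triangle-inequality argument on the map $t\mapsto W_1(t'X,t'Y)$. The lower bound
\[
\sup_{t\in T_\epsilon} W_1(t'X,t'Y)\leq W_1^{*}(X,Y)
\]
is immediate because $T_\epsilon\subset S_d$. For the upper bound, I would fix any $s\in S_d$, pick $t\in T_\epsilon$ with $\|s-t\|\leq\epsilon$, and apply the triangle inequality for the $1$-Wasserstein distance on $\mathbb{R}$:
\[
W_1(s'X,s'Y)\leq W_1(t'X,t'Y)+W_1(s'X,t'X)+W_1(s'Y,t'Y).
\]
The cross terms are estimated by coupling $s'X$ with $t'X$ through the same underlying $X$ (and similarly for $Y$), giving
\[
W_1(s'X,t'X)\leq E|s'X-t'X|\leq \|s-t\|\,E\|X\|\leq \epsilon\,E\|X\|,
\]
by Cauchy–Schwarz, and an analogous bound $\epsilon\,E\|Y\|$ for the $Y$ term. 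Taking the supremum over $s\in S_d$ then yields
\[
W_1^{*}(X,Y)\leq \sup_{t\in T_\epsilon} W_1(t'X,t'Y)+\epsilon\bigl(E\|X\|+E\|Y\|\bigr),
\]
which combined with the trivial lower bound produces the claimed inequality.

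Neither part presents a serious obstacle: part (a) is essentially a bookkeeping exercise of invoking Theorems~\ref{new.convergence} and~\ref{theorem_maxswass_equiv} in the correct order, and part (b) is a one-paragraph triangle-inequality computation. The only subtle point worth double-checking is the Cauchy–Schwarz step in the cross-term bound, which requires that $E\|X\|<\infty$ (otherwise the whole statement is vacuous since $W_1^{*}(X,Y)$ could already be infinite); this is automatically assumed in the regime where the quantities on the right-hand side are meaningful, so the argument goes through without modification.
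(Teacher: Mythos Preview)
Your proof is correct and, for part~(a), identical to the paper's: both simply invoke Theorem~\ref{theorem_maxswass_equiv} for $(\ref{i_wass})\Leftrightarrow(\ref{ii_wass})$ and Theorem~\ref{new.convergence} for $(\ref{i_wass})\Leftrightarrow(\ref{iii_wass})$.

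For part~(b) the two arguments differ slightly in how they reach the Lipschitz estimate on $t\mapsto W_1(t'X,t'Y)$. The paper fixes a single coupling $\lambda\in\Gamma(X,Y)$ and uses the reverse triangle inequality,
\[
\bigl|\,E_\lambda|t'X-t'Y|-E_\lambda|s'X-s'Y|\,\bigr|\leq \|t-s\|\,E_\lambda\|X-Y\|\leq \|t-s\|\bigl(E\|X\|+E\|Y\|\bigr),
\]
then takes the infimum over $\lambda$ to obtain $|W_1(s'X,s'Y)-W_1(t'X,t'Y)|\leq \|t-s\|(E\|X\|+E\|Y\|)$ directly. You instead use the metric triangle inequality for $W_1$ on $\mathbb{R}$ and bound the two cross terms $W_1(s'X,t'X)$, $W_1(s'Y,t'Y)$ via the identity coupling on $X$ (respectively $Y$). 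Both routes are equally short and yield the same constant; your decomposition is perhaps more modular since it treats the $X$- and $Y$-contributions separately and never needs to look inside a joint coupling of $X$ and $Y$, while the paper's version makes the Lipschitz continuity of the whole map $t\mapsto W_1(t'X,t'Y)$ explicit in one line.
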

Note that (\ref{iii_wass}) implies both $X_n \rightsquigarrow X$ (convergence in distribution) and $E\|X_n\| \rightarrow E\|X\| < \infty$, and much more, as we discussed in the previous section. 

\begin{proof}
\begin{enumerate}[label=(\roman*)]
\item[\emph{Proof of} (a).] The equivalence of (\ref{i_wass}) and (\ref{ii_wass}) follows directly from Theorem~\ref{theorem_maxswass_equiv}, and the equivalence of (\ref{i_wass}) and (\ref{iii_wass}) follows directly from Theorem~\ref{new.convergence}.

\item[\emph{Proof of} (b).] For any $s,t \in S_d$, and any $\lambda \in \Gamma(X,Y)$,
\begin{align*}
    \Big| E_{\lambda}|t'X-t'Y| -  E_{\lambda}|s'X-s'Y|  \Big| & \leq E_{\lambda} \Big| |t'X-t'Y|-|s'X-s'Y|\Big| \\
    & \leq E_\lambda \Big| \|t-s\|\cdot \|X-Y\| \Big|\\
    & = \|t-s\|E_{\lambda}\|X-Y\| \\
    & \leq \|t-s\|\Big( E\|X\| + E\|Y\|  \Big).
\end{align*}
Thus, since
$$E_{\lambda} |t'X-t'Y| = E_{\lambda}|s'X-s'Y| + E_{\lambda}\Big(|t'X-t'Y|-|s'X-s'Y| \Big),$$
we obtain:
$$E_{\lambda} |t'X-t'Y| \leq E_{\lambda}|s'X-s'Y| + \|t-s\|  \Big(E\|X\|+E\|Y\| \Big).$$

By taking infimum over $\lambda \in \Gamma(X,Y)$ on the left side then right, we obtain that
$$W_1(t'X, t'Y) \leq W_1 (s'X, s'Y) + \|t-s\| \cdot \Big( E\|X\| + E\|Y\|   \Big).$$
By reversing the roles of $s$ and $t$, we obtain that
\begin{align*}
    &W_1(s'X, s'Y) \leq W_1 (t'X, t'Y) + \|t-s\| \cdot \Big( E\|X\| + E\|Y\|   \Big). \\
    &\Rightarrow \Big| W_1(s'X,s'Y) - W_1(t'X, t'Y)\Big| \leq \|t-s\| \cdot \Big( E\|X\| + E\|Y\|\Big), \\
    &\Rightarrow \text{ (b) is true, and the proof is complete.}
\end{align*}
\end{enumerate}
\end{proof}

This theorem highlights an important result. It states that by choosing a finite subset $T$ that covers $S_d$ well, we can approximate the true Wasserstein distance $W_1(X,Y)$ using $\sup_{t \in T} W_1(t'X, t'Y)$. This allows for a more practical and computationally tractable method compared to directly calculating the Wasserstein distance. 
%%%%%%%%%%%%%%%%%%%%%%%%%%%%%%%%%%%%%%%%%%%%%%%%%%%%
%%%%%%%%%%%%%%%%%%%%%%%%%%%%%%%%%%%%%%%%%%%%%%%%%%%%
%%%%%%%%%%%%%%%%%%%%%%%%%%%%%%%%%%%%%%%%%%%%%%%%%%%%
%%%%%%%%%%%%%%%%%%%%%%%%%%%%%%%%%%%%%%%%%%%%%%%%%%%%

%---------------------------------------
%           Simulation
%---------------------------------------
\section{Simulation}
\label{s: simulation}

In this section, we present the performance of our proposed algorithm. We conceptualized a 2-dimensional state space and a 2-dimensional reward space. The transitions and rewards are generated based on the following rules:

%%%%%
\begin{align*}
    &S_1, S_2  \in \{1,2,3, \dots, 15\}, \\
    &R_1, R_2  \in [-15, 15], \\
    {}\\
    &S_1' |s_1, a  \sim \left\{
  \begin{array}{@{}ll@{}}
    Chisq(s_1)        & w.p. \ 0.75  \text{ if }  a=-1,   \ 0.25  \text{ if }  a=1, \\
    N(0.1s_1 + 8 , 1) & w.p. \ 0.25  \text{ if }  a=-1,   \ 0.75  \text{ if }  a=1,
  \end{array}\right. \\
    &S_1'  = \big[ round(S_1') \big]_{1} ^{15},  \\
    {}\\
    &R_1(s_1, a, s_1') \sim N\big(s_1' - s_1 -0.2 \cdot I(a=1) , 1 \big), \\
    &R_1 = \big[ R_1 \big]_{-15} ^{15},  \\
    {}\\
    &S_2' |s_1, s_2, a \sim \left\{ 
        \begin{array}{@{}ll@{}}
             Exp(\theta = \lfloor 0.25s_1 + s_2\rfloor_1) & w.p. \ 0.75  \text{ if }  a=-1,   \ 0.25  \text{ if }  a=1, \\
             Uniform(1, 10) & w.p. \ 0.25  \text{ if }  a=-1,   \ 0.75  \text{ if }  a=1, 
        \end{array}\right. \\
    &S_2'  = \big[ round(X_2') \big]_{1} ^{15},  \\
    {}\\
    &R_2(s_2, a, s_2') \sim N\big(s_2' - s_2, 1 \big), \\
    &R_2 = \big[ R_2  \big]_{-15} ^{15}.  \\    
\end{align*}

We considered deterministic stationary policies. These policies are characterized by three parameters: $\beta_0$, $\beta_1$, and $sgn\in\{-1,1\}$ given two state variables $s1$ and $s2$. $\beta_0$ and $\beta_1$ represent linear translation coefficients. $sgn$ controls the sign of the translation. 
$$
 \pi(s_1, s_2) =\left\{
  \begin{array}{@{}ll@{}}
    \ \ \  1, & \text{if}\ sgn(\beta_0 + \beta_1 s_1 + s_2) \geq 0 \\
    -1, & \text{if}\ sgn(\beta_0 + \beta_1 s_1 + s_2) < 0
  \end{array}\right.
$$

Note that the transition probability $P(\cdot | s, a)$ and reward function $R(s, a=\pi(s), s')$ are integral parts of Algorithm~\ref{algo1}. Section~\ref{s: simulation-1} presents the algorithm's performance under the assumption of known transition probability and reward function. In contrast, Section~\ref{s: simulation-2} demonstrates the algorithm's performance when the true distributions are unknown. Finally, in Section~\ref{s: simulation-3}, we showcase the algorithm's performance in searching for the optimal policy based on a specified utility function.

To quantify the similarity between the two $2$-dimensional return distributions, we utilized the max-sliced Wasserstein distance with finite support introduced in Section~\ref{s: theory-wass}:

\begin{align}
    W_\Theta (Z_1, Z_2) = \sup_{\theta \in \Theta}W_\theta(Z_1, Z_2) 
                         =\sup_{\theta \in \Theta}W_1\Big((\cos{\theta} \ \sin{\theta}) Z_1, (\cos{\theta} \ \sin{\theta}) Z_2\Big), \label{5.1}
\end{align}
where $W_\theta(Z_1, Z_2) = W_1\Big((\cos{\theta} \ \sin{\theta}) Z_1, (\cos{\theta} \ \sin{\theta}) Z_2\Big)$ represents the Wasserstein distance between two distributions projected onto a line with a slope of $\theta$ passing through the origin. We construct $\Theta$ as a set of 60 equally spaced angles within the range $[0, \pi)$. 

As the true distribution of random return is not directly derived from the conceptualized stochastic process, we obtained the empirical distribution by generating $10^4$ returns given initial states and a specified policy. These empirical distributions are compared with the corresponding estimated distribution.

In Algorithm~\ref{algo1}, we set the number of draws for random returns, denoted as $n_{sample}$, to be $10^3$. We selected the center of the rectangles to be represented by ${z_\alpha} = \{(z_1,z_2) : z_1 \in \mathbb{Z}_1$ and $z_2 \in \mathbb{Z}_2\}$, where $\mathbb{Z}_1 = \mathbb{Z}_2 = \{ x:x=-25+i \cdot 50/40 \text{ for } i = 0,1,2,..., 40\}$. Consequently, the total number of categories $N$ is 41 $\times$ 41. 

\subsection{Scenario 1} \label{s: simulation-1}
$P(\cdot | s, a)$ and $R(s, a=\pi(s), s')$ are known.

In this section, we examine 4 policies denoted as $\pi(\beta_0, \beta_1, s)$. For Policy 1 and Policy 2, we consider $\pi(-7.5, 0.5, -1)$ and $\pi(-7.5, 0.5, +1)$, respectively. Policy 3, $\pi(15, 2, -1)$, and Policy 4, $\pi(15, 2, +1)$, represent policies orthogonal to the first two. The initial distributions were generated using a 2D uniform distribution in the ranges (-12.5, 12.5). For comparison, we generated empirical distributions with an initial state (1,1) for each policy and observed the finite-support max-sliced Wasserstein distance $W_\Theta$ (\ref{5.1}) by incrementally increasing $n_{repeat}$. The distance paths corresponding to these policies are illustrated in Figure~\ref{fig:distance_known}. $W_\Theta$ values stay below 0.5 after 10 iteration steps, and the distances are 0.186, 0.179, 0.121, and 0.206 after 20 iterations. The results show that the $W_\Theta$ values for all policies remain below 0.5 after 10 iterations. After 20 iterations, the distances converge further, reaching specific values of 0.186, 0.179, 0.121, and 0.206 for Policies 1, 2, 3, and 4, respectively.

Additionally, for each policy, we visually present the evolution of the estimated distribution over iterations $n_{repeat}$ from Figure~\ref{fig:sc1_policy1_known_dist} to Figure~\ref{fig:sc1_policy4_known_dist}. These figures provide valuable insights into how the estimated distribution changes as the number of iterations increases. The simulation results affirm that Algorithm~\ref{algo1} accurately estimates the distribution of random returns when the transition kernel $P(\cdot | s, a)$ and reward function $R(s, a=\pi(s), s')$ are known.

\begin{figure}
    \centering
    \includegraphics[scale=0.5]{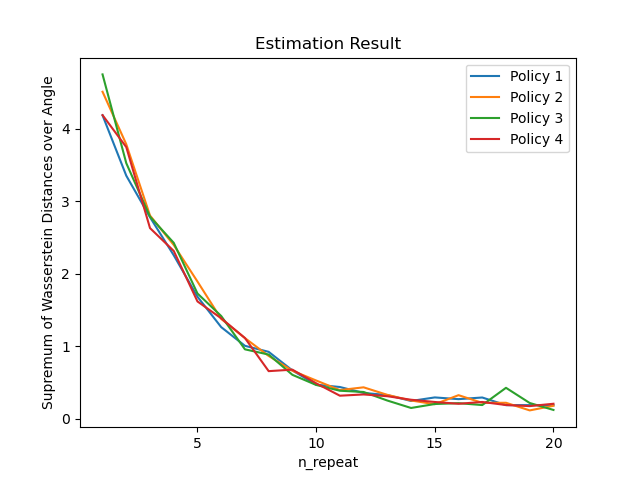}
    \caption{Distance Path in Scenario 1.}
    \label{fig:distance_known}
\end{figure}

%Another Figure
\begin{figure}
     \centering
     \begin{subfigure}[b]{0.24\textwidth}
         \centering
         \includegraphics[width=\textwidth]{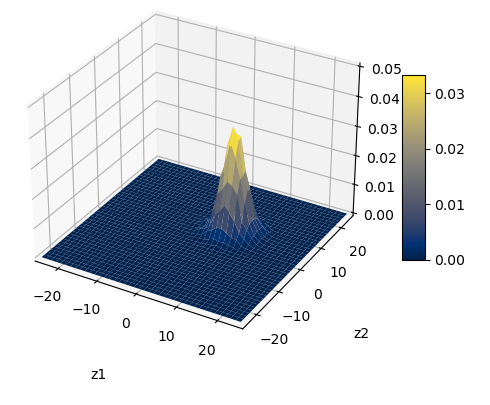}
         \caption{Empirical}
         \label{fig:sc1_policy1_emp}
     \end{subfigure}
     \hfill
     \begin{subfigure}[b]{0.24\textwidth}
         \centering
         \includegraphics[width=\textwidth]{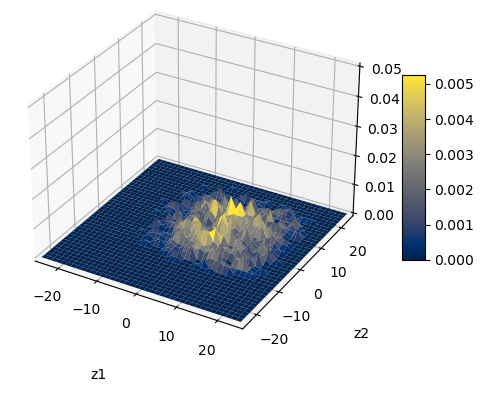}
         \caption{Iteration 1}
         \label{fig:sc1_policy1_rep1}
     \end{subfigure}
     \hfill
     \begin{subfigure}[b]{0.24\textwidth}
         \centering
         \includegraphics[width=\textwidth]{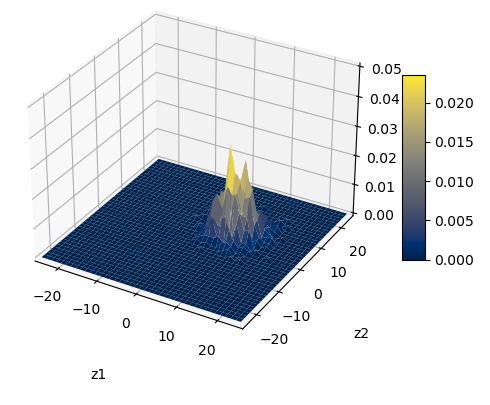}
         \caption{Iteration 8}
         \label{fig:sc1_policy1_rep8}
     \end{subfigure}
     \hfill
     \begin{subfigure}[b]{0.24\textwidth}
         \centering
         \includegraphics[width=\textwidth]{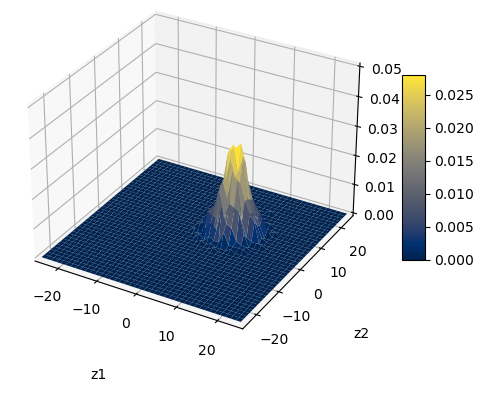}
         \caption{Iteration 15}
         \label{fig:sc1_policy1_rep15}
     \end{subfigure}
        \caption{Policy 1. Empirical vs. Estimated Distribution by Algorithm 1 in Scenario 1}
        \label{fig:sc1_policy1_known_dist}
\end{figure}

%Another figure
\begin{figure}
     \centering
     \begin{subfigure}[b]{0.24\textwidth}
         \centering
         \includegraphics[width=\textwidth]{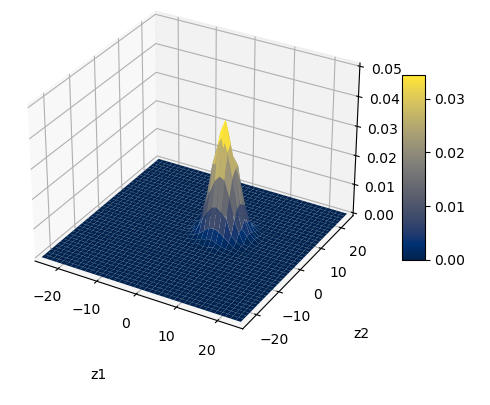}
         \caption{Empirical}
         \label{fig:sc1_policy2_emp}
     \end{subfigure}
     \hfill
     \begin{subfigure}[b]{0.24\textwidth}
         \centering
         \includegraphics[width=\textwidth]{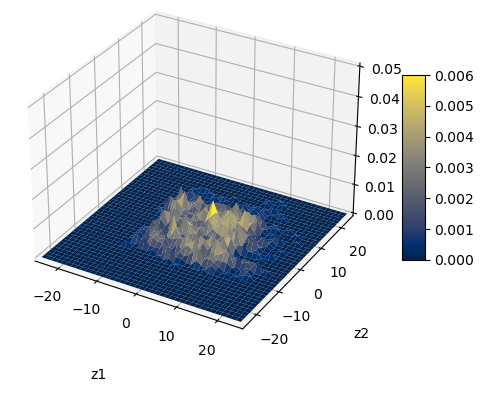}
         \caption{Iteration 1}
         \label{fig:sc1_policy2_rep1}
     \end{subfigure}
     \hfill
     \begin{subfigure}[b]{0.24\textwidth}
         \centering
         \includegraphics[width=\textwidth]{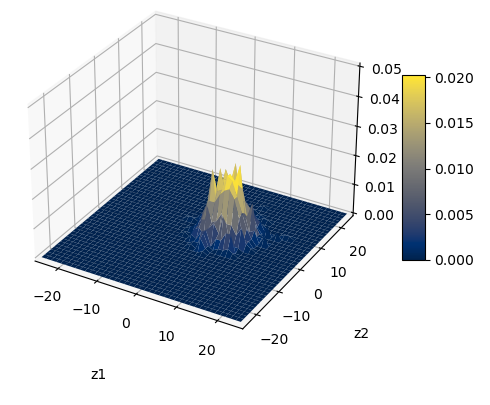}
         \caption{Iteration 8}
         \label{fig:sc1_policy2_rep8}
     \end{subfigure}
     \hfill
     \begin{subfigure}[b]{0.24\textwidth}
         \centering
         \includegraphics[width=\textwidth]{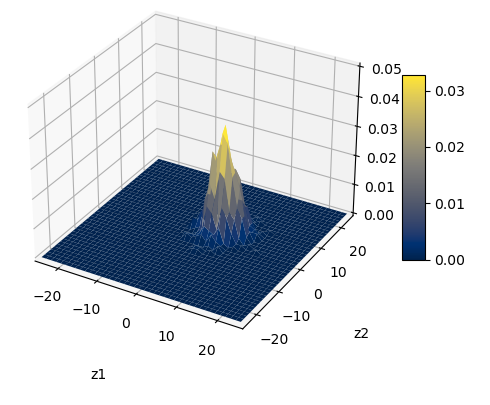}
         \caption{Iteration 15}
         \label{fig:sc1_policy2_rep15}
     \end{subfigure}
        \caption{Policy 2. Empirical vs. Estimated Distribution by Algorithm 1 in Scenario 1}
        \label{fig:sc1_policy2_known_dist}
\end{figure}

%Another Figure
\begin{figure}
     \centering
     \begin{subfigure}[b]{0.24\textwidth}
         \centering
         \includegraphics[width=\textwidth]{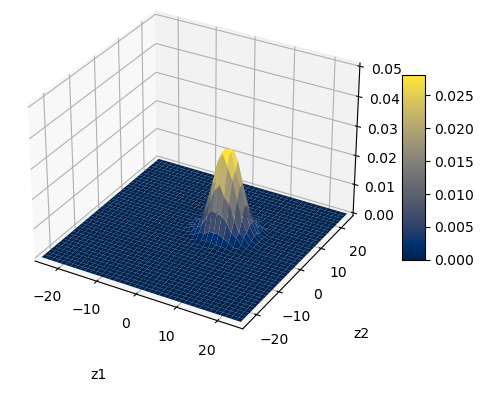}
         \caption{Empirical}
         \label{fig:sc1_policy3_emp}
     \end{subfigure}
     \hfill
     \begin{subfigure}[b]{0.24\textwidth}
         \centering
         \includegraphics[width=\textwidth]{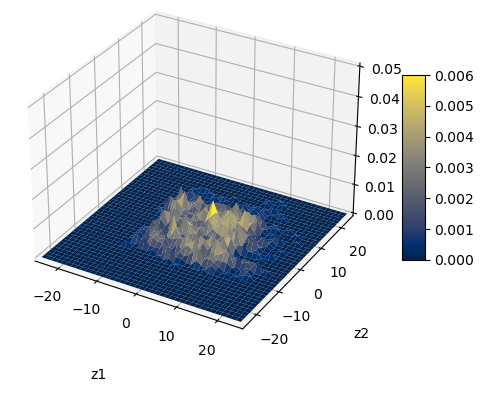}
         \caption{Iteration 1}
         \label{fig:sc1_policy3_rep1}
     \end{subfigure}
     \hfill
     \begin{subfigure}[b]{0.24\textwidth}
         \centering
         \includegraphics[width=\textwidth]{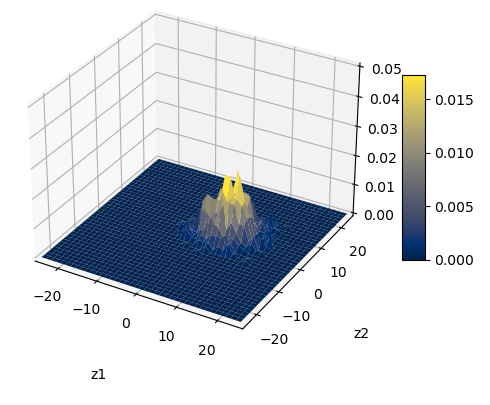}
         \caption{Iteration 8}
         \label{fig:sc1_policy3_rep8}
     \end{subfigure}
     \hfill
     \begin{subfigure}[b]{0.24\textwidth}
         \centering
         \includegraphics[width=\textwidth]{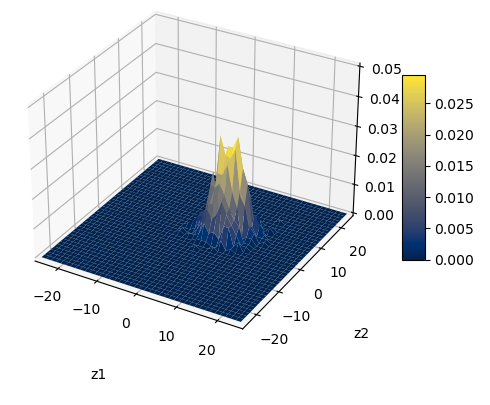}
         \caption{Iteration 15}
         \label{fig:sc1_policy3_rep15}
     \end{subfigure}
        \caption{Policy 3. Empirical vs. Estimated Distribution by Algorithm 1 in Scenario 1}
        \label{fig:sc1_policy3_known_dist}
\end{figure}

%Another Figure
\begin{figure}
     \centering
     \begin{subfigure}[b]{0.24\textwidth}
         \centering
         \includegraphics[width=\textwidth]{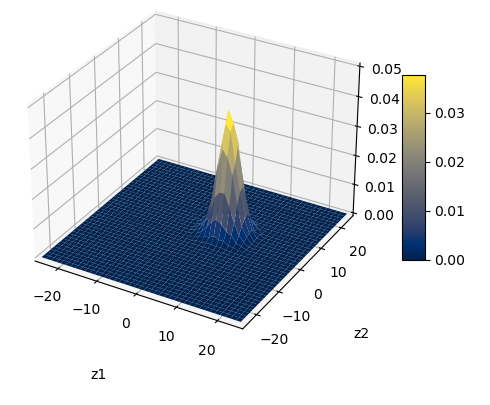}
         \caption{Empirical}
         \label{fig:sc1_policy4_emp}
     \end{subfigure}
     \hfill
     \begin{subfigure}[b]{0.24\textwidth}
         \centering
         \includegraphics[width=\textwidth]{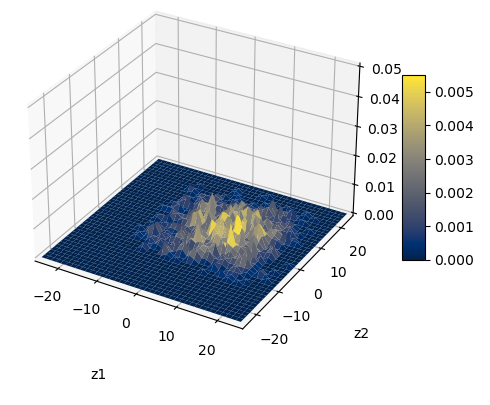}
         \caption{Iteration 1}
         \label{fig:sc1_policy4_rep1}
     \end{subfigure}
     \hfill
     \begin{subfigure}[b]{0.24\textwidth}
         \centering
         \includegraphics[width=\textwidth]{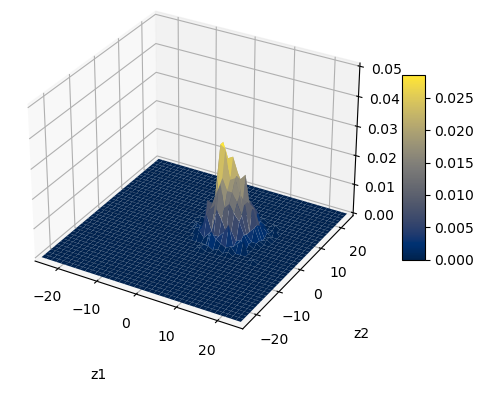}
         \caption{Iteration 8}
         \label{fig:sc1_policy4_rep8}
     \end{subfigure}
     \hfill
     \begin{subfigure}[b]{0.24\textwidth}
         \centering
         \includegraphics[width=\textwidth]{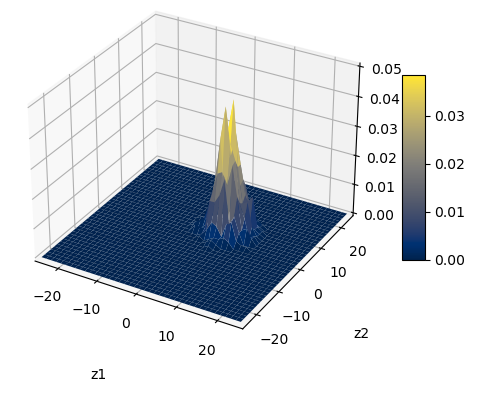}
         \caption{Iteration 15}
         \label{fig:sc1_policy4_rep15}
     \end{subfigure}
        \caption{Policy 4. Empirical vs. Estimated Distribution by Algorithm 1 in Scenario 1}
        \label{fig:sc1_policy4_known_dist}
\end{figure}

\subsection{Scenario 2} \label{s: simulation-2} 
$P(\cdot | s, a)$ and $R(s, a=\pi(s), s')$ are unknown. 

In reality, the transition kernel $P(\cdot | s, a)$ is often unknown, and the reward function $R(s, a=\pi(s), s')$ may not be known in certain settings. In this section, we present simulation results in such scenarios where these elements are unknown. To address this, we estimate both the transition kernel and reward function from observed data. We generated $n_{trajectory}$ transition trajectories, each with a length of $100$ steps. By incrementally increasing $n_{trajectory}$, we updated the transition probability and reward function, incorporating information from every additional set of $100$ trajectories.

We employed separate neural network models to estimate the state transition probabilities, $P(s_1' | s_1, s_2, a)$ and $P(s_2' | s_1, s_2, a)$, where $(s_1, s_2)$ represents the current state, $a$ denotes the action taken, and $(s_1', s_2')$ represents the next state. Both models shared identical architecture and training parameters to ensure consistency in their learning process.

The neural network architecture consisted of three layers: an input layer, two hidden layers, and an output layer. The input layer received a $3$-dimensional vector containing the current state $(s_1, s_2)$ and the action $(a)$. The first hidden layer comprised 64 nodes with rectified linear unit (ReLU) activation functions. The second hidden layer contained 16 nodes with ReLU activations, to which the action input $(a)$ was concatenated, resulting in a total of 17 nodes in this layer. Finally, the output layer consisted of 15 nodes, each employing a softmax activation function. The output layer's nodes corresponded to the probabilities of transitioning to each possible next state value $\{1, 2, ..., 15\}$.

The models were trained using the cross-entropy loss function and the Adam optimizer with a learning rate of $5e$-$6$. The training data was presented in batches of 32 samples, and training proceeded for a maximum of 100 epochs. To prevent overfitting, the data was split into a training set (90\%) and a validation set (10\%). Early stopping was implemented, terminating training if the validation loss did not improve for 5 consecutive epochs.

Multivariate linear regression was employed to estimate the reward function given transition information $R(r_1, r_2 | s_1, s_2, a, s_1', s_2')$, where $(r_1, r_2)$ are rewards. The model treated $(s_1, s_2, a, s_1', s_2')$ as the independent variables and aimed to predict the corresponding rewards $(r_1, r_2)$.

Given the estimated transition probability and reward function, we employed Algorithm~\ref{algo1} to estimate the return distribution by policy, setting $n_{repeat} = 20$. For each policy, we illustrate the distance path $W_\Theta$ between the estimated return distribution and empirical distribution in Figure~\ref{fig:distance_unknown}. The evolution of the estimated distributions over the number of observed trajectories $n_{trajectory}$ is depicted in Figure~\ref{fig:sc2_policy1_unknown_dist} to \ref{fig:sc2_policy4_unknown_dist}. 

These results demonstrate the effectiveness of the proposed algorithm in estimating the return distribution for different policies even when the true transition probabilities and reward functions are unknown. This highlights the algorithm's ability to perform offline policy evaluation using only observed data.

\begin{figure}
    \centering
    \includegraphics[scale=0.5]{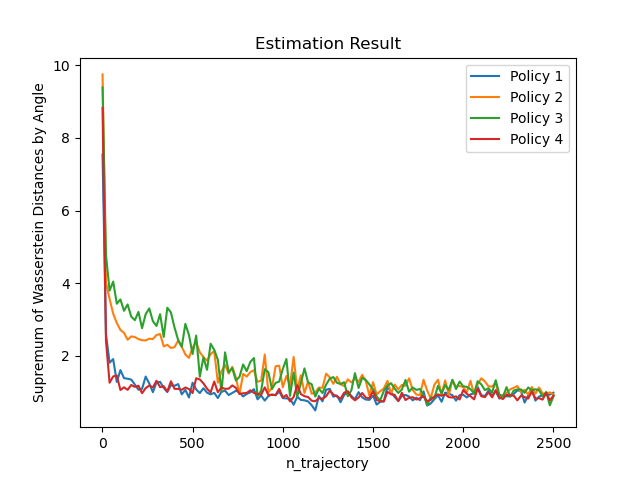}
    \caption{Distance Path in Scenario 2.}
    \label{fig:distance_unknown}
\end{figure}

%Another Figure
\begin{figure}
     \centering
     \begin{subfigure}[b]{0.24\textwidth}
         \centering
         \includegraphics[width=\textwidth]{figure/simulation1/1_empirical.png}
         \caption{Empirical}
         \label{fig:sc2_policy1_emp}
     \end{subfigure}
     \hfill
     \begin{subfigure}[b]{0.24\textwidth}
         \centering
         \includegraphics[width=\textwidth]{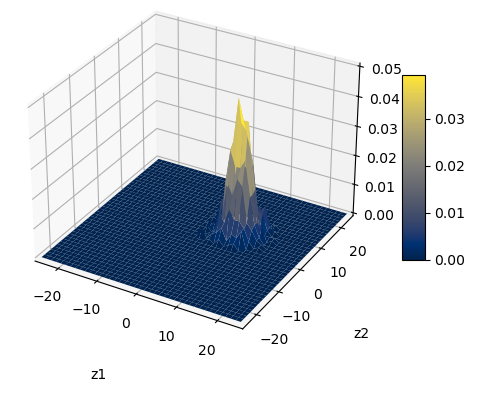}
         \caption{$n_{trajectory} = 100$}
         \label{fig:sc2_policy1_traj100}
     \end{subfigure}
     \hfill
     \begin{subfigure}[b]{0.24\textwidth}
         \centering
         \includegraphics[width=\textwidth]{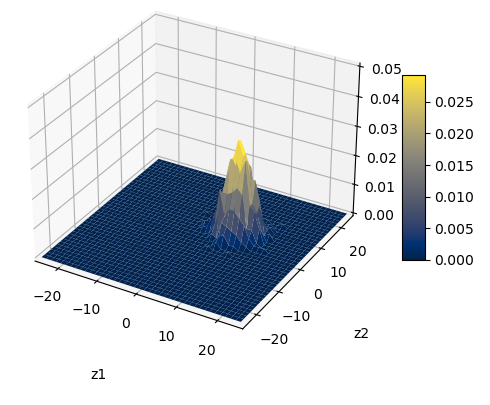}
         \caption{$n_{trajectory} = 500$}
         \label{fig:sc2_policy1_traj500}
     \end{subfigure}
     \hfill
     \begin{subfigure}[b]{0.24\textwidth}
         \centering
         \includegraphics[width=\textwidth]{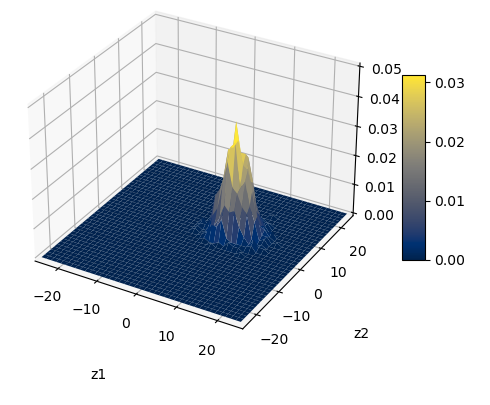}
         \caption{$n_{trajectory} = 1000$}
         \label{fig:sc2_policy1_traj1000}
     \end{subfigure}
        \caption{Policy 1. Empirical vs. Estimated Distribution by Algorithm 1 in Scenario 2}
        \label{fig:sc2_policy1_unknown_dist}
\end{figure}

%Another Figure
\begin{figure}
     \centering
     \begin{subfigure}[b]{0.24\textwidth}
         \centering
         \includegraphics[width=\textwidth]{figure/simulation1/2_empirical.png}
         \caption{Empirical}
         \label{fig:sc2_policy2_emp}
     \end{subfigure}
     \hfill
     \begin{subfigure}[b]{0.24\textwidth}
         \centering
         \includegraphics[width=\textwidth]{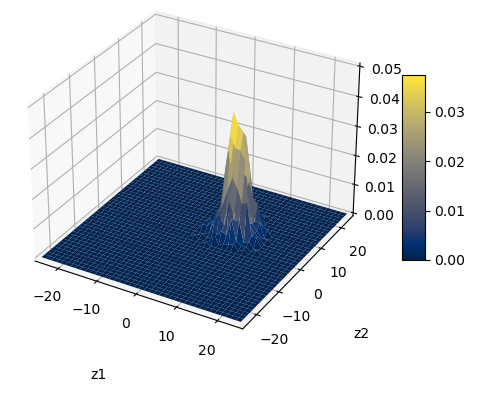}
         \caption{$n_{trajectory} = 100$}
         \label{fig:sc2_policy2_traj100}
     \end{subfigure}
     \hfill
     \begin{subfigure}[b]{0.24\textwidth}
         \centering
         \includegraphics[width=\textwidth]{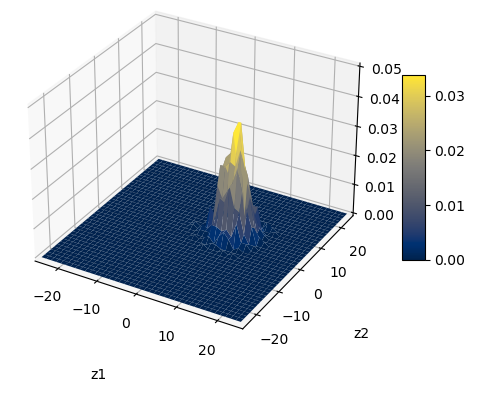}
         \caption{$n_{trajectory} = 500$}
         \label{fig:sc2_policy2_traj500}
     \end{subfigure}
     \hfill
     \begin{subfigure}[b]{0.24\textwidth}
         \centering
         \includegraphics[width=\textwidth]{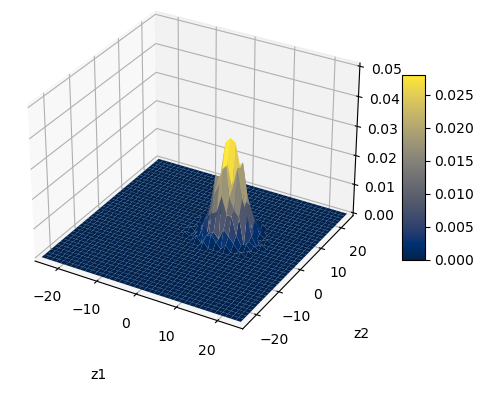}
         \caption{$n_{trajectory} = 1000$}
         \label{fig:sc2_policy2_traj1000}
     \end{subfigure}
        \caption{Policy 2. Empirical vs. Estimated Distribution by Algorithm 1 in Scenario 2}
        \label{fig:sc2_policy2_unknown_dist}
\end{figure}

%Another Figure
\begin{figure}
     \centering
     \begin{subfigure}[b]{0.24\textwidth}
         \centering
         \includegraphics[width=\textwidth]{figure/simulation1/3_empirical.png}
         \caption{Empirical}
         \label{fig:sc2_policy3_emp}
     \end{subfigure}
     \hfill
     \begin{subfigure}[b]{0.24\textwidth}
         \centering
         \includegraphics[width=\textwidth]{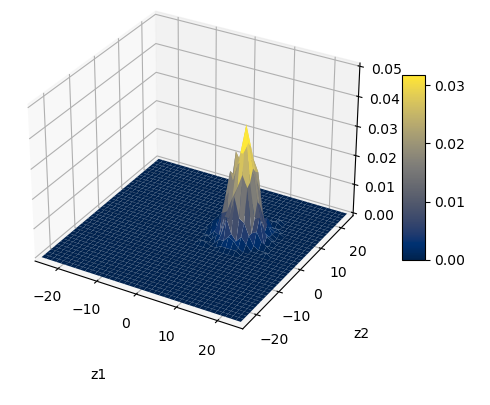}
         \caption{$n_{trajectory} = 100$}
         \label{fig:sc2_policy3_traj100}
     \end{subfigure}
     \hfill
     \begin{subfigure}[b]{0.24\textwidth}
         \centering
         \includegraphics[width=\textwidth]{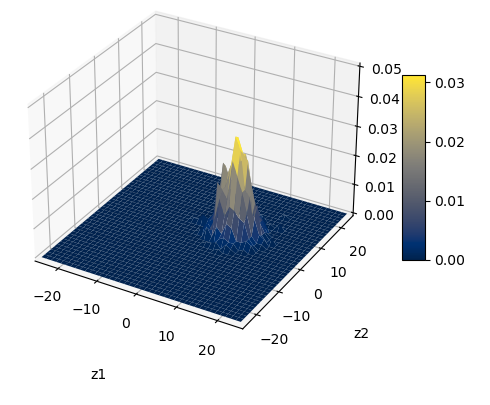}
         \caption{$n_{trajectory} = 500$}
         \label{fig:sc2_policy3_traj500}
     \end{subfigure}
     \hfill
     \begin{subfigure}[b]{0.24\textwidth}
         \centering
         \includegraphics[width=\textwidth]{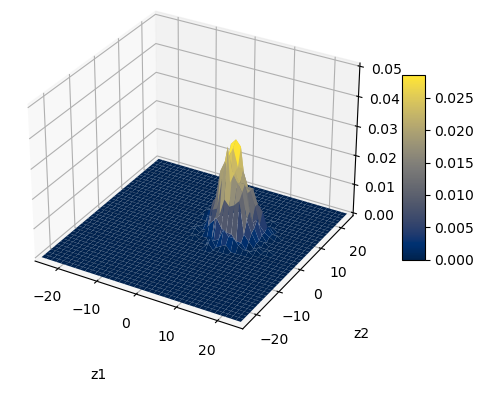}
         \caption{$n_{trajectory} = 1000$}
         \label{fig:sc2_policy3_traj1000}
     \end{subfigure}
        \caption{Policy 3. Empirical vs. Estimated Distribution by Algorithm 1 in Scenario 2}
        \label{fig:sc2_policy3_unknown_dist}
\end{figure}

%Another Figure
\begin{figure}
     \centering
     \begin{subfigure}[b]{0.24\textwidth}
         \centering
         \includegraphics[width=\textwidth]{figure/simulation1/4_empirical.png}
         \caption{Empirical}
         \label{fig:sc2_policy4_emp}
     \end{subfigure}
     \hfill
     \begin{subfigure}[b]{0.24\textwidth}
         \centering
         \includegraphics[width=\textwidth]{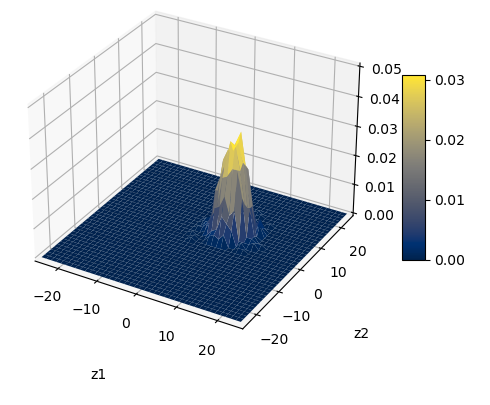}
         \caption{$n_{trajectory} = 100$}
         \label{fig:sc2_policy4_traj100}
     \end{subfigure}
     \hfill
     \begin{subfigure}[b]{0.24\textwidth}
         \centering
         \includegraphics[width=\textwidth]{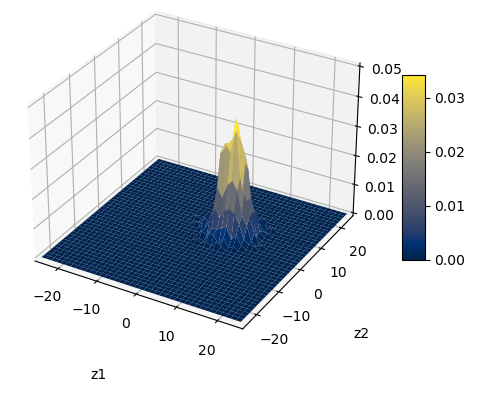}
         \caption{$n_{trajectory} = 500$}
         \label{fig:sc2_policy4_traj500}
     \end{subfigure}
     \hfill
     \begin{subfigure}[b]{0.24\textwidth}
         \centering
         \includegraphics[width=\textwidth]{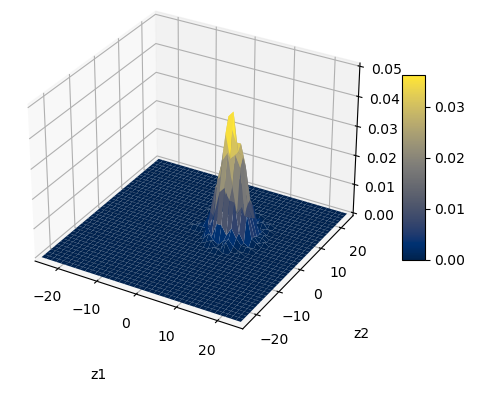}
         \caption{$n_{trajectory} = 1000$}
         \label{fig:sc2_policy4_traj1000}
     \end{subfigure}
        \caption{Policy 4. Empirical vs. Estimated Distribution by Algorithm 1 in Scenario 2}
        \label{fig:sc2_policy4_unknown_dist}
\end{figure}

\subsection{Scenario 3} \label{s: simulation-3} Optimal policy search with unknown transition dynamics.

This section investigates the performance of our proposed algorithm in identifying the optimal policy when both the transition probability, $P(s' | s, a)$, and the reward function, $R(s, a = \pi(s), s')$, are unknown. We leverage a utility function, $\phi(Z) = median(Z_1) + 20P(Z_2 > 5)$, to evaluate the policy. The utility function maximizes the median value of the first element $Z_1$ in the return distribution of $Z$ while also placing a significant weight (20x) on the probability of the second element $Z_2$ exceeding a threshold of 5.

We considered a set of 200 candidate linear policies denoted by the set $\Pi$. To ensure comprehensive exploration of the state space, we first randomly selected 100 distinct pairs of coefficients $(\beta_0, \beta_1)$. These pairs were chosen such that they uniquely partition the state space, leading to a diverse set of policy behaviors. For each unique pair of coefficients, we evaluated both sign options ($sgn = -1$ and $sgn = 1$), resulting in a total of 200 candidate policies.
For every 100 newly observed trajectories (each with a length of 100), we updated the estimated transition probability and reward function based on the newly observed data. Given these, we updated the return distributions $Z^\pi$ for all policies in $\Pi$.

We identified the policy $\hat{\pi}^*$ exhibiting the highest estimated utility $\phi(Z^\pi)$ at each update step. To assess the selected policy $\hat{\pi}^*$ under the true (but unknown) dynamics, we performed a simulation-based evaluation. We simulated the behavior of $\hat{\pi}^*$ for a total of $10^4$ trajectories, each consisting of 100 steps, starting from the initial state $s=(1,1)$. To generate these trajectories, we leveraged the true (but unknown) transition probability and reward function.

We calculated the utility function $\phi(Z^{\hat{\pi}^*}(s))$ based on the simulated data and plotted the path of this value across update steps in Figure~\ref{fig:utility_path}. Additionally, the figure includes horizontal lines representing the percentiles of the utilities obtained by all policies in the set $\Pi$. These percentiles are calculated by evaluating $\phi(Z^{\pi}(s))$ for each $\pi \in \Pi$ using the same method as for $\phi(Z^{\hat{\pi}^*}(s))$. As evident from Figure~\ref{fig:utility_path}, the estimated optimal policy, $\hat{\pi}^*$, achieves a utility level approximately at the 95th percentile. This demonstrates the effectiveness of our proposed distributional RL-based algorithm in tackling challenges that are not readily addressed by conventional RL methods.

\begin{figure}
    \centering
    \includegraphics[scale=0.5]{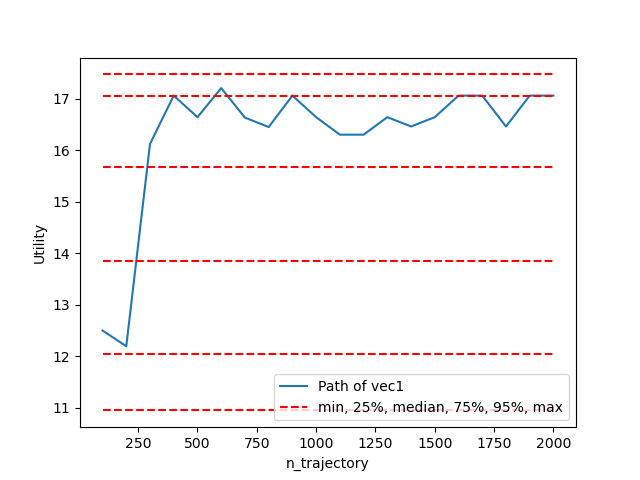}
    \caption{Utility Path with percentiles and min/max in Scenario 3.}
    \label{fig:utility_path}
\end{figure}

%---------------------------------------
%           Application - removed
%---------------------------------------

%---------------------------------------
%          Discussion
%---------------------------------------

\section{Discussion}
\label{s: discussion}

This paper establishes robust and generalizable theoretical foundations for distributional reinforcement learning. We demonstrate the contraction property of the distributional Bellman operator even when the reward space is an infinite-dimensional Banach space.  Furthermore, we show that high- or infinite-dimensional reward behavior can be effectively approximated using a lower-dimensional Euclidean space. Leveraging these theoretical insights, we propose a novel distributional RL algorithm capable of tackling problems beyond the reach of conventional reinforcement learning approaches.

%An acknowledged limitation of the proposed algorithm lies in its computational inefficiency due to the reliance on table-based updates.
Recent advancements in deep learning, particularly the success of transformer-based models in capturing sequential dependencies inherent in RL tasks \cite{vaswani2017attention, parisotto2020stabilizing, chen2021decision}, offer promising avenues for improvement on our proposed algorithm. Building upon these advancements, a key direction for future research involves replacing the table-based updates in our algorithm with a transformer-based approach. We envision a transformer architecture that directly estimates the distribution of the random return, given the sequence of states, policy parameters, and relevant return distributions as inputs. This approach has the potential to simplify the estimation process and lead to significant performance gains.

%Bibliography
\bibliographystyle{unsrt}  
\bibliography{main}

\end{document}